\def\d{\mathrm{d}}
\DeclareMathOperator*{\argmax}{arg\,max}
\DeclareMathOperator*{\erf}{erf}
\DeclareMathOperator*{\Real}{Re}
\DeclareMathOperator*{\Imag}{Im}
\DeclareMathOperator*{\Imagpos}{Im^+}
\DeclareMathOperator*{\cond}{cond}
\newcommand{\Gh}{\gothic{h}}
\newcommand{\tU}{\tilde{U}}
\newcommand{\tW}{\tilde{W}}
\newcommand{\cR}{\mathcal{R}}
\newcommand{\cA}{\mathcal{A}}
\newcommand{\cB}{\mathcal{B}}
\newcommand{\cW}{\mathcal{W}}
\newcommand{\cF}{\mathcal{F}}
\newcommand{\cV}{\mathcal{V}}
\newcommand{\cN}{\mathcal{N}}
\newcommand{\R}{\mathbb{R}}
\newcommand{\C}{\mathbb{C}}
\newcommand{\LL}{\mathbb{L}}
\newcommand{\ve}{\mathbf{e}}
\newcommand{\vc}{\mathbf{c}}
\newcommand{\vd}{\mathbf{d}}
\newcommand{\vx}{\mathbf{x}}
\newcommand{\vy}{\mathbf{y}}
\newcommand{\vn}{\mathbf{n}}
\newcommand{\vomega}{\boldsymbol{\omega}}
\newcommand{\mR}{\mathbf{R}}
\newcommand{\atg}{|_{g}}
\newcommand{\gNoPar}{g}
\spnewtheorem{result}{Result}{\bf}{\it}
\newcommand{\cFRRR}{\mathcal{F}}
\newcommand*\conj[1]{\overline{#1}}
\DeclareMathAlphabet\gothic{U}{euf}{m}{n}
\begin{document}

\title{Design and Processing of Invertible Orientation Scores of 3D Images for Enhancement of Complex Vasculature}

\titlerunning{Design and Processing of Invertible Orientation Scores}

\author{M.H.J. Janssen \and A.J.E.M. Janssen \and E.J. Bekkers \and J. Oliv\'an Besc\'os \and R. Duits}

\authorrunning{Janssen, Janssen, Bekkers, Oliv\'an Besc\'os, Duits}

\institute{
		M.H.J. Janssen \at
		CASA,
		Eindhoven University of Technology, \\
		Tel.: +31-40-2478897,        %
		\email{M.H.J.Janssen@tue.nl}
	\and
		A.J.E.M. Janssen \at  Eindhoven University of Technology, \\ Department of Mathematics and Computer Science, \\
		\email{A.J.E.M.Janssen@tue.nl}  \\
	\and
		E.J. Bekkers \at
		CASA,
		Eindhoven University of Technology, \\
		\email{E.J.Bekkers@tue.nl}
	\and
		J. Oliv\'an Besc\'os \at  Philips, Interventional X-ray, Eindhoven \\
		\email{Javier.olivan.bescos@philips.com}  \\
	\and
		R. Duits \at  CASA,
		Eindhoven University of Technology, \\
		Tel.: +31-40-2472859,
		\email{R.Duits@tue.nl}  \\
	 }


\maketitle

\begin{abstract}
The enhancement and detection of elongated structures in noisy image data is relevant for many biomedical imaging applications. To handle complex crossing structures in 2D images, 2D orientation scores $U: \mathbb{R} ^ 2\times S ^ 1 \rightarrow \mathbb{C}$ were introduced, which already showed their use in a variety of applications. Here we extend this work to 3D orientation scores $U: \mathbb{R} ^ 3 \times S ^ 2\rightarrow \mathbb{C}$. First, we construct the orientation score from a given dataset, which is achieved by an invertible coherent state type of transform. For this transformation we introduce 3D versions of the 2D cake-wavelets, which are complex wavelets that can simultaneously detect oriented structures and oriented edges. Here we introduce two types of cake-wavelets, the first uses a discrete Fourier transform, the second is designed in the 3D generalized Zernike basis, allowing us to calculate analytical expressions for the spatial filters. Finally, we show two applications of the orientation score transformation. In the first application we propose an extension of crossing-preserving coherence enhancing diffusion via our invertible orientation scores of 3D images which we apply to real medical image data. In the second one we develop a new tubularity measure using 3D orientation scores and apply the tubularity measure to both artificial and real medical data.
\keywords{Orientation Scores, 3D Wavelet Design, Zernike Polynomials, Scale Spaces on SE(3), Coherence Enhancing Diffusion, Tubular Structure Detection, Steerable 3D Wavelet}
\end{abstract}

\section{Introduction}

The enhancement and detection of elongated structures is important in many biomedical image analysis applications. These tasks become problematic when multiple elongated structures cross or touch each other in the data. In these cases it is useful to work with multi-orientation representations of image data. Such multi-orientation representations can be made using various techniques, such as invertible orientation scores (which is obtained via a coherent state transform) \cite{Ali2000,DuitsQAM2010Part1,BekkersJMathImagingVis2014,ThesisFranken,IshamJMathPhys1991,BarbieriAnalAppl2014}, continuous wavelet transforms \cite{DuitsIJCV2007,DuitsQAM2010Part1,SharmaACHA2015,BekkersJMathImagingVis2014}, orientation lifts \cite{ZweckJMathImagingVis2004,BoscainSIAMJImagingSci2014}, or orientation channel representations \cite{Felsberg2012}. Here we focus on constructing an invertible orientation score. In order to separate the crossing or touching structures (Fig. \ref{fig:2DOS}), we extend the domain of the data to include orientation. This is achieved by correlating our 3D data $f:\R^ 3\rightarrow \R$ with a set of oriented filters to construct a 3D orientation score $U:\R^ 3\times S ^ 2 \rightarrow \C$. As the transformation between image and orientation score is stable, due to our design of anisotropic wavelets, we can robustly relate operators on the score to operators on images. To take advantage of the multi-orientation decomposition, we consider operators on orientation scores, and process our data via orientation scores (Fig. \ref{fig:OverviewOperators}).

Regarding the invertibility of the transform from image to orientation score, we note that in comparison to continuous wavelet transforms (see e.g. \cite{Lee1996,Antoine1999,Mallat1999,Louis1997} and many others) on the group of 3D rotations, translations and scalings, we use all scales simultaneously and exclude the scaling group from the wavelet transform and its adjoint, yielding a coherent state type of transform \cite{AliJMathPhys1998}. This makes it harder to design appropriate wavelets, but has the computational advantage of only needing one all-scale transformation.

The 2D orientation scores have already showed their use in a variety of applications. In \cite{FrankenIJCV2009,SharmaACHA2015} the orientation scores were used to perform crossing-preserving coherence-enhancing diffusions. These diffusions greatly reduce the noise in the data, while preserving the elongated crossing structures. Next to these generic enhancement techniques, the orientation scores also showed their use in retinal vessel tracking \cite{BekkersJMathImagingVis2014,BekkersSIAMJImagingSci2015,Chen2014}, in vessel segmentation \cite{Zhang2016} and biomarker analysis \cite{Bekkers2015,Rajan2016}, where they were used to better handle crossing vessels. Here we aim to extend such techniques to 3D data.

To perform detection and enhancement operators on the orientation score we first need to transform a given greyscale image or 3D dataset to an orientation score in an invertible way. In previous works various wavelets were introduced to perform a 2D orientation score transform. Some of these wavelets did not allow for an invertible transformation (e.g. Gabor wavelets \cite{Lee1996}). A wavelet that allows an invertible transformation was proposed by Kalitzin \cite{KalitzinIJCV1999}. A generalization of these wavelets was found by Duits \cite{ThesisDuits} who derived formal unitarity results and expressed the wavelets in a basis of eigenfunctions of the harmonic oscillator. This type of wavelet was also extended to 3D. This wavelet, however, has some unwanted properties such as poor spatial localization (oscillations) and the fact that the maximum of the wavelet does not lie at its center. In \cite{ThesisDuits} a class of 2D cake-wavelets were introduced, that have a cake-piece shaped form in the Fourier domain. The cake-wavelets simultaneously detect oriented structures and oriented edges by constructing a complex orientation score $U:\R ^ 2 \times S^1 \rightarrow \C$. Because the family of rotated cake-wavelets cover the full Fourier spectrum, invertibility is guaranteed.

In this article we propose a 3D version of the cake-wavelets. A preliminary attempt to generalize these filters was done in \cite{JanssenSSVM2015}, where the plate-detectors in \cite{ThesisDuits} were extended to complex-valued cake-wavelets with a line-detector in the real part. Compared to these previous works, the filters in this work are now exact until sampling in the Fourier domain. For these filters we have no analytical description in the spatial domain as filters are obtained via a discrete inverse Fourier transform (DFT). Therefore we additionally consider expressing filters of this type in the 3D generalized Zernike basis. For this basis we have analytical expressions for the inverse Fourier transform, allowing us to find analytical expressions for the spatial filters. This has the additional advantage that they allow for a steerable implementation. These analytical expressions are then used to validate the filters obtained using the DFT method. We also show applications of these filters and the orientation score transformation in 3D vessel analysis. That is, we present crossing-preserving diffusions for denoising 3D rotational Xray of blood vessels in the abdomen and we present a tubularity measure via orientation scores and features based on this tubularity measure, which we apply to cone beam CT data of the brain. An overview of the applications is presented in Fig. \ref{fig:OverviewApplications}.
Regarding our non-linear diffusions of 3D rotational Xray images via invertible orientation scores, we observe that complex geometric structures in the vasculature (involving multiple orientations) are better preserved than with non-linear diffusion filtering directly in the image domain. This is in line with previous findings for non-linear diffusion filtering of 2D images \cite{FrankenIJCV2009} and related works \cite{Scharr2006,steidl2009anisotropic,muhlich2009analysis} that rely on other more specific orientation decompositions. 

For the sake of general readability we avoid Lie group theoretical notations, until Section 6.1 where it is strictly needed.
Let us nevertheless mention that our work fits in a larger Lie group theoretical framework, see for example \cite{Ali2000,FuhrBook2005,DuitsSSVM2007,ThesisDuits,BatardJMathImagingVis2014} that has many applications in image processing.
Besides the special cases of the Heisenberg group \cite{BarbieriJMIV2014,PetitotJPhysiolParis2003,DuitsApplComputHarmonAnal2013}, the 2D Euclidean motion group \cite{CittiJMathImagingVis2006,BarbieriAnalAppl2014,Mashtakov2013,Prandi2015,CittiSIAMJImagingSci2016,DuitsQAM2010Part1,ThesisBekkers}, the similitude group \cite{Antoine1999,SharmaACHA2015,Pechaud2009} and the 3D rotation group \cite{MashtakovJMathImagingVis2017},
we now consider invertible orientation scores  on the 3D Euclidean motion group (in which the coupled space of positions and orientations is embedded),
which relate to coherent states from physics \cite{IshamJMathPhys1991} for $n=3$,  with a specific (semi-)analytic design for our image processing purposes.

\subsection{Contributions of the Article}
The main contributions per section of the article are:
\begin{itemize}
	\item In Section \ref{sect:OSTransform} we give an overview of the discrete and continuous 3D orientation score transformation. Additionally we present a transformation which is splitted in low and high frequencies and quantify the stability of the transformation  in Lemma \ref{lemma:StabilitySplitting}.

	\item In Section \ref{sect:WaveletDFT} we present the cake-wavelets obtained using the DFT method and give an efficient implementation using spherical harmonics which is summarized in Result \ref{res:Result1}. Furthermore we analyze the stability of the transformation for these filters (Proposition \ref{proposition:FastReconstruction}).

	\item In Section \ref{sect:WaveletZernike} we present the analytical versions of the cake-wavelets obtained by expansion in the Zernike basis followed by a continuous Fourier transform which is summarized in Result \ref{res:Result2}.

	\item In Section \ref{sect:Experiments} we compare the two types of filters and show that the DFT filters approximate their analytical counterparts well.

	\item In Section \ref{sect:Applications} we show two applications of the orientation score transformation:
	\begin{enumerate}
	 	\item We propose an extension of coherence enhancing diffusion via our invertible orientation scores of 3D images. Compared to the original idea of coherence enhancing diffusion acting directly on image-data \cite{WeickertIJCV1999,BurgethBook2009,Burgeth2012}, there is the advantage of preserving crossings. Here we applied our method to real medical image data (3D rotational Xray) of the abdomen containing renal arteries. We show quantitatively that our method effectively reduces noise (quantified using contrast to noise ratios (CNR)) while preserving the complex vessel geometry and the vessel widths. Furthermore, qualitative assessment indicates that our denoising method is very useful as preprocessing for 3D visualization (volume rendering).
	 	\item We develop a new tubularity measure in 3D orientation score data. This extends previous work on tubularity measures using 2D orientation scores \cite{Chen2014}\cite[Ch. 12]{ThesisBekkers} to 3D data. We show qualitatively that our measure gives sharp responses at vessel centerlines and show its use for radius extraction and complex vessel segmentation in cone beam CT data of the brain.
	 \end{enumerate}
\end{itemize}

\subsection{Outline of the Article}

First, we discuss the theory of invertible orientation score transforms in Section \ref{sect:OSTransform}. Then we construct 3D cake-wavelets and give a new efficient implementation using spherical harmonics in Section \ref{sect:WaveletDFT}, followed by their analytical counterpart expressed in the generalized Zernike basis in Section \ref{sect:WaveletZernike}. Then we compare the two types of filters and validate the invertibility of the orientation score transformation in Section \ref{sect:Experiments}. Finally, we address two application areas for 3D orientation scores in Section \ref{sect:Applications} and show results and practical benefits for both of them. In the first application (Subsection \ref{ssect:CEDOS}), we present a natural extension of the crossing preserving coherence enhancing diffusion on invertible orientation scores (CEDOS) \cite{FrankenIJCV2009} to the 3D setting. In the second application (Subsection \ref{ssect:Tubularity}) we use the orientation score to define a tubularity measure, and show experiments applying the tubularity measure to both synthetic data and brain-CT data. Both application sections start with a treatment of related methods.

\begin{figure}[ht]
\includegraphics[width=.99 \columnwidth]{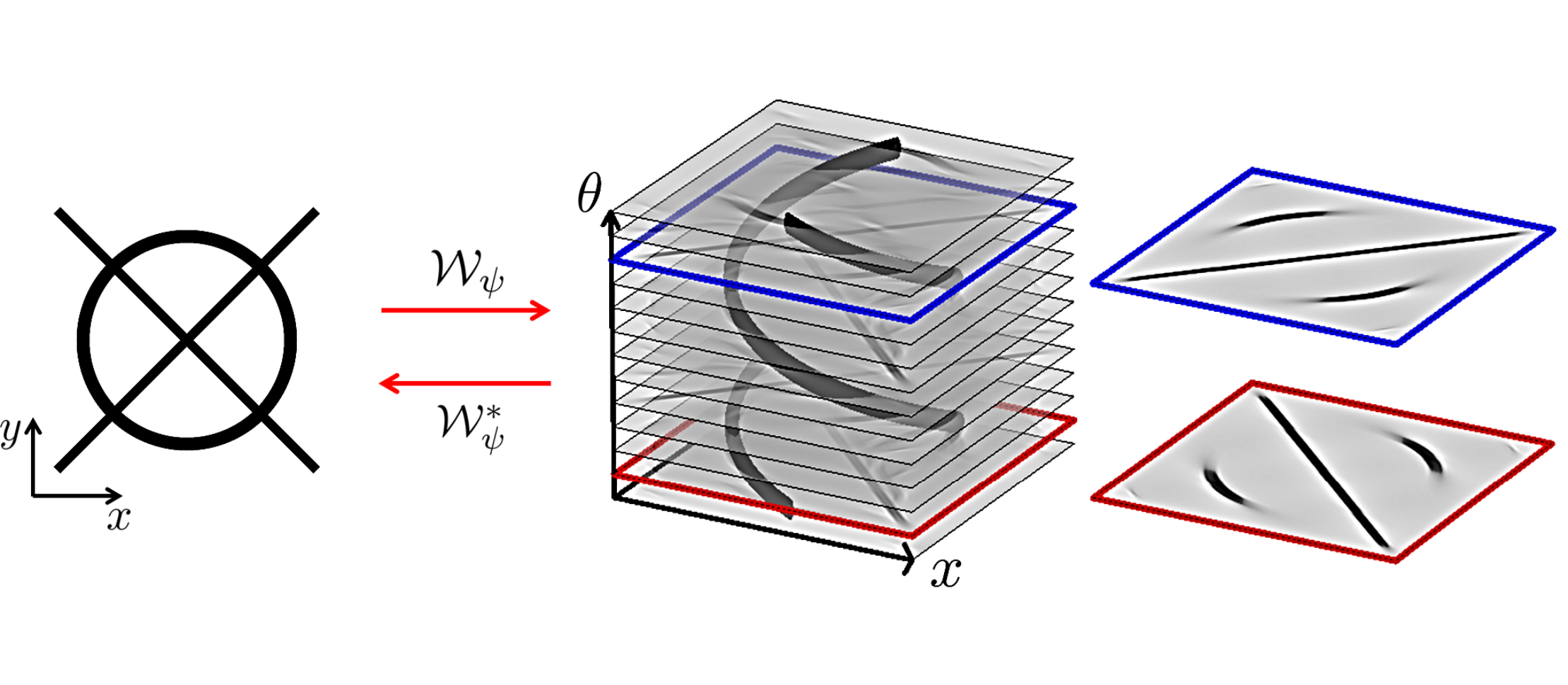}
	\caption{2D Orientation score for an exemplary image. In the orientation score crossing structures are disentangled because the different structures have a different orientation.}
	\label{fig:2DOS}
\end{figure}

\begin{figure}[ht]
\includegraphics[width=.99 \columnwidth]{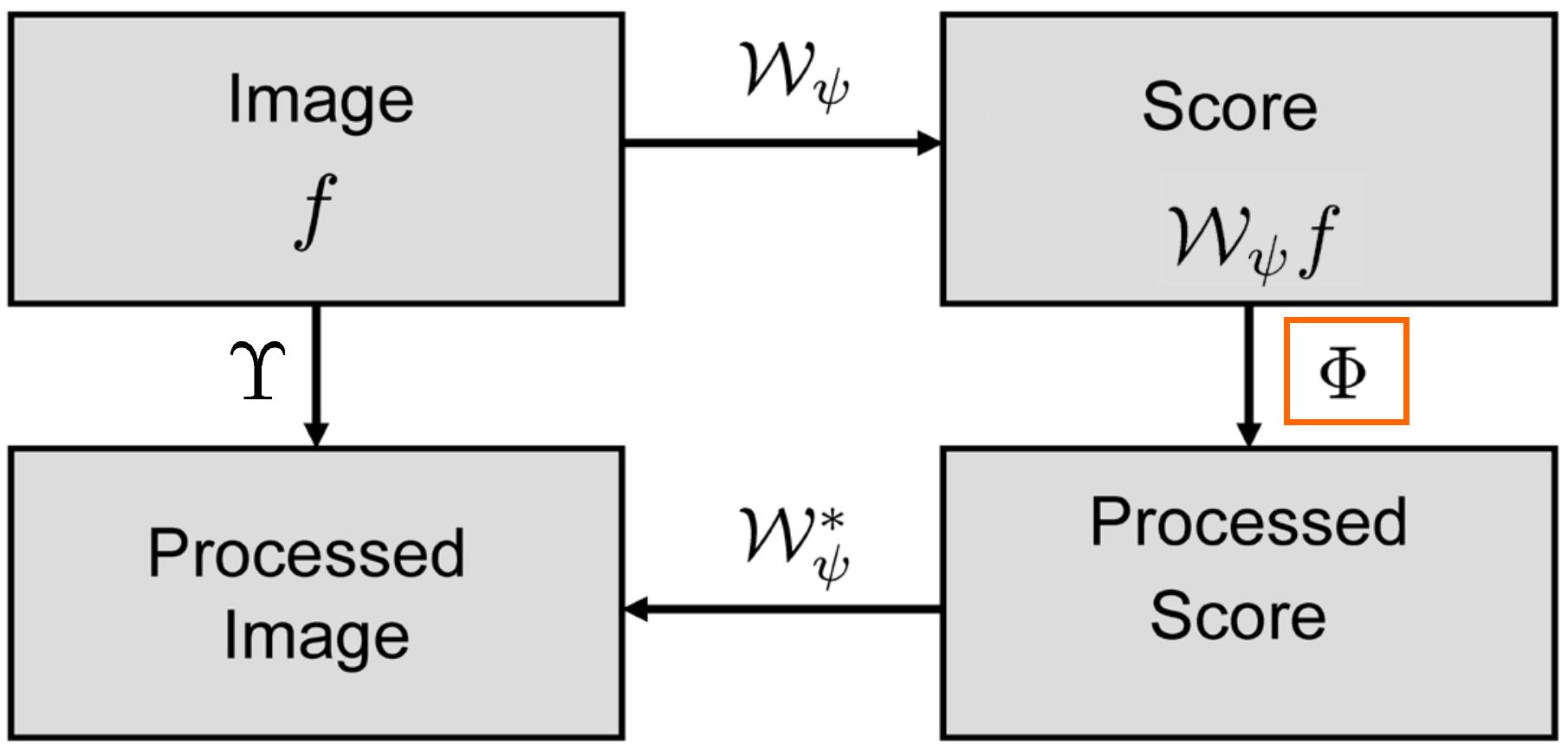}
\caption{A schematic view of image processing via invertible orientation scores.}%
\label{fig:OverviewOperators}
\end{figure}

\begin{figure*}[ht]
\centering
	\includegraphics[width=0.99 \hsize]{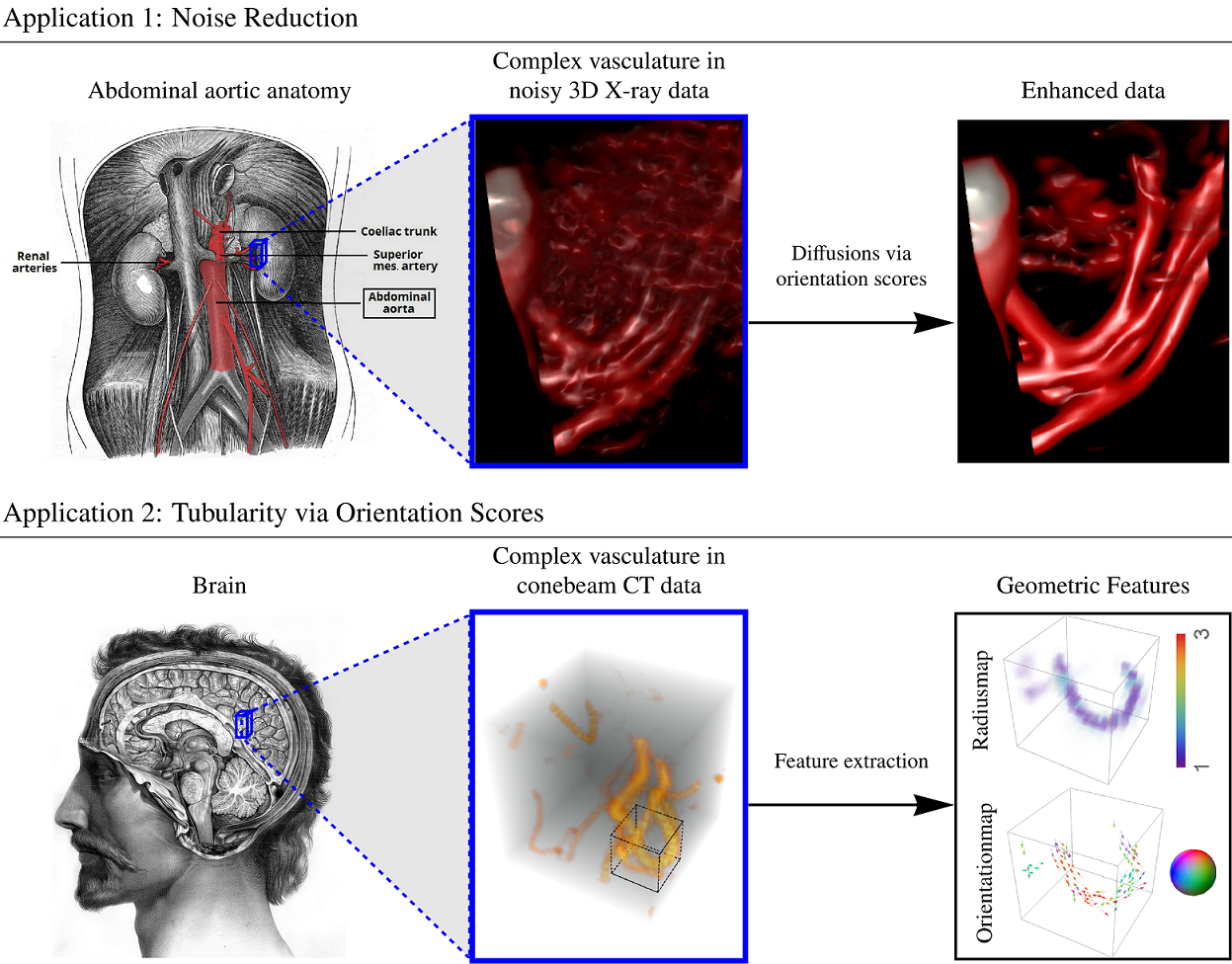}
	\caption{Overview of applications of processing via orientation scores. \emph{Top:} We reduce noise in real medical image data (3D rotational Xray) of the abdomen containing renal arteries by applying diffusions via 3D orientation scores. \emph{Bottom:} Geometrical features can be directly extracted from our tubularity measure via 3D orientation scores. We apply this method to cone beam CT data of the brain.}
	\label{fig:OverviewApplications}
\end{figure*}

\begin{figure*}[ht]%
\centering
\includegraphics[width=0.6\textwidth]{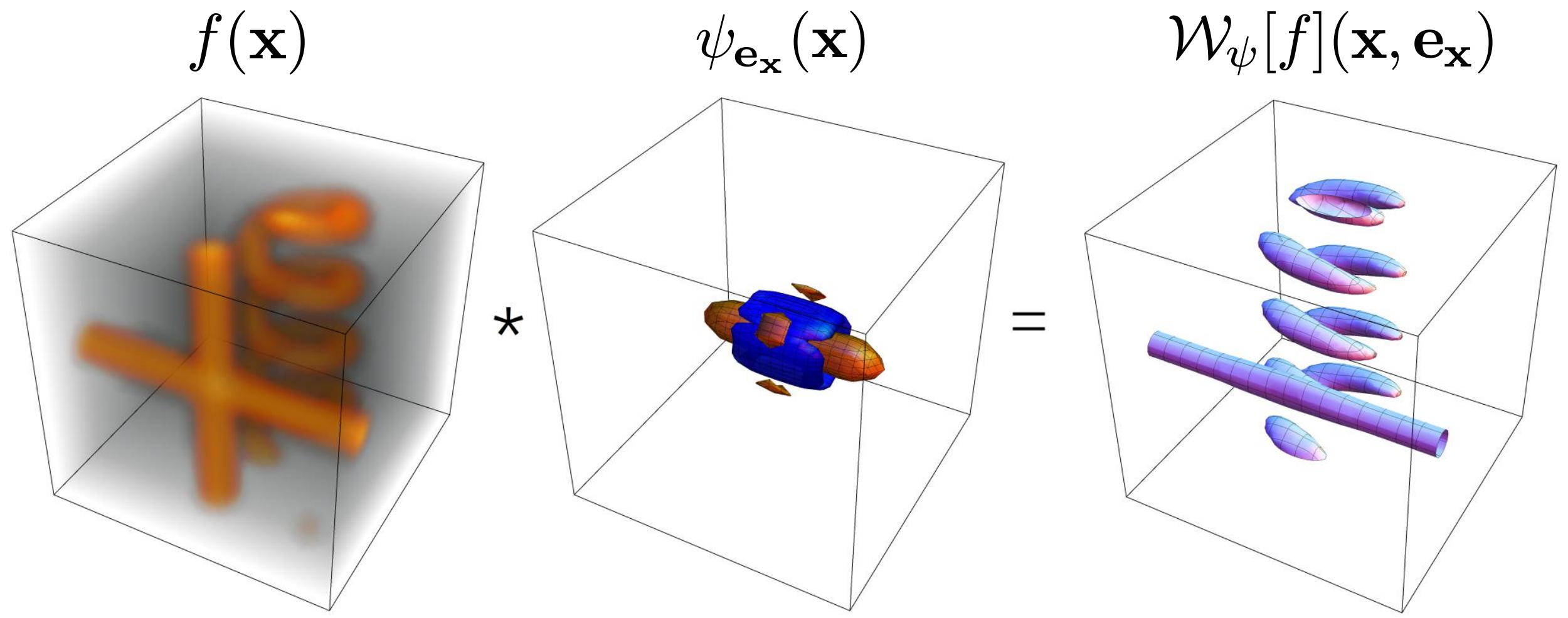}\\
\includegraphics[width=0.7\textwidth]{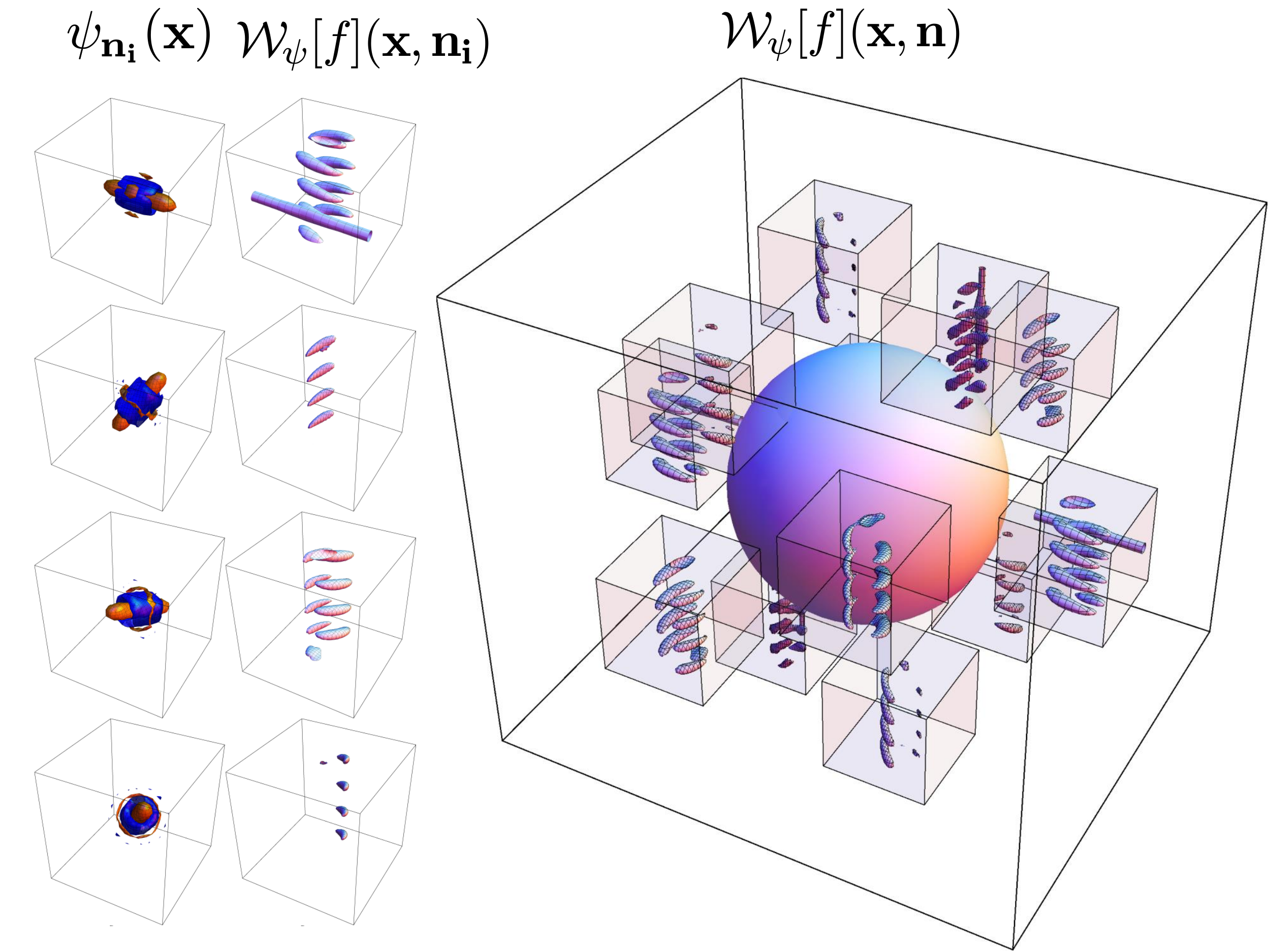}
\caption{Construction of a 3D orientation score. \emph{Top:} The data $f$ is correlated with an oriented filter $\psi_{\ve_x}$ to detect structures aligned with the filter orientation $\ve_x$. \emph{Bottom left:} This is repeated for a discrete set of filters with different orientations. \emph{Bottom right:} The collection of 3D datasets constructed by correlation with the different filters is an orientation score and is visualized by placing a 3D dataset on a number of orientations.}%
\label{fig:3DOSIntro}%
\end{figure*}

\section{Invertible Transformation}\label{sect:OSTransform}
\subsection{Continuous Orientation Score Transform}
Throughout this article we use the following definition of the Fourier transform on $\R^3$:
 \begin{equation}
 	\hat{f}(\vomega) = (\cFRRR f)(\vomega) = \int_{\R ^ 3 } e ^ {-i \vomega\cdot \vx} f(\vx) \d \vx.
\end{equation}
An invertible orientation score $\cW_{\psi}[f]:\R^3 \times S^2 \rightarrow \C$ is constructed from a given ball-limited 3D dataset
\begin{equation}
	f \in \LL_2^{\varrho} (\R^3)=\{f \in \LL_2 (\R^3) \; | \; \textrm{supp}(\cF f) \subset B_{\varrho}\},
	\label{eq:ballLimitedData}
\end{equation}
with ball $B_{\varrho} = \{\vomega \in \R^3 \;|\; \|\vomega\| < \varrho\}$ of radius $\varrho>0$, by correlation $\star$ with an anisotropic kernel
\begin{equation}
	\begin{split}
(\cW_{\psi}[f])(\vx,\vn) 	&=(\overline{\psi_\vn} \star f)(\vx) \\
							&=\int_{\R ^ 3}  \overline {\psi_\vn (\vx'-\vx)}f(\vx')\d \vx '.
\end{split}
\label{eq:Construction1}
\end{equation}
Here $\psi \in \LL_2(\R^3) \cap \LL_1(\R^3)$ is a wavelet aligned with and rotationally symmetric around the $z$-axis, and $\psi_{\vn}\in \LL_2(\R^3)$ the rotated wavelet aligned with $\vn$ given by
\begin{equation}\label{eq:RotatedWavelet}
	 \psi_{\vn} (\vx)=\psi (\mR_{\vn}^T \vx).
 \end{equation}
Here $\mR_{\vn} \in SO(3)$ is any 3D rotation which rotates the $z$-axis onto $\vn$ where the specific choice of rotation does not matter because of the rotational symmetry of $\psi$. The overline denotes a complex conjugate. The exact reconstruction formula for this transformation is given by
\begin{equation}
\begin{array}{l}
f(\vx) = (\cW_{\psi}^{-1}[\cW_{\psi}[f]])(\vx)= \\
\cFRRR^{-1} \left[ M_\psi^{-1}  \cFRRR \left[ \tilde{\vx} \mapsto \!\! \int \limits_{S^2} (\check {\psi}_\vn \star \cW_{\psi}[f](\cdot,\vn))(\tilde{\vx}) \d \sigma(\vn) \right] \right] (\vx),
\end{array}
\label{eq:Reconstruction1}
\end{equation}
with $\check {\psi}_\vn (\vx ) =\psi _\vn(-\vx)$.
The function $M_\psi: \R^3 \rightarrow \R^+ $ is given by
\begin{equation}
M_\psi(\vomega) = \int_{S^2} \left| \hat{\psi}_\vn(\vomega) \right|^2 \d \sigma(\vn),
\label{eq:Mpsi}
\end{equation}
and vanishes at $\infty$, where the circumflex $(\,\hat{}\,)$ again denotes Fourier transformation. Due to our restriction to ball-limited data \eqref{eq:ballLimitedData} this does not cause problems in reconstruction \eqref{eq:Reconstruction1}. The function $M_\psi$ quantifies the stability of the inverse transformation \cite{ThesisDuits}, since $M_\psi(\vomega)$ specifies how well frequency component $\vomega$ is preserved by the cascade of construction and reconstruction when $M_\psi^{-1}$ would not be included in Eq.~\!(\ref{eq:Reconstruction1}). An exact reconstruction is possible as long as
\begin{equation}
	\exists_{M> 0,\delta> 0}\forall_{\vomega \in B_{\varrho}} \; : \; \;  0<\delta \leq M_\psi (\vomega) \leq M<\infty.
\label{eq:AdmissibilityRequirement}
\end{equation}
In practice it is best to aim for $M_\psi (\vomega) \approx 1,$  in view of the condition number of transformation $\cW_\psi:\LL_2^\varrho (\R^3)\rightarrow \LL_2^\varrho (\R^3\times S^2)$ given by:
\begin{equation}
	\cond(\cW_{\psi}) = \|\cW_\psi\|\|\cW_\psi^{-1}\| = \frac{M}{\delta},
\end{equation}
where in the codomain spatial frequencies are again limited to the ball:
\begin{equation}
	\LL_2^\varrho (\R^3\times S^2)=\{U \in \LL_2 (\R^3 \times S^2)| \forall_{\vn \in S^2} \, U(\cdot,\vn) \in \LL_2^\varrho (\R^3)\}.
\end{equation}

Also, in the case we have  $M_\psi (\vomega) = 1$ for $\vomega \in B_{\varrho}$ we have $\LL_2$-norm preservation
\begin{equation}
\|f\|_{\LL_2(\R^3)}^2 = \|\cW_{\psi} f\|_{\LL_2 (\R^3\times S^2)}^2,\;\; \textrm{for all } f \in \LL_2^\varrho (\R^3),
\end{equation}
and reconstruction Eq.~\!(\ref{eq:Reconstruction1}) simplifies to
\begin{equation}
	f(\vx) = \int_{S^2} (\check{\psi}_\vn \star \cW_{\psi}[f](\cdot,\vn))(\vx) \d \sigma(\vn).
\end{equation}
We can further simplify the reconstruction for wavelets for which the following additional property holds:
\begin{equation}
		N_\psi(\vomega) = \int_{S^2} \hat{\psi}_\vn(\vomega) \, \d \sigma(\vn) \approx 1.
		\label{eq:Npsi}
\end{equation}
In that case the reconstruction formula is approximately an integration over orientations only:
\begin{equation}
f(\vx) \approx \int_{S^2} \cW_{\psi}[f](\vx,\vn) \, \d \sigma(\vn).
\label{eq:Reconstruction2Approximation}
\end{equation}
For the reconstruction by integration over angles only we can analyze the stability via the condition number of the  mapping that maps an image $f \in \mathbb{L}_{2}^{\varrho}(\R^3)$ to an orientation integrated score
\begin{equation}\label{op1}
A_{\psi}(f) = \int_{S^2} \mathcal{W}_{\psi}f(\cdot,\vn)\, {\rm d}\sigma(\vn).
\end{equation}
Its condition number is given by
{\small $\cond(A_\psi) =
\frac{\max \limits_{\vomega \in B_{\varrho}}N_{\psi}(\vomega)}{\min \limits_{\vomega \in B_{\varrho}}N_{\psi}(\vomega)}
$}.

In practice, we always use this last reconstruction because practical experiments show that performing an additional convolution with the wavelet as done in reconstruction \eqref{eq:Reconstruction1} after processing the score can lead to artifacts. It is, however, important to also consider the reconstruction \eqref{eq:Reconstruction1} and $M_\psi$ because it is used to quantify the stability and norm preservation of the transformation from image to orientation score.

The fact that we use reconstruction by integration	 while still taking into account norm-preservation by controlling $M_\psi$ leads to restrictions on our wavelets which are captured in the following definition:

\begin{definition}[Proper Wavelet] \label{def:properwavelet}
Let us set a priori bounds\footnote{In practice we choose the default values $\delta=\frac{1}{8}$ and $M=1.1$ and $\varepsilon=0.01$ and note it is actually the ratio $\frac{M}{\delta}$ that determines the condition number. It is just a convenient choice to set the upper bound close to 1.} $\delta,M>0,\  0  < \varepsilon \ll 1$. Furthermore, let $\varrho$ be an a priori maximum frequency of our ball-limited image data. Then, a wavelet $\psi \in \mathbb{L}_{2}(\mathbb{R}^{3}) \cap \mathbb{L}_{1}(\mathbb{R}^{3})$ is called a proper wavelet if
\begin{align}
1.)\  & \forall_{\alpha \in[0,2 \pi)} &&: \psi(\mR_{\ve_z, \alpha}^{-1}\vx)=\psi (\vx) ,\hspace{2.2cm} \\[5pt]
2.)\  & \forall_{\vomega \in B_{\varrho}} &&: \delta \leq M_{\psi}(\vomega) \leq M, \label{eq:AdmissibilityRequirementInDef1}
\end{align}
where $\mR_{\ve_z, \alpha} \in SO(3)$ is the 3D rotation around axis $\ve_z$ over angle $\alpha$.

If moreover, one has
\begin{equation}
	3.)\ \exists_{\frac{1}{2}\varrho <\varrho_0 < \varrho} \forall_{\vomega \in B_{\varrho_0}}\;:\; N_{\psi}(\vomega) \in [1-\varepsilon,1+\varepsilon],
\end{equation}
then we speak of a \emph{proper wavelet with fast reconstruction property}, cf.~\!(\ref{eq:Reconstruction2Approximation}).
\end{definition}
\begin{remark}
The 1st condition (symmetry around the $z$-axis) allows for an appropriate definition of an \emph{orientation score} rather than a \emph{rotation score}.
The 2nd condition ensures invertibility and stability of the (inverse) orientation score transform.
The 3rd condition, allows us to use the approximate reconstruction by integration over angles only.
\end{remark}

\begin{remark}
Because of finite sampling in practice, the constraint to ball-limited functions is reasonable. The constraint is not a necessary one when one relies on distributional transforms \cite[App. B]{BekkersJMathImagingVis2014}, but we avoid such technicalities here.
\end{remark}

\subsubsection{Low Frequency Components}\label{sssect:LowFrequencyComponents}
In practice we are not interested in the zero and lowest frequency components since they represent average value and global variations which appear at scales much larger than the structures of interest. We need, however, to store this data for reconstruction. Therefore we perform an additional splitting of our wavelets into two parts
\begin{equation}
		 \psi=\psi_0 + \psi_1, \quad \textrm{with } \hat{\psi}_0=\hat{G}_{s_{\rho}} \hat{\psi}, \;\;  \hat{\psi}_1=(1 - \hat{G}_{s_{\rho}}) \hat{\psi},
		 \label{eq:WaveletSplitting}
\end{equation}
with Gaussian window in the Fourier domain given by
\begin{equation}
	\hat{G}_{s_{\rho}}(\vomega) = e^{- s_\rho \|\vomega\|^2},
\end{equation}
After splitting, $\psi_0$ contains the average and low frequency components and $\psi_1$ the higher frequencies relevant for further processing. In continuous wavelet theory it is also common to separately store very low frequency components separately, see e.g. \cite{Mallat1999,Sifre2014}. In this case we construct two scores. One for the high-frequency components
\begin{equation}
(\cW_{\psi_1}[f])(\vx,\vn)=(\overline{\psi_{1,\vn}} \star f)(\vx),
\label{eq:Construction1WithoutLowFrequencies}
\end{equation}
and one for the low-frequency components
\begin{equation}
(\cW_{\psi_0}[f])(\vx,\vn)=(\overline{\psi_{0,\vn}} \star f)(\vx).
\label{eq:Construction1LowFrequencies}
\end{equation}
Here we again have $\psi_{i,\vn}(\vx)=\psi_{i}(\mR_{\vn}^T \vx)$, as in Eq. \eqref{eq:RotatedWavelet}. The vector transformation is then defined as
\begin{equation}
	\underline{\cW}_{\underline{\psi}} [f]={(\cW_{\psi_0}[f],\cW_{\psi_1}[f])}.
\end{equation}
For this transformation we have the exact reconstruction formula
\begin{align}
f(\vx) &= ( \underline{\cW}_{\underline{\psi}}^{-1} \underline{\cW}_{\underline{\psi}} f )(\vx) \nonumber \\
&= \cFRRR^{-1} \Bigg[ M_{\underline{\psi}}^{-1}  \cFRRR \Bigg[ \tilde{\vx} \mapsto \int_{S^2} (\check {\psi}_{1,\vn} \star \cW_{\psi_1}[f](\cdot,\vn))(\tilde{\vx}) + \nonumber \\
& \quad (\check {\psi}_{0,\vn} \star \cW_{\psi_0}[f](\cdot,\vn))(\tilde{\vx}) \d \sigma(\vn) \Bigg] \Bigg] (\vx)
\label{eq:ReconstructionLowAndHighFrequencies}
\end{align}
with
\begin{equation}
		M_{\underline{\psi}}(\vomega)
		=   \int_{S^2} \left( \left| \hat{\psi}_{0,\vn}(\vomega) \right|^2 + \left| \hat{\psi}_{1,\vn}(\vomega) \right|^2 \right) \d \sigma(\vn).
\end{equation}
Again, $M_{\underline{\psi}}$ quantifies the stability of the transformation. The next lemma shows us that the stability of the transformation is maintained after performing the additional splitting.

\begin{lemma}\label{lemma:StabilitySplitting}
	Let $\psi \in \LL_2(\R^3) \cap \LL_1(\R^3)$ such that Eq. \eqref{eq:AdmissibilityRequirement} holds, $\delta=\min_{\vomega \in B_\varrho} M_\psi(\vomega)$ and $M=\max_{\vomega \in B_\varrho} M_\psi(\vomega)$. Then the condition number of $\cW_\psi: \LL_2^\varrho(\R^3) \rightarrow \LL_2^\varrho(\R^3 \times S^2)$ is given by
	\begin{equation}
		|\cond(\cW_{\psi})|^2 = \|\cW_\psi\|^2\|\cW_\psi^{-1}\|^2 = \frac{M}{\delta}.
	\end{equation}
	 The condition number of $\underline{\cW}_{\underline{\psi}}: \LL_2^\varrho(\R^3) \rightarrow \underline{\LL}_2^\varrho(\R^3 \times S^2)$ obtained from $\cW_\psi$ by performing an additional splitting in low and high frequency components is given by
	\begin{equation}
		|\cond(\underline{\cW}_{\underline{\psi}})|^2 = \|\underline{\cW}_{\underline{\psi}}\|^2\|\underline{\cW}_{\underline{\psi}}^{-1}\|^2 = \frac{2M}{\delta},
	\end{equation}
	thereby guaranteeing that stability is maintained after performing the splitting.
\end{lemma}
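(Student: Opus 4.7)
The strategy is to diagonalize both $\cW_\psi$ and $\underline{\cW}_{\underline{\psi}}$ in the Fourier domain, where they become multiplication operators, and reduce everything to pointwise bounds on the multipliers $M_\psi$ and $M_{\underline{\psi}}$ on $B_\varrho$.

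\emph{Step 1 (Plancherel identity).} Starting from the defining correlation and a short calculation giving $\cFRRR[(\cW_\psi f)(\cdot,\vn)](\vomega) = \overline{\hat{\psi}_\vn(\vomega)}\,\hat{f}(\vomega)$, I would apply Plancherel in $\vx$ and Fubini to swap with integration over $S^2$ (justified since $\psi \in \LL_2 \cap \LL_1$ and $f \in \LL_2^\varrho$), yielding
\[
\|\cW_\psi f\|_{\LL_2(\R^3\times S^2)}^2 \;=\; (2\pi)^{-3}\int_{B_\varrho} |\hat{f}(\vomega)|^2\,M_\psi(\vomega)\,\d\vomega.
\]
Combined with $\|f\|^2 = (2\pi)^{-3}\int_{B_\varrho}|\hat{f}(\vomega)|^2\,\d\vomega$ and the hypothesis $\delta \le M_\psi(\vomega) \le M$ on $B_\varrho$, this gives $\delta\|f\|^2 \le \|\cW_\psi f\|^2 \le M\|f\|^2$, hence $\|\cW_\psi\|^2 \le M$ and $\|\cW_\psi^{-1}\|^2 \le 1/\delta$ on the range.

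\emph{Step 2 (Sharpness).} To upgrade these inequalities to equalities I would construct, for any $\vomega_\star \in \overline{B_\varrho}$, a family $f_n$ whose Fourier transforms are normalized indicators of shrinking balls around $\vomega_\star$. Continuity of $M_\psi$ (from $\psi \in \LL_1$) together with the Plancherel identity above gives $\|\cW_\psi f_n\|^2/\|f_n\|^2 \to M_\psi(\vomega_\star)$. Choosing $\vomega_\star$ at points where $M_\psi$ attains its supremum $M$ and its infimum $\delta$ (attained on the compact $\overline{B_\varrho}$) saturates both bounds and yields $|\cond(\cW_\psi)|^2 = M/\delta$.

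\emph{Step 3 (Splitted operator).} By the product-norm structure of $\underline{\LL}_2^\varrho(\R^3\times S^2)$,
\[
\|\underline{\cW}_{\underline{\psi}} f\|^2 \;=\; \|\cW_{\psi_0} f\|^2 + \|\cW_{\psi_1} f\|^2 \;=\; (2\pi)^{-3}\!\int_{B_\varrho}\!|\hat{f}(\vomega)|^2\,M_{\underline{\psi}}(\vomega)\,\d\vomega.
\]
The key observation is that $\hat{G}_{s_\rho}$ is radial, so the rotated splits factor as $\hat{\psi}_{0,\vn}(\vomega) = \hat{G}_{s_\rho}(\vomega)\hat{\psi}_\vn(\vomega)$ and $\hat{\psi}_{1,\vn}(\vomega) = (1-\hat{G}_{s_\rho}(\vomega))\hat{\psi}_\vn(\vomega)$. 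Writing $g := \hat{G}_{s_\rho}(\vomega) \in (0,1]$, the pointwise identity $|\hat{\psi}_{0,\vn}|^2 + |\hat{\psi}_{1,\vn}|^2 = [g^2 + (1-g)^2]\,|\hat{\psi}_\vn|^2$ together with the elementary bound $\tfrac{1}{2} \le g^2+(1-g)^2 \le 1$ on $[0,1]$ gives $\tfrac{1}{2} M_\psi \le M_{\underline{\psi}} \le M_\psi$ pointwise, so $\tfrac{\delta}{2} \le M_{\underline{\psi}}(\vomega) \le M$ on $B_\varrho$. Reusing Steps~1--2 with this new multiplier delivers $|\cond(\underline{\cW}_{\underline{\psi}})|^2 = 2M/\delta$.

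\emph{Main obstacle.} The only non-routine point is the sharpness argument of Step~2: identifying the operator norms with the pointwise extrema of $M_\psi$ requires the approximate-eigenfunction construction with spectra concentrated near the extremal frequencies, using continuity of $M_\psi$ and compactness of $\overline{B_\varrho}$. Once that is in place, Step~3 is driven entirely by the radial symmetry of $\hat{G}_{s_\rho}$ and the elementary bound on $g^2 + (1-g)^2$, which is precisely what produces the factor $2$ in the splitted estimate.
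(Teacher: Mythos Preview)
Your proof is correct and follows essentially the same route as the paper: Plancherel reduces the norm computations to the multiplier $M_\psi$, and for the splitted operator you obtain the pointwise identity $M_{\underline{\psi}}=(g^2+(1-g)^2)\,M_\psi$ (the paper writes this equivalently as $(1-2g(1-g))\,M_\psi$) together with the elementary bound $\tfrac12\le g^2+(1-g)^2\le 1$ on $[0,1]$. Your Step~2 sharpness argument via Fourier-localized test functions is more explicit than the paper's treatment, which simply identifies the suprema with $M$ and $1/\delta$ without further justification.
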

\begin{proof}
First, we find the condition number of $\cW_\psi$:
\begin{equation}
\begin{array}{ll}
		|\cond(\cW_{\psi})|^2 &=\!\!\! \sup \limits_{f \in \LL_2^\varrho(\R^3)} \frac{ \| f  \|_{\LL_2}^2}{\| \cW_{\psi} f  \|_{\LL_2}^2} \cdot \!\! \sup \limits_{f \in \LL_2^\varrho(\R^3)} \!\! \frac{\| \cW_{\psi} f \|_{\LL_2}^2}{\| f \|_{\LL_2}^2}. \\
		\end{array}
		\label{eq:conditionNumberProof}
\end{equation}
For the first factor in Eq. \eqref{eq:conditionNumberProof} we find
\begin{equation}
	\begin{split}
			\sup \limits_{f \in \LL_2^\varrho(\R^3)} \frac{ \| f  \|_{\LL_2}^2}{\| \cW_{\psi} f  \|_{\LL_2}^2} &= \sup \limits_{f \in \LL_2^\varrho(\R^3)} \frac{ \| \mathcal{F} f  \|_{\LL_2}^2}{\| \mathcal{F} \cW_{\psi} f  \|_{\LL_2}^2}   \\
			 &\hspace{-2.1cm} = \sup \limits_{f \in \LL_2^\varrho(\R^3)} \frac{ \int_{\R^3} |\hat{f}(\vomega)|^2  \d \vomega}{ \int_{S^2} \int_{{\R^3}} |\hat{\psi}_{\vn} (\vomega)|^2 |\hat{f}(\vomega)|^2 \d \vomega \d \sigma(\vn)}  \\
		&\hspace{-2.1cm} = \sup \limits_{f \in \LL_2^\varrho(\R^3)} \frac{ \int_{\R^3} |\hat{f}(\vomega)|^2  \d \vomega}{ \int_{{\R^3}} M_{\psi}(\vomega) |\hat{f}(\vomega)|^2 \d \vomega} \\	&\hspace{-2.1cm} = \sup_{\vomega \in B_\varrho} \frac{1}{M_{\psi}(\vomega)}.
	\end{split}
\end{equation}
Similarly, we get $\sup_{\vomega \in B_\varrho} M_{\psi}(\vomega)$ for the second factor in Eq. \eqref{eq:conditionNumberProof}. Then we obtain
\begin{equation}
			\cond(\cW_{\psi})  = \sup_{\vomega \in B_\varrho} \frac{1}{M_{\psi}(\vomega)} \cdot \sup_{\vomega \in B_\varrho} M_{\psi}(\vomega) =   \frac{M}{\delta}.
\end{equation}
Similarly the condition number of $\underline{\cW}_{\underline{\psi}}$ is given by
\begin{equation}
			\cond(\underline{\cW}_{\underline{\psi}})  = \sup_{\vomega \in B_\varrho} \frac{1}{M_{\underline{\psi}}(\vomega)} \cdot \sup_{\vomega \in B_\varrho} M_{\underline{\psi}}(\vomega).
\end{equation}
Next we express $M_{\underline{\psi}}$ in $M_{\psi}$ of the original wavelet as
\begin{equation}
\begin{split}
		M_{\underline{\psi}}(\vomega)
		&=   \int_{S^2} \left| \hat{\psi}_{0,\vn}(\vomega) \right|^2  + \left| \hat{\psi}_{1,\vn}(\vomega) \right|^2 \d \sigma(\vn) \\
		&=   \int_{S^2} \left| \hat{\psi}_{0,\vn}(\vomega) + \hat{\psi}_{1,\vn}(\vomega) \right|^2 \d \sigma(\vn) \\ & \quad -  \int_{S^2} 2 \operatorname{Re} \left( \hat{\psi}_{0,\vn} (\vomega)  \conj{\hat{\psi}_{1,\vn}(\vomega)} \right) \d \sigma(\vn) \\
		&= M_{\psi}(\vomega) - I(\vomega).
\end{split}
\end{equation}
So it remains to quantify $I(\vomega)$. For a wavelet splitting according to \eqref{eq:WaveletSplitting} we have
\begin{align}
		 I(\vomega) &=   \int_{S^2} 2 \operatorname{Re} \left( \hat{\psi}_{0,\vn} (\vomega)  \conj{\hat{\psi}_{1,\vn}(\vomega)} \right) \d \sigma(\vn)  \nonumber\\
		&=   \int_{S^2} 2 \operatorname{Re} \left( \hat{G}_{s_{\rho}}(\vomega) \hat{\psi}_\vn(\vomega)   (1 - \hat{G}_{s_{\rho}}(\vomega)) \conj{\hat{\psi}_\vn(\vomega)} \right)  \d \sigma(\vn) \nonumber\\
		&=  2 (\hat{G}_{s_{\rho}}(\vomega) (1 - \hat{G}_{s_{\rho}}(\vomega))) M_{\psi}(\vomega).
\end{align}
Hence
\begin{equation}
	M_{\underline{\psi}}(\vomega) = \Big(1 - 2 \big(\hat{G}_{s_{\rho}}(\vomega) \big(1 - \hat{G}_{s_{\rho}}(\vomega)\big)\big)\Big)  M_{\psi}(\vomega).
\end{equation}
And since $\frac{1}{2} \leq 1-2x(1-x)) \leq 1$ for $0 \leq x \leq 1$ we have for $M_\psi$ satisfying \eqref{eq:AdmissibilityRequirement}
the following bounds on $M_{\underline{\psi}}$:
\begin{equation}
0<\delta/2 \leq M_{\underline{\psi}} (\vomega) \leq M<\infty, \quad \textrm {for all } \vomega=B_{\varrho},
\label{eq:AdmissibilityRequirementNew}
\end{equation}
thereby guaranteeing stability after the splitting \eqref{eq:WaveletSplitting}.
$\hfill \Box$
\end{proof}

For this vector transformation we can also use the approximate reconstruction by integration (for $N_\psi \approx 1$) over orientations. Thus we have
\begin{align}\label{eq:Reconstruction2Approximation2}
	f(\vx) &\approx \int_{S^2} \cW_{\psi}[f](\vx,\vn) \d \sigma(\vn)\\
	&=  \int_{S^2} \cW_{\psi_1}[f](\vx,\vn) \d \sigma(\vn) + \underbrace{\int_{S^2} \cW_{\psi_0}[f](\vx,\vn) \d \sigma(\vn)}_{L_{\psi_0}[f](\vx) }. \nonumber
\end{align}
As said we are only interested in processing of $\cW_{\psi_1}[f]$ and not in processing of $\cW_{\psi_0}[f]$, and so we directly calculate $L_{\psi_0}[f]$ via
\begin{equation}
	L_{\psi_0}[f](\vx) =(\overline{\phi_0} \star f)(\vx), \;  \textrm{with } \phi_0 = \! \int_{S^2} \! \psi_{0,\vn} \,\d \sigma(\vn).
\end{equation}
For a design with $N_\psi(\vomega)=1$ for all $\vomega \in B_\varrho$, we have $\hat{\phi}_0=\hat{G}_{s_{\rho}}$ and so
\begin{equation}\label{eq:LowFrequencyPhi0}
	 \phi_0(\vx) =   G_{s_{\rho}}(\vx) = \frac{1}{(4\pi s_{\rho})^{3/2}} e^{- \frac{\|\vx\|^2}{4 s_\rho}}.
\end{equation}
Then Eq. \eqref{eq:Reconstruction2Approximation2} becomes
\begin{equation}
	 f(\vx) \approx  \int_{S^2} \cW_{\psi_1}[f](\vx,\vn) \, \d \sigma(\vn) + (G_{s_{\rho}} * f) (\vx).
\end{equation}

\subsection{Discrete Orientation Score Transform}
In the previous section, we considered a continuous orientation score transformation. In practice, we have only a finite number of orientations. To determine this discrete set of orientations we uniformly sample the sphere using an electrostatic repulsion model \cite{CaruyerMagnResonMed2013}.

Assume we have a number $N_o$ of orientations $\cV =\{\vn_1,\vn_2,...,\vn_{N_o}\}\subset S^2$, and define the discrete invertible orientation score $\cW_\psi^d[f]:\R^3\times \cV \rightarrow \C$ by
\begin{equation}
(\cW_\psi^d[f])(\vx,\vn_i)=(\overline{\psi_{\vn_i}} \star f)(\vx).
\label{eq:construction1Discrete}
\end{equation}
The exact reconstruction formula is in the discrete setting given by
\begin{equation}
\begin{split}
f(\vx) &= ((\cW_\psi^d)^{-1}[\cW_\psi^d[f]])(\vx) \\
&= \cFRRR^{-1} \bigg[ (M_\psi^d)^{-1}  \cFRRR \bigg[ \\
&\qquad\! \tilde {\vx} \rightarrow \sum_{i=1}^{N_o} (\check {\psi}_{\vn_{i}} \star \cW_\psi^d[f](\cdot,\vn_i))(\tilde {\vx}) \, \Delta_i \bigg] \bigg] (\vx),
\end{split}
\label{eq:Reconstruction1Discrete}
\end{equation}
with $\Delta_i$ the discrete spherical area measure ($\sum \limits_{i=1}^{N_o} \Delta_i =4\pi$) which for reasonably uniform spherical sampling can be approximated by $\Delta_i\approx \frac{4 \pi}{N_o}$ (otherwise one could use \cite[Eq. (83)]{DuitsIntJComputVis2010}), and
\begin{equation}\label{eq:MpsiDiscrete}
M_\psi^d(\vomega) =   \sum_{i=1}^{N_o} \left| \hat{\psi}_{\vn_i}(\vomega) \right|^2 \Delta_i.
\end{equation}
Again, an exact reconstruction is possible iff $0<\delta\leq M_\psi^d (\vomega)\leq M<\infty$ and we have norm preservation when $M_\psi^d$=1.
Again, for the wavelets for which
\begin{equation}\label{eq:NpsiDiscrete}
	N_\psi^d =   \sum_{i=1}^{N_o} \hat{\psi}_{\vn_i}(\vomega) \Delta_i \approx 1,
\end{equation}
the image reconstruction can be simplified by a summation over orientations:
\begin{equation}
\begin{split}
f(\vx) &\approx \sum_{i=1}^{N_o} \cW_\psi^d[f](\vx,\vn_i) \, \Delta_i.
\end{split}
\label{eq:ReconstructionSumDiscrete}
\end{equation}
For this reconstruction by summation we can analyze the stability via the condition number of the  mapping that maps an image $f \in \mathbb{L}_{2}^{\varrho}(\R^3)$ to an orientation integrated score
\begin{equation}\label{op2}
A_\psi^d (f) = \sum \limits_{i=1}^{N_o} \overline{\psi}_{\vn_i} \star f \;\Delta_i.
 \end{equation}
This transformation has condition number
{\small $ \cond(A_\psi^d) =
\frac{\max \limits_{\vomega \in B_{\varrho}}N_{\psi}^d(\vomega)}{\min \limits_{\vomega \in B_{\varrho}}N_{\psi}^d(\vomega)}
$}.

Similar to Definition \ref{def:properwavelet} for the continuous case, the reconstruction properties of a set of filters is captured in the following definition:

\begin{definition}[Proper Wavelet Set] \label{def:properwaveletSet}
Let us again set a priori bounds $\delta,M>0,\  0  < \varepsilon \ll 1$. Let $\varrho$ be an a priori maximum frequency of our ball-limited image data. Then, a set of wavelets $\{ \psi_{\vn_i} \in \mathbb{L}_{2}^{\varrho}(\mathbb{R}^{3}) \cap \mathbb{L}_{1}(\mathbb{R}^{3}) \,|\, i=1,\dots,N_o \}$, with a reasonable uniform spherical sampling ($\Delta_i \approx \frac{4\pi}{N_o}$), constructed as rotated versions of $\psi$ is called a proper wavelet set if
\begin{align}
1.)\  & \forall_{\alpha \in[0,2 \pi)} &&: \psi(\mR_{\ve_z, \alpha}^{-1}\vx)=\psi (\vx) ,\hspace{2.2cm} \\[5pt]
2.)\  & \forall_{\vomega \in B_{\varrho}} &&: \delta \leq M_{\psi}^d(\vomega) \leq M, \label{eq:AdmissibilityRequirementInDef2}
\end{align}
where $\mR_{\ve_z, \alpha} \in SO(3)$ is a 3D rotation around axis $\ve_z$ over angle $\alpha$.

If moreover, one has
\begin{equation}
	3.)\ \exists_{\frac{1}{2}\varrho <\varrho_0 < \varrho} \forall_{\vomega \in B_{\varrho_0}}\;:\; N_{\psi}^d(\vomega) \in [1-\varepsilon,1+\varepsilon],
\end{equation}
then we speak of a \emph{proper wavelet with fast reconstruction property}, cf.~\!(\ref{eq:ReconstructionSumDiscrete}).
\end{definition}

\subsubsection{Low Frequency Components}
For the discrete transformation we will also perform a splitting in low and high frequency components as explained in Section \ref{sssect:LowFrequencyComponents}. The reconstruction formula by summation in Eq. \eqref{eq:ReconstructionSumDiscrete} is now given by
\begin{equation}
\begin{split}
f(\vx) &\approx \sum_{i=1}^{N_o} \cW_{\psi_1}^d[f](\vx,\vn_i) \Delta_i + (G_{s_{\rho}} * f) (\vx).
\end{split}
\label{eq:ReconstructionSumDiscreteAfterSplitting}
\end{equation}

\subsection{Steerable Orientation Score Transform}
Throughout this article we shall rely on spherical harmonic decomposition of the angular part of proper wavelets in spatial and Fourier domain. This has the benefit that one can obtain steerable \cite{FreemanPAMI1991,ThesisFranken,ThesisReisert} implementations of orientation scores, where rotations of the wavelets are obtained via linear combination of the basis functions. As such, computations are exact and no interpolation (because of rotations) takes place. Details are provided in Appendix~\ref{app:steerable}.

\section{Wavelet Design using a DFT}\label{sect:WaveletDFT}

A class of 2D cake-wavelets, see \cite{BekkersJMathImagingVis2014,FrankenIJCV2009,DuitsPRIA2007}, was used for the 2D orientation score transformation. We now present 3D versions of these cake-wavelets. Thanks to the splitting in Section \ref{sssect:LowFrequencyComponents} we no longer need the extra spatial window used there. Our 3D transformation using the 3D cake-wavelets should fulfill a set of requirements, compare \cite{FrankenIJCV2009}:
\begin{enumerate}
		\item The orientation score should be constructed for a finite number ($N_o$) of orientations.
		\item The transformation should be invertible and reconstruction should be achieved by summation. Therefore we aim for $N_\psi^d \approx 1$. Additionally, to guarantee all frequencies are transferred equally to the orientation score domain we aim for $M_\psi^d \approx 1$. The set should be a proper wavelet set with fast reconstruction property (Def. \ref{def:properwaveletSet}) \label{requirement:ReconstructionSummation}
		\item The kernel should be strongly directional.
		\item The kernel should be separable in spherical coordinates in the Fourier domain, i.e., $(\cF\psi) (\vomega) =g(\rho) h (\vartheta,\varphi)$, with
		\begin{equation}
			\begin{split}
				\vomega &= (\omega_x,\omega_y,\omega_z) \\
					&= (\rho \sin \vartheta \cos \varphi,\rho \sin \vartheta \sin \varphi,\rho \cos \vartheta).
			\end{split}
			\label{eq:CoordinatesFourierSpace}
		\end{equation}
		Because by definition the wavelet $\psi$ has rotational symmetry around the $z$-axis we have $h (\vartheta,\varphi) = \Gh (\vartheta)$.
		\item The kernel should be localized in the spatial domain, since we want to pick up local oriented structures.
		\item The real part of the kernel should detect oriented structures and the imaginary part should detect oriented edges. The constructed orientation score is therefore a complex orientation score. For an intuitive preview, see the boxes in Fig. \ref{fig:CakeWavelets}.
\end{enumerate}

\subsection{Construction of Line and Edge Detectors}\label{ssect:CakeDesign}
We now discuss the procedure used to make 3D cake-wavelets before splitting in low and high frequencies according to \eqref{eq:WaveletSplitting} in Section \ref{sssect:LowFrequencyComponents} takes place. Following requirement 4 we only consider polar separable wavelets in the Fourier domain, so that $(\mathcal{F}\psi) (\vomega) = g(\rho) \Gh (\vartheta)$. To satisfy requirement 2 we should choose radial function $g (\rho) = 1$ for $\rho \in [0,\varrho]$. In practice, this function should go to 0 when $\rho$ tends to the Nyquist frequency $\rho_\cN$ to avoid long spatial oscillations. For the radial function $g (\rho)$ we use,
\begin{equation}\label{eq:radialFunctionOfPsi}
g (\rho)=\frac{1}{2}(1 - \erf(\frac{\rho - \varrho}{\sigma_{erf}})),
\end{equation}
with $\erf(z)= \frac{2}{\sqrt{\pi}} \int_{0}^{z} e^{-x^2} \;{\rm d}x$, which is approximately equal to one for largest part of the domain and then smoothly goes to 0 when approaching the Nyquist frequency. We fix the inflection point of this function $g$ and set the fundamental parameter for ball-limitedness to
\begin{equation}\label{eq:GammaNyquist}
	\varrho = \gamma \, \rho_\cN,
\end{equation}
with $0 \ll \gamma < 1$. The steepness of the decay when approaching $\rho_\cN$ is controlled by the parameter $\sigma_{erf}$ which we by default set to $\sigma_{erf} = \frac{1}{3} (\rho_\cN-\varrho)$. The additional splitting in low and high frequencies according to Section \ref{sssect:LowFrequencyComponents} effectively causes a splitting of the radial function, see Fig. \ref{fig:radialFunction}.

In practice the frequencies in our data are limited by the Nyquist frequency (we have $\varrho \approx \rho_\cN$), and because radial function $g$ causes $M_\psi^d$ to become really small close to the Nyquist frequency, reconstruction Eq.\eqref{eq:Reconstruction1Discrete} becomes unstable. We solve this by using approximate reconstruction Eq.\eqref{eq:Reconstruction2Approximation}. Alternatively, one could replace $M_\psi^d$ by $\max(M_\psi^d,\epsilon)$ in Eq. \eqref{eq:Reconstruction1}, with $\epsilon$ small. Both make the reconstruction stable at the cost of not completely reconstructing the highest frequencies which causes a small amount of blurring.

\begin{figure}[htbp]
	\centering
	\includegraphics[width=0.95\columnwidth]{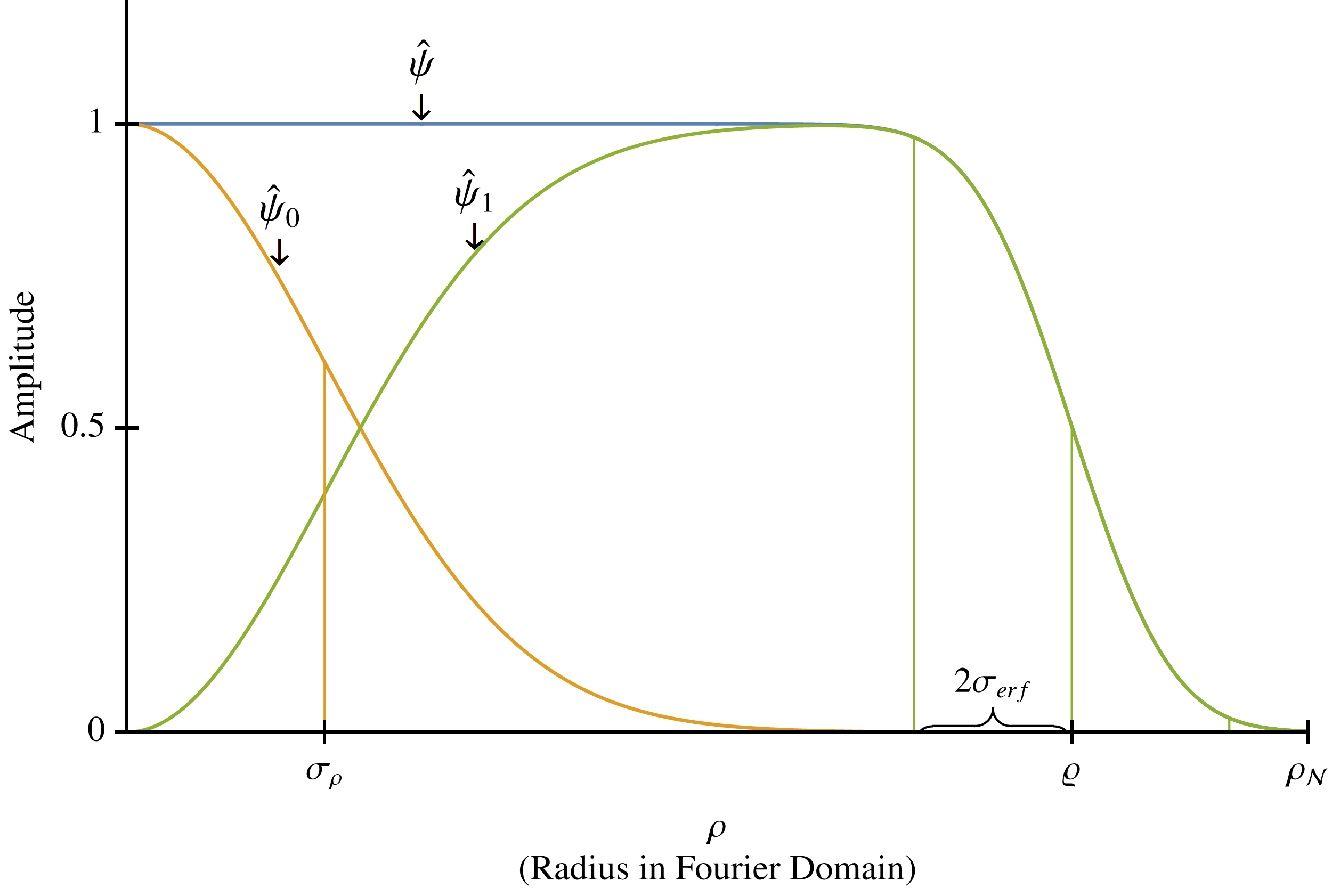}
	\caption{Radial part $g$ of $\hat{\psi}$, see Eq. \eqref{eq:radialFunctionOfPsi} and radial parts $g_0$ and $g_1$ of $\hat{\psi}_0$ and $\hat{\psi}_1$ after splitting according to Section \ref{sssect:LowFrequencyComponents}. The parameter $\gamma$ controls the inflection point of the error function, here $\gamma=0.8$. The steepness of the decay when approaching $\rho_\cN$ is controlled by the parameter $\sigma_{erf}$ with default value $\sigma_{erf} = \frac{1}{3} (\rho_\cN-\varrho)$. At what frequency the splitting of $\hat{\psi}$ in $\hat{\psi}_0$ and $\hat{\psi}_1$ is done is controlled by parameter $\sigma_\rho = \sqrt{2 s_\rho}$, see Eq. \eqref{eq:WaveletSplitting}.}
	\label{fig:radialFunction}
\end{figure}

We now need to find an appropriate angular part $\Gh$ for the cake-wavelets. First, we specify an orientation distribution $A:S^2\rightarrow \R^+$, which determines what orientations the wavelet should measure. To satisfy requirement 3 this function should be a localized spherical window, for which we propose the spherical diffusion kernel \cite{Chung2006}:
\begin{equation}\label{eq:orientationDistribution}
	A(\vn(\vartheta,\varphi)) = G_{s_o}^{S^2}(\vn(\vartheta,\varphi)),
\end{equation}
with $s_o>0$ and $\vn(\vartheta,\varphi) = (\sin \vartheta \cos \varphi,\sin \vartheta \sin \varphi,\cos \vartheta)$. The parameter $s_o$ determines the trade-off between requirements 2 and 3 listed in the beginning of Section \ref{sect:WaveletDFT}, where higher values give a more uniform $M_\psi^d$ at the cost of less directionality.

First consider setting $h=A$ so that $\psi$ has compact support within a convex cone in the Fourier domain. The real part of the corresponding wavelet would, however, be a plate detector and not a line detector (Fig. \ref{fig:cakePieceCreatesPlateDetector}). The imaginary part is already an oriented edge detector, and so we set
\begin{equation}
\begin{split}
	\Gh_ {Im} (\vartheta) &= \frac{1}{2}\left(A(\vn(\vartheta,\varphi)) -A(-\vn(\vartheta,\varphi))\right) \\
	&= \frac{1}{2} \left( G_{s_o}^{S^2}(\vn(\vartheta,\varphi))- G_{s_o}^{S^2}(-\vn(\vartheta,\varphi) \right) ,
\end{split}
\label{eq:Antisymmetrize}
\end{equation}
where the real part of the earlier found wavelet vanishes by anti-symmetrization of the orientation distribution $A$ while the imaginary part is unaffected. As to the construction of $h_ {Re}$, there is the general observation that we detect a structure that is perpendicular to the shape in the Fourier domain, so for line detection we should aim for a plane detector in the Fourier domain. To achieve this we apply the Funk transform to $A$, and we define
\begin{equation}
\begin{split}
	h_{Re}(\vartheta,\varphi) &= F A(\vn(\vartheta,\varphi)) \\
	&= \frac{1}{2\pi} \int_{S_p(\vn (\vartheta,\varphi))} \! A(\vn') \, \d s(\vn'),
\end{split}
\label{eq:CakeWaveletRe}
\end{equation}
where integration is performed over $S_p (\vn)$ denoting the great circle perpendicular to $\vn$. This transformation preserves the symmetry of $A$, so we have $h_{Re} (\vartheta,\varphi) =\Gh_{Re} (\vartheta)$. Thus, we finally set
\begin{equation}
\Gh(\vartheta) =\Gh_{Re}(\vartheta) +\Gh_{Im} (\vartheta).
\end{equation}
For an overview of the transformations see  Fig. \ref{fig:CakeWavelets}.

As discussed before, the additional splitting in low and high frequencies as described in Section \ref{sssect:LowFrequencyComponents} effectively causes a splitting in the radial function. How this affects the coverage of the Fourier domain is visualized in Fig. \ref{fig:CakeWaveletsSplitting}.

\subsection{Efficient Implementation via Spherical Harmonics}
In Subsection \ref{ssect:CakeDesign} we defined the real part and the imaginary part of the wavelets in terms of a given orientation distribution. In order to efficiently implement the various transformations (e.g. Funk transform), and to create the various rotated versions of the wavelet we express our orientation distribution $A$ in a spherical harmonic basis $\{Y_l^m\}$ up to order $L$:
\begin{equation}
A(\vn(\vartheta,\varphi)) =\sum_{l = 0} ^L \sum_{m= -l} ^ {l} a_l^m Y_l ^m(\vartheta,\varphi), \quad L \in \mathbb{N}.
\label{eq:ODSphericalHarmonics}
\end{equation}
The spherical harmonics are given by
\begin{equation}
 	Y_l^m(\vartheta ,\varphi ) = \epsilon \sqrt{\frac{2 l+1}{4 \pi }} \sqrt{\frac{(l-|m|)!}{(l+|m|)!}} e^{i m \varphi } P_l^{|m|}(\cos \vartheta),
 \label{eq:definitionSphericalHarmonics}
\end{equation}
where $P_l^m$ is the associated Legendre function, $\epsilon = (-1)^m$ for $m<0$ and $\epsilon = 1$ for $m>0$ and with integer $l\geq0$ and integer $m$ satisfying $-l \leq m \leq l$.
For the diffusion kernel, which has symmetry around the $z$-axis we only need the spherical harmonics with $m = 0$, and we have the coefficients \cite{Chung2006}:
\begin{equation}\label{eq:coefficientsDifussionKernel}
		a_{l}^{m} = \begin{cases}
				0                               & m\neq 0, \\
				\sqrt{\frac{2 l+1}{4 \pi }} e^{-l(l+1) s_o }    & m=0,
		\end{cases}
\end{equation}
and Eq. \eqref{eq:ODSphericalHarmonics} reduces to
\begin{equation}
	 A(\vn(\vartheta,\varphi)) = \sum \limits_{l=0}^L  a_{l}^{0} Y_l^0(\vartheta,\varphi).
\end{equation}
\subsubsection {Funk Transform}
According to \cite{DescoteauxMagnResonMed2007}, the Funk transform of a spherical harmonic equals
\begin{equation}
\begin{split}
	F Y_l^m  (\vartheta,\varphi) &= \frac{1}{2\pi} \int_{S_p(\vn(\vartheta,\varphi))} \! Y_l^m(\vn') \, \d s(\vn') \\
	&= P_l(0)\,Y_l^m  (\vartheta,\varphi),
\end{split}
\end{equation}
with  $P_l (0) $ the Legendre polynomial of degree $l$ evaluated at $0$. We can therefore apply the Funk transform to a function expressed in a spherical harmonic basis by a simple transformation of the coefficients $a_l^m \rightarrow  P_l(0)\,a_l^m$.

\subsubsection {Anti-Symmetrization}
We have $Y_l^m(\pi-\vartheta,\varphi+\pi)=(-1)^l Y_l^m(\vartheta,\varphi)$. We therefore anti-symmetrize the orientation distribution, see Eq.~\!(\ref{eq:Antisymmetrize}), via $a_l^m\rightarrow \frac{(1-(-1)^l)}{2} a_l ^ m$.

\subsubsection {Making Rotated Wavelets}\label{sssect:rotate}
To make the rotated versions $\psi_\vn$ of wavelet $\psi$ we have to find $h_\vn$ in $\hat{\psi}_\vn (\vomega) =g (\rho)\,h_\vn (\vartheta,\varphi)$. To achieve this we use the steerability of the spherical harmonic basis. Spherical harmonics rotate according to the irreducible representations of the SO(3) group $D_{m,m '} ^ l (\gamma,\beta,\alpha)$ (Wigner-D functions \cite{Griffiths2016}):
\begin{equation}
\!\!\!\! \left(\cR_ {\mR_{\gamma ,\beta ,\alpha}}Y_l^m \right) (\vartheta ,\varphi )= \!\!\!\! \sum _{m'=-l}^l \!\!\! D_{m,m'}^l(\gamma ,\beta ,\alpha )Y_l^{m'}(\vartheta ,\varphi ).
\end{equation}
Here $\alpha, \beta $ and $\gamma$ denote the Euler angles with counterclockwise rotations, where we rely on the convention $\mR_{\gamma ,\beta ,\alpha} =\mR_ {\ve_z,\gamma} \mR_ {\ve_y,\beta} \mR_ {\ve_z,\alpha}$. This gives
\begin{equation}
\begin{split}
	h_\vn (\vartheta,\varphi)&=\left( \cR_{\mR_{\gamma ,\beta ,\alpha }} h \right) (\vartheta ,\varphi ) \\
	& \!\!\!\!\!\!\!\!\!\!\!\!\!\!= \sum _{l=0}^L  \sum _{m=-l}^l \sum _{m'=-l}^l c_l^m D_{m, m'}^l(\gamma ,\beta ,\alpha )Y_l^{m'}(\vartheta ,\varphi ),
\end{split}
\label{eq:Rotationh}
\end{equation}
where $c_l^m$ are the coefficients of $h$ given by
\begin{equation}
	 c_l^m = P_l(0)\, a_l^m + \frac{(1-(-1)^l)}{2} a_l ^ m.
 \end{equation}
Because both anti-symmetrization and Funk transform preserve the rotational symmetry of $A$, we have $h(\vartheta,\varphi) =\sum_{l = 0} ^L  c_l^0 Y_l ^ 0 (\vartheta,\varphi)$, and Eq.~\!(\ref{eq:Rotationh}) reduces to
\begin{equation}
h_\vn (\vartheta,\varphi) = \overset{L }{\sum _{l=0}} \overset{l}{\sum _{m'=-l} } c_l^0 D_{0, m'}^l(\gamma ,\beta ,0 )\,Y_l^{m'}(\vartheta ,\varphi ) .
\end{equation}

The filters from this section are summarized in the following result:
\begin{result}\label{res:Result1}
Let $A:S^2 \rightarrow \R^+$ be a function supported mainly in a sharp convex cone around the $z$-axis and symmetricaly around the $z$-axis and $g$ as radial function of Eq. \eqref{eq:radialFunctionOfPsi}. Then $A$ provides our wavelet $\hat{\psi}$ in the Fourier domain via
\begin{equation}\label{eq:psiHatFromASummary}
	\hat{\psi}(\vomega) = g (\rho) \left( F A (\vn_\omega) + A (\vn_\omega) - A (-\vn_\omega) \right),
\end{equation}
with $\vomega=\rho \, \vn_{\vomega} = \rho \, \vn(\vartheta,\varphi)$. The real part of $\psi$ is a tube detector given by
\begin{equation}
	\Real(\psi) = \cF^{-1} \left(\vomega \mapsto g(\rho)  (F A) (\vn_\omega ) \right).
\end{equation}
The imaginary part of $\psi_{\vn}$ is an edge detector given by
\begin{equation}
	\Imag(\psi) = \frac{1}{i} \cF^{-1} \left(\vomega \mapsto g(\rho)  \left(A (\vn_\omega) - A (-\vn_\omega) \right) \right).
\end{equation}
When expanding the angular part in spherical harmonics up to order $L$ and choosing $A = G_{s_o}^{S^2}$:
\begin{equation}\label{eq:SphericalHarmonicExpansionDIfKernelInResult1}
	\begin{split}
		A(\vn(\vartheta,\varphi)) &= \sum \limits_{l=0}^L  a_{l}^{0} Y_l^0(\vartheta,\varphi), \\
		a_{l}^{0} &= \sqrt{\frac{2 l+1}{4 \pi }} e^{-l(l+1) s_o },
	\end{split}
\end{equation}
we have the following wavelet in the Fourier domain
\begin{equation}\label{eq:cakeWaveletFSummary1}
	\hat{\psi} (\vomega) = g (\rho) \overset{L }{\sum _{l=0}} c_l^0 Y_l^{0}(\vartheta ,\varphi ) ,
\end{equation}
and the coefficients of $A$ and $\hat{\psi}$ relate via
\begin{equation}
	c_l^0 = \left(P_l(0) + \frac{(1-(-1)^l)}{2} \right)a_l^0.
\end{equation}
We obtain rotated versions of our filter via
\begin{equation}\label{eq:cakeWaveletFSummary}
	\hat{\psi}_{\vn} (\vomega) = g (\rho) \overset{L }{\sum _{l=0}} \overset{l}{\sum _{m'=-l} }  c_l^0 D_{0, m'}^l(\gamma ,\beta ,0 )Y_l^{m'}(\vartheta ,\varphi ) ,
\end{equation}
with $\vn=\vn(\beta,\gamma)$.

As we do not have analytical expressions for the spatial wavelets $\psi_\vn$, we sample the filter in the Fourier domain using Eq. \eqref{eq:cakeWaveletFSummary} and apply a DFT afterwards. The wavelet $\psi$ is a proper wavelet with fast reconstruction property (Def.~\!\ref{def:properwavelet}).
\end{result}

\begin{remark}
The heat kernel on $S^2$ is given by $G^{S^2}_{s_o} (\vn(\vartheta,\varphi)) = \sum \limits_{l=0}^\infty  a_{l}^{0} Y_l^0(\vartheta,\varphi)$ with coefficients given by Eq. \eqref{eq:SphericalHarmonicExpansionDIfKernelInResult1}. Because of the exponential decay with respect to $l$ we can describe the diffusion kernel well with the first few coefficients. In all experiments we truncate at smallest $L$ such that $a_L^0/a_0^0<10^{-3}$ (e.g. $L = 21$ for $s_o = \frac{1}{2} (0.25)^2$).
\end{remark}

\begin{figure*}[p]
\centering
	\includegraphics[width=0.9 \hsize]{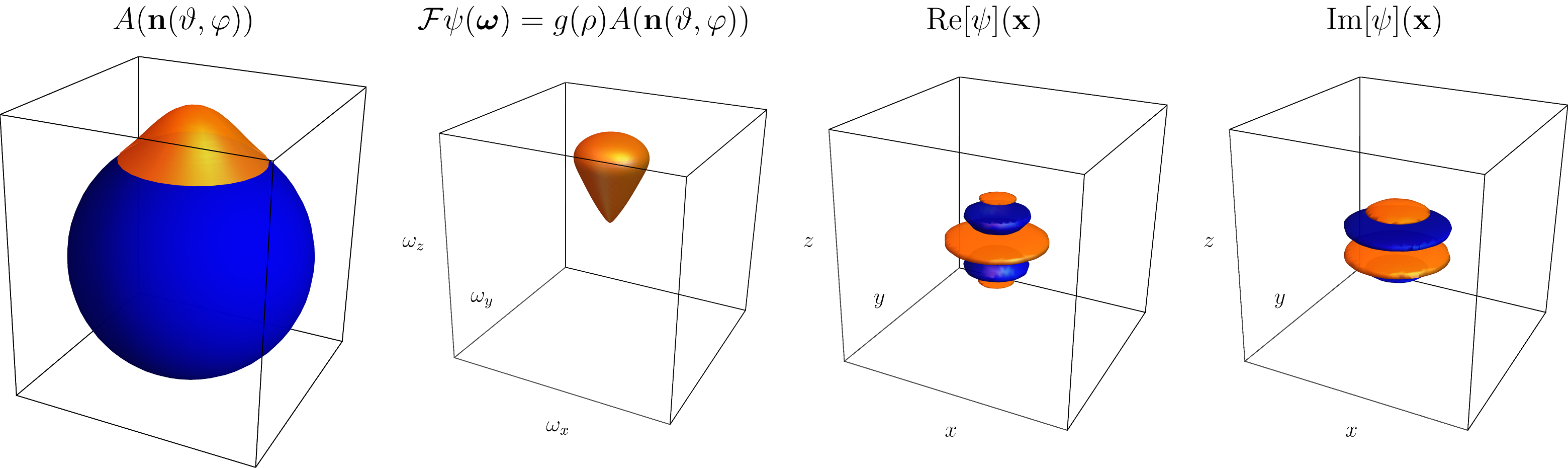}
	\caption{When directly setting orientation distribution $A$ of Eq. \eqref{eq:Reconstruction2Approximation2} as angular part of the wavelet $h$ we construct plate detectors. From left to right: Orientation distribution $A$, wavelet in the Fourier domain, the plate detector (real part) and the edge detector (imaginary part). Orange: Positive iso-contour. Blue: Negative iso-contour. Parameters used: $s_o=\frac{1}{2}(0.25)^2,\sigma_{\erf}=3,\gamma=0.85$ and evaluated on a grid of 51x51x51 pixels.}
	\label{fig:cakePieceCreatesPlateDetector}
\end{figure*}

\begin{figure*}[p]
\centering
\includegraphics[width=0.75 \hsize]{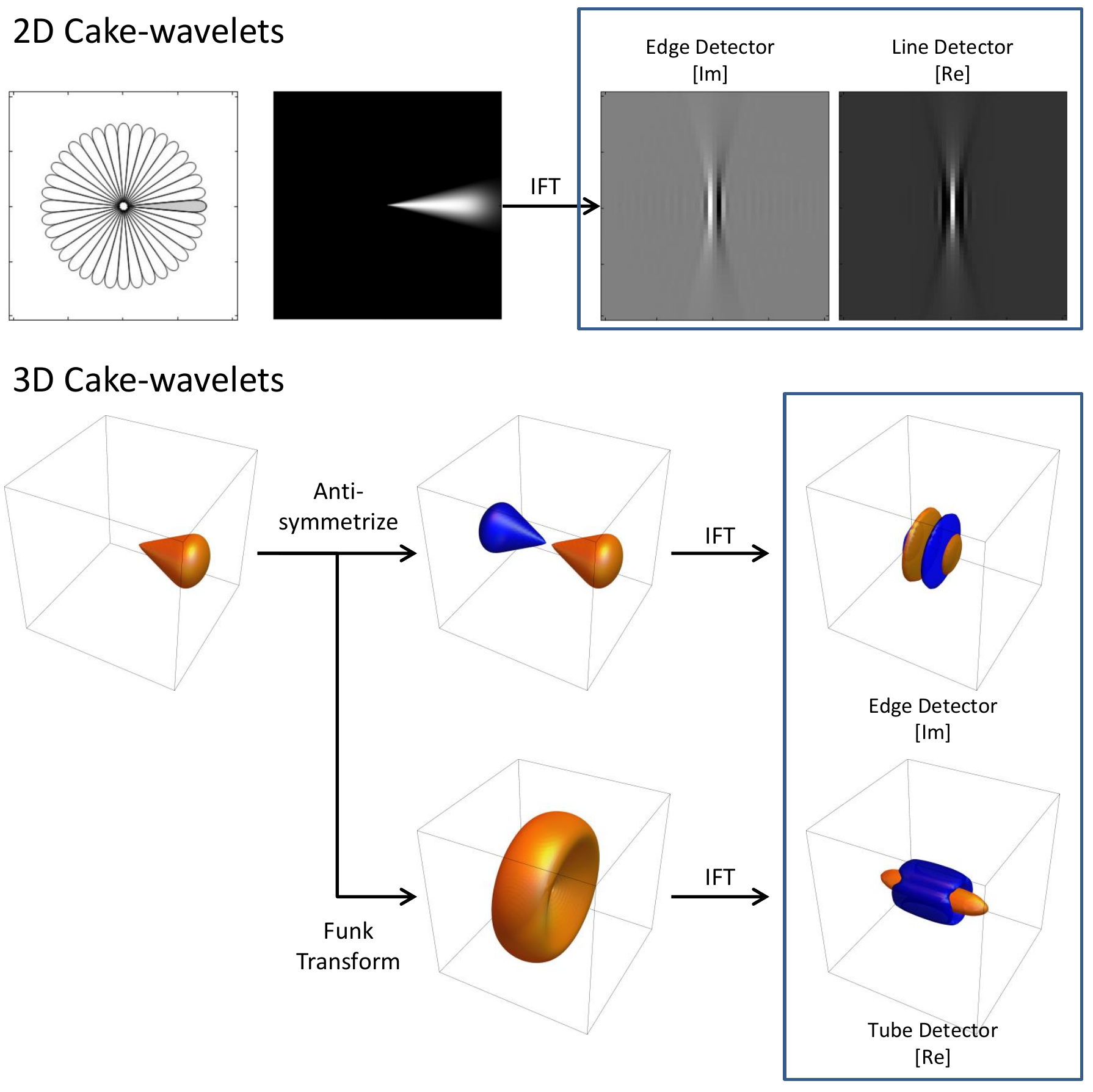}%
\caption{Cake-Wavelets. \emph{Top} 2D cake-wavelets. From left to right: Illustration of the Fourier domain coverage, the wavelet in the Fourier domain and the real and imaginary part of the wavelet in the spatial domain \cite{BekkersJMathImagingVis2014}. \emph{Bottom} 3D cake-wavelets. Overview of the transformations used to construct the wavelets from a given orientation distribution. Upper part: The wavelet according to Eq.~\!(\ref{eq:Antisymmetrize}). Lower part: The wavelet according to Eq.~\!(\ref{eq:CakeWaveletRe}). IFT: Inverse Fourier Transform. Parameters used: $s_o=\frac{1}{2}(0.25)^2,\gamma=0.85$ and evaluated on a grid of 81x81x81 pixels.}%
\label{fig:CakeWavelets}%
\end{figure*}

\begin{figure*}[ht]
\centering
\includegraphics[width=0.8 \hsize]{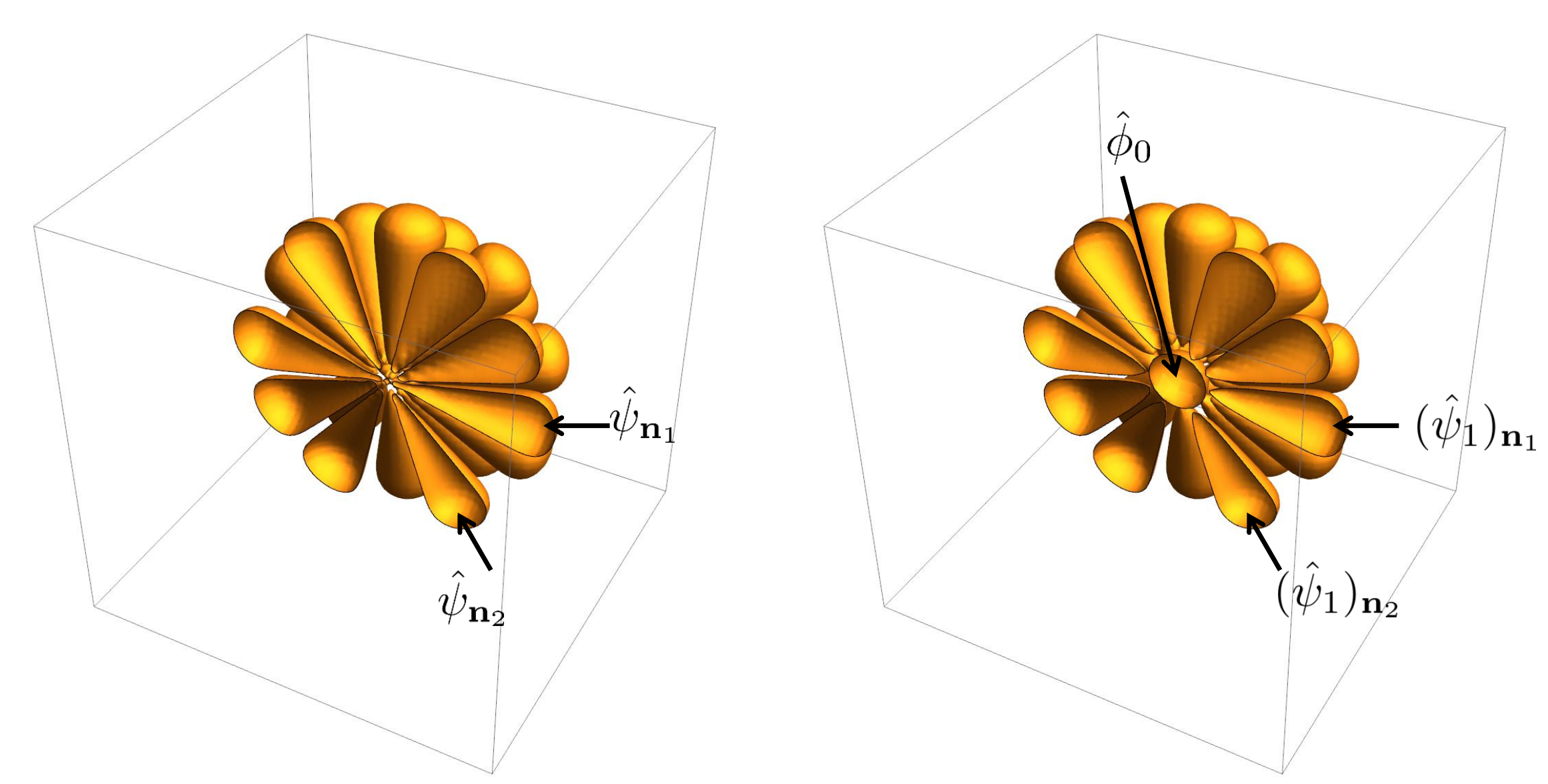}%
\caption{Coverage of the Fourier domain before and after splitting according to Section \ref{sssect:LowFrequencyComponents}. \emph{Left:} The different wavelets cover the Fourier domain. The "sharp" parts when the cones reach the center however cause the filter to be non-localized, which was solved in earlier works by applying a spatial window after filter construction. \emph{Right:} By splitting the filter in lower and higher frequencies we solve this problem. In the figure we show $g(\rho)A(\vn(\vartheta,\varphi))$ for the different filters, before applying the Funk transform to the orientation distribution $A$.}%
\label{fig:CakeWaveletsSplitting}%
\end{figure*}

\subsection{Stability of the Discrete Transformation With Fast Reconstruction for Filters of Result \ref{res:Result1}}
To make a fast reconstruction by summation possible (requirement \ref{requirement:ReconstructionSummation}) we need a proper wavelet set with the fast reconstruction property (Definition \ref{def:properwaveletSet}) with $N_\psi^d \approx 1$. We now focus on finding bounds for $N_\psi^d$ such that we can choose our parameters in a deliberate way.

\begin{proposition}\label{proposition:FastReconstruction}
Let $\{\psi_{\vn(\beta_i,\gamma_i)} \,|\, i=1 \dots N_o\}$ be a set of wavelets constructed via the procedure in Result~\ref{res:Result1}. Then we have bounds on $N_{\psi}^d$ given by
\begin{multline} \label{eq:boundsNPsiD}
1-\sum _{l=1}^L \| \vd_{l} \| \sqrt{\frac{2l+1}{4 \pi}} \leq N_\psi^d(\vomega) \leq 1 + \sum _{l=1}^L \| \vd_{l} \| \sqrt{\frac{2l+1}{4 \pi}},\\
\textrm{for all } \vomega \in B_{\rho_0}
\end{multline}
with $\vd_l=(d_{l}^{m})_{m=-l}^l$, $d_{l}^{m}= \sum \limits_{i=1}^{N_o} c_{l}^{0} \cdot \Delta_i \cdot \mathcal{D}^{l}_{0,m}(0,\beta_i,\gamma_i)$ and here the norm is the $\ell_2$-norm on $\mathbb{C}^{2l+1}$.
\end{proposition}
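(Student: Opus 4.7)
The plan is to substitute the spherical-harmonic expansion \eqref{eq:cakeWaveletFSummary} of each $\hat\psi_{\vn_i}(\vomega)$ directly into the definition \eqref{eq:NpsiDiscrete} of $N_\psi^d$. This gives
\begin{equation*}
N_\psi^d(\vomega) = g(\rho)\sum_{l=0}^{L}\sum_{m=-l}^{l}\left(\sum_{i=1}^{N_o} c_l^0\,\Delta_i\,D^{l}_{0,m}(\gamma_i,\beta_i,0)\right)Y_l^{m}(\vartheta,\varphi),
\end{equation*}
where the parenthesized coefficient is, up to the convention on Euler angles used in the proposition's statement of $\vd_l$, precisely $d_l^m$. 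The reordering uses that $L$ is finite and that $\rho$, $\vartheta$, $\varphi$ are fixed when doing the discrete sum over $i$.

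Next I would isolate the $l=0$ contribution. Since $Y_0^0=1/\sqrt{4\pi}$ and $D^{0}_{0,0}\equiv 1$, one checks $a_0^0=1/\sqrt{4\pi}$, $P_0(0)=1$, $(1-(-1)^0)/2=0$, hence $c_0^0=1/\sqrt{4\pi}$ and $d_0^0 Y_0^0 = (1/\sqrt{4\pi})\cdot(\sum_i\Delta_i)\cdot(1/\sqrt{4\pi}) = 1$ using $\sum_i\Delta_i = 4\pi$. Combined with the fact that $g(\rho)\equiv 1$ on $B_{\rho_0}$ (since $\rho_0<\varrho$ and $g$ equals $1$ on $[0,\varrho]$ up to the standard error-function tolerance already built into Definition~\ref{def:properwaveletSet}), the zero-order term contributes exactly $1$, so it remains to bound the tail
\begin{equation*}
N_\psi^d(\vomega)-1 \;=\; \sum_{l=1}^{L}\sum_{m=-l}^{l} d_l^{m}\,Y_l^{m}(\vartheta,\varphi).
\end{equation*}

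For the tail I would apply Cauchy--Schwarz in the $m$-index at each fixed $l$, followed by the spherical-harmonic addition theorem
\begin{equation*}
\sum_{m=-l}^{l}|Y_l^{m}(\vartheta,\varphi)|^{2} \;=\; \frac{2l+1}{4\pi},
\end{equation*}
to get $\bigl|\sum_{m} d_l^{m} Y_l^{m}(\vartheta,\varphi)\bigr|\leq \|\vd_l\|\sqrt{(2l+1)/(4\pi)}$. Summing over $l=1,\dots,L$ with the triangle inequality then yields
\begin{equation*}
\left| N_\psi^d(\vomega)-1 \right| \;\leq\; \sum_{l=1}^{L}\|\vd_l\|\sqrt{\frac{2l+1}{4\pi}},
\end{equation*}
which is \eqref{eq:boundsNPsiD}.

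The main obstacle is bookkeeping rather than deep analysis: verifying the $l=0$ identity $d_0^0 Y_0^0 = 1$ exactly (this is what makes the bounds centered at $1$, not at some spurious constant), ensuring that the Euler-angle convention used in $\vd_l$ matches the one in \eqref{eq:cakeWaveletFSummary} so that $\sum_i\hat\psi_{\vn_i}\Delta_i$ really reorders into $g(\rho)\sum_{l,m} d_l^{m} Y_l^{m}$, and being careful about the $g(\rho)$ factor on the ball $B_{\rho_0}$ (which is clean precisely because the definition of ``proper wavelet set with fast reconstruction'' restricts attention to a ball strictly inside the support of $g$). All other steps---the addition theorem and Cauchy--Schwarz---are standard.
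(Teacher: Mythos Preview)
Your proposal is correct and follows essentially the same approach as the paper: expand $N_\psi^d$ via \eqref{eq:cakeWaveletFSummary} into $g(\rho)\sum_{l,m} d_l^m Y_l^m$, use $g\equiv 1$ on $B_{\rho_0}$, verify the $l=0$ term equals $1$, and then bound the tail with Cauchy--Schwarz in $m$ together with the addition theorem $\sum_m |Y_l^m|^2=(2l+1)/(4\pi)$. Your explicit computation of $d_0^0 Y_0^0=1$ via $c_0^0=1/\sqrt{4\pi}$ and $\sum_i\Delta_i=4\pi$ is in fact more detailed than the paper, which simply asserts that identity; and your caveat about the Euler-angle ordering in $\vd_l$ versus \eqref{eq:cakeWaveletFSummary} is well taken, since the paper itself silently switches conventions between the two.
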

\begin{proof}
First we expand function $N_\psi^d$ in spherical harmonics:
\begin{align}
	N_\psi^d &(\vomega) =  \sum_{i=1}^{N_o} \cFRRR [\psi_{\vn_i}](\vomega) \Delta_i
	=  g (\rho)  \sum_{i=1}^{N_o} h_{\vn_i} (\vartheta,\varphi) \Delta_i \notag\\
	&= g (\rho)   \overset{L }{\sum _{l=0}} \overset{l}{\sum _{m'=-l} } \underbrace{ \sum_{i=1}^{N_o} c_l^0 D_{0, m'}^l(0 ,\beta_i ,\gamma_i ) \Delta_i }_{d_l^{m'}  } Y_l^{m'}\!(\vartheta ,\varphi ) \notag\\ \notag\\[-9mm]
	&=  g (\rho)   \overset{L }{\sum _{l=0}} \overset{l}{\sum _{m'=-l} } d_l^{m'} Y_l^{m'}(\vartheta ,\varphi )
\end{align}
We have $g (\rho) = 1$ for $\rho=\|\vomega\| \leq \rho_0$, but we still need to quantify the angular part.
We define
$\mathbf{Y}_{l}^N= (Y_{l}^{-l},Y_{l}^{-l+1},\dots,Y_{l}^{l-1},Y_{l}^{l})$, so that
\begin{equation}
	\begin{split}
		\sum _{l=0}^L \sum _{m'=-l}^l d_l^{m'} Y_l^{m'} & (\vartheta ,\varphi ) = \sum _{l=0}^L \vd_{l} \cdot \mathbf{Y}_l (\vartheta ,\varphi ) \\
		&\!\!\!= Y_0^0 (\vartheta ,\varphi ) d_0^0 +  \sum _{l=1}^L \vd_{l} \cdot \mathbf{Y}_l (\vartheta ,\varphi ) \\
		&\!\!\!= 1+  \sum _{l=1}^L \vd_{l} \cdot \mathbf{Y}_l (\vartheta ,\varphi )
	\end{split}
\end{equation}
This varying component should remain small. We use the Cauchy-Schwarz inequality for each order l:
\begin{multline}
	 \left| \sum_{l=1}^L \vd_{l} \cdot \mathbf{Y}_l (\vartheta ,\varphi ) \right|  \leq   \sum _{l=1}^L \left| \vd_{l} \cdot \mathbf{Y}_l (\vartheta ,\varphi ) \right|  \\
	  \leq   \sum _{l=1}^L \| \vd_{l} \| \| \mathbf{Y}_l (\vartheta ,\varphi ) \|  = \sum _{l=1}^L \| \vd_{l} \| \sqrt{\frac{2l+1}{4 \pi}},
\end{multline}
 from which (\ref{eq:boundsNPsiD}) follows. $\hfill \Box$
\end{proof}
 See Fig. \ref{fig:figureBounds} for visual inspection of bounds of $M_\psi^d$ and $N_\psi^d$, and numerical results for the bounds of $N_\psi^d$.

\begin{corollary}
	Given our analytical bounds \eqref{eq:boundsNPsiD} from Proposition \ref{proposition:FastReconstruction} and $N_o =42$, we can guarantee that our set of wavelets from Result \ref{res:Result1} is a proper wavelet set with fast reconstruction property according to Def. \ref{def:properwaveletSet} with $\epsilon = 0.05$ when choosing parameter $s_0 \gtrapprox  0.04$. In practice we have a proper wavelet set with fast reconstruction property already for smaller values of $s_o$ (see Fig. \ref{fig:figureBounds}).
\end{corollary}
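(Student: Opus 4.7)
The plan is to apply directly the upper tail of the two-sided estimate in Proposition~\ref{proposition:FastReconstruction} and show that
\[
S(s_o) := \sum_{l=1}^L \|\vd_l\| \sqrt{\tfrac{2l+1}{4\pi}} \leq \varepsilon = 0.05
\]
whenever $s_o \gtrapprox 0.04$. Once this is established, the lower bound gives $N_\psi^d(\vomega)\geq 1-\varepsilon$ and the upper bound gives $N_\psi^d(\vomega)\leq 1+\varepsilon$ for all $\vomega \in B_{\rho_0}$, which together with the symmetry around $\ve_z$ (built into Result~\ref{res:Result1}) and the admissibility bounds on $M_\psi^d$ (verified in the same figure referenced by the corollary) gives all three items of Definition~\ref{def:properwaveletSet}.

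The key structural observation is that $c_l^0 = \bigl(P_l(0) + \tfrac{1-(-1)^l}{2}\bigr)\, a_l^0$ with $a_l^0 = \sqrt{(2l+1)/(4\pi)}\, e^{-l(l+1)s_o}$, so the whole series $S(s_o)$ inherits the Gaussian-in-$l$ damping $e^{-l(l+1)s_o}$. First I would split $S(s_o) = S_{\leq L_*}(s_o) + S_{> L_*}(s_o)$ at some moderate cutoff such as $L_* = 10$. For the tail I would bound
\[
\|\vd_l\| \leq |c_l^0|\,\sqrt{2l+1}\,\max_m\Bigl|\sum_{i=1}^{N_o} \Delta_i\, \mathcal{D}^l_{0,m}(0,\beta_i,\gamma_i)\Bigr|
\leq 4\pi\, (2l+1)^{1/2}\, |c_l^0|,
\]
using $|\mathcal{D}^l_{0,m}|\leq 1$ and $\sum_i \Delta_i = 4\pi$, so $S_{>L_*}(s_o)$ is controlled by a polynomial in $l$ against $e^{-l(l+1)s_o}$; for $s_o \geq 0.04$ and $L_*=10$ this tail is numerically of order $10^{-4}$ and therefore negligible compared to $\varepsilon = 0.05$.

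For the remaining low-order terms I would evaluate $\vd_l$ directly with the 42 prescribed orientations $(\beta_i,\gamma_i)$ coming from the electrostatic-repulsion sampling. Since $\mathcal{D}^l_{0,m'}(0,\beta_i,\gamma_i)$ reduces (up to a $\gamma_i$-phase) to a normalized associated Legendre value at $\cos\beta_i$, each $\vd_l$ is a finite complex-vector sum that can be tabulated. Together with $P_l(0)=0$ at odd $l$ and the explicit $a_l^0$ this gives $S_{\leq L_*}(s_o)$ as an explicit monotone decreasing function of $s_o$; solving $S_{\leq L_*}(s_o) + S_{>L_*}(s_o) = 0.05$ yields the stated threshold $s_o \approx 0.04$.

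The main obstacle is that the numerical step is not truly analytic: the 42 orientations are only given implicitly, so the threshold $0.04$ is obtained by plugging the tabulated angles into the Wigner-$D$/Legendre formulas rather than by a closed-form estimate. This is also why the corollary is phrased with $\gtrapprox$ and why the practical threshold is smaller than the one certified here: the Cauchy--Schwarz step in the proof of Proposition~\ref{proposition:FastReconstruction} is lossy (it ignores cancellation across $m$), so the rigorously guaranteed bound for $s_o$ is strictly larger than the empirical one visible in Fig.~\ref{fig:figureBounds}.
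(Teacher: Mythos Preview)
Your overall plan is the same as the paper's: the corollary is not given a separate analytic proof there, it is obtained by plugging the $N_o=42$ electrostatic-repulsion angles into the bound of Proposition~\ref{proposition:FastReconstruction}, computing $S(s_o)=\sum_{l\geq 1}\|\vd_l\|\sqrt{(2l+1)/(4\pi)}$ numerically as a function of $s_o$, and reading off from Fig.~\ref{fig:figureBounds} where the curve drops below $\varepsilon=0.05$. Your final paragraph already identifies this correctly.

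Where your write-up goes wrong is the quantitative tail claim. With the crude estimate $\|\vd_l\|\leq 4\pi\sqrt{2l+1}\,|c_l^0|$ that you display, each term of $S(s_o)$ is bounded by $(2l+1)^{3/2}e^{-l(l+1)s_o}$ (using $|c_l^0|\leq|a_l^0|$). At $s_o=0.04$ the term $l=11$ alone already gives $23^{3/2}e^{-5.28}\approx 0.56$, so the tail beyond $L_*=10$ is of order $1$, not $10^{-4}$. The reason the true $\|\vd_l\|$ are tiny is the near-orthogonality cancellation in $\sum_i\Delta_i\,\mathcal{D}^l_{0,m}(0,\beta_i,\gamma_i)$ (a spherical quadrature of $Y_l^m$, which should vanish), and your bound $|\mathcal{D}^l_{0,m}|\leq 1$ discards exactly that cancellation. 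So your split into $S_{\leq L_*}+S_{>L_*}$ with an analytic tail does not work at the stated cutoff; you have to evaluate \emph{all} of the $\vd_l$ numerically with the $42$ angles, which is precisely what the paper does. Once you drop the tail-split and simply compute the finite sum, your argument coincides with the paper's.
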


\begin{figure*}[ht]
	\centering
	\includegraphics[width= 0.42\hsize]{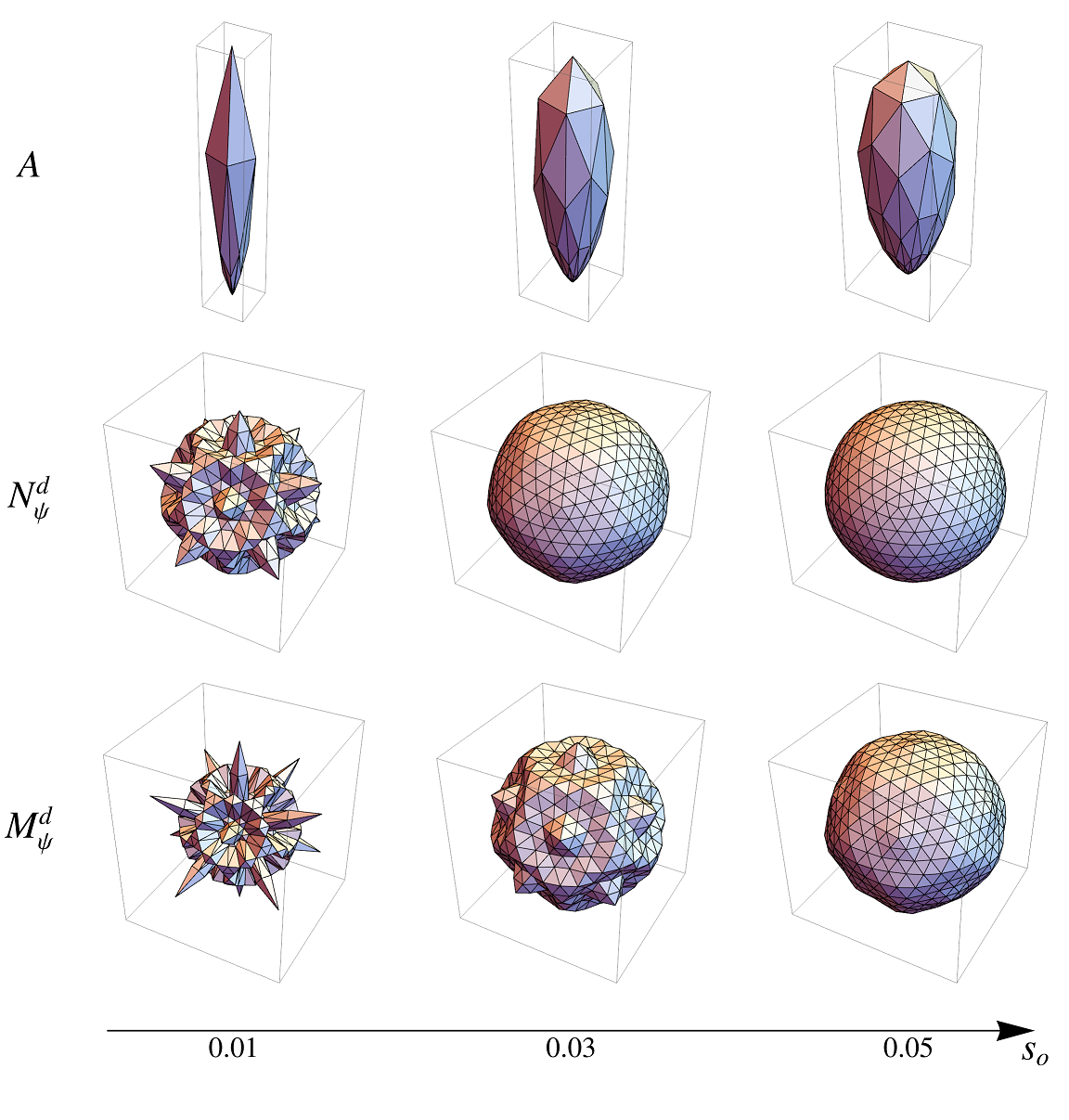}
	\includegraphics[width= 0.53\hsize]{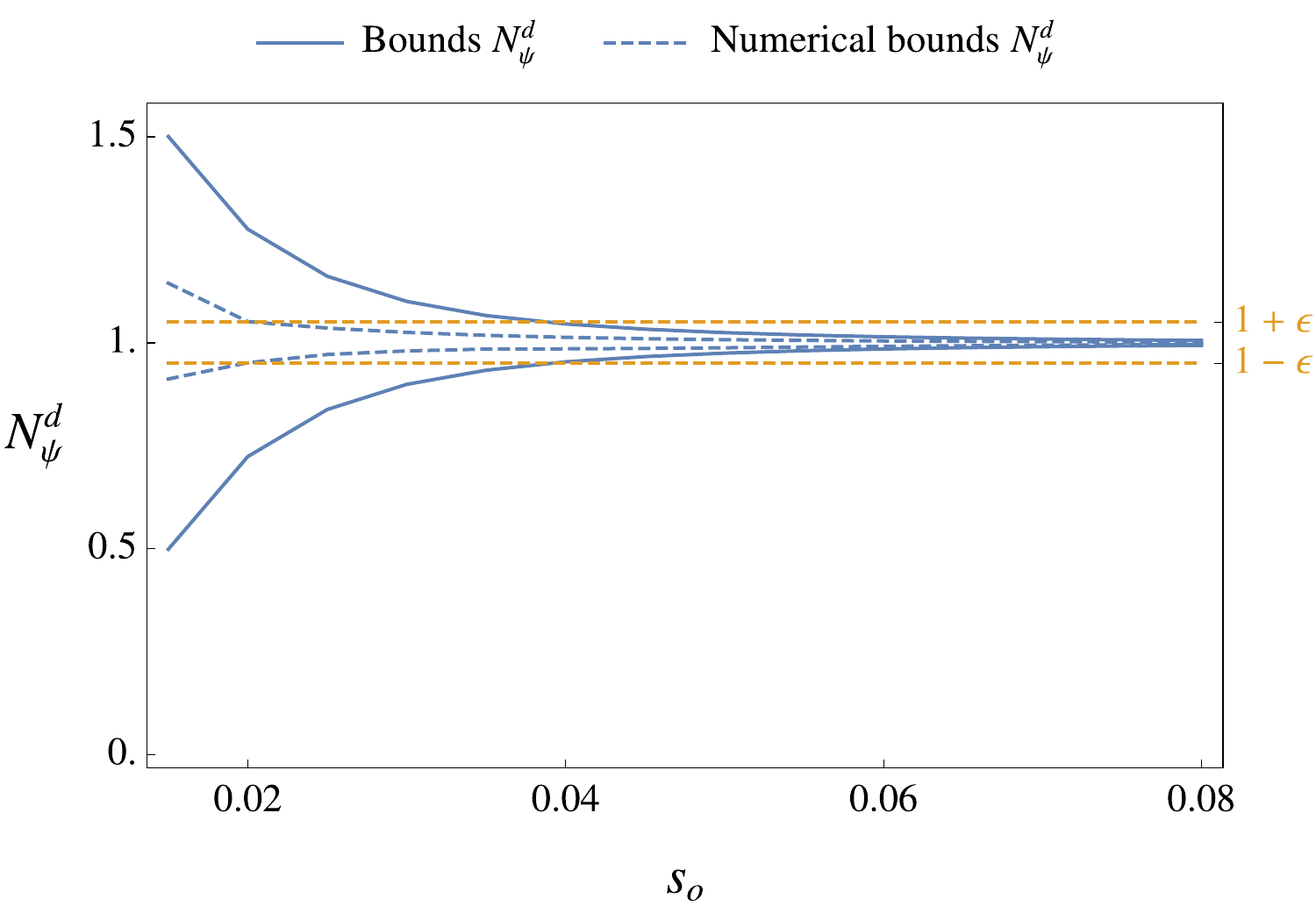}
	\caption{Inspection of the stability of the transformation for different values of $s_o$ given a orientation distribution $A = G_{s_o}^{S^2}$ and for $N_o=42$. \emph{Left:} Spherical plot of $A$ and the angular part of polar separable function $N_\psi^d$ and $M_\psi^d$.  Orientation coverage is more uniform as the plots for $N_\psi^d$ and $M_\psi^d$ look more like a ball. \emph{Right:} The upper and lower bounds of $N_\psi^d$. Comparison of the bounds according to Eq. \eqref{eq:boundsNPsiD} (filled blue line) and numerical results (dashed blue line) of the bounds by a very fine sampling of the sphere ($\approx 500$ orientations). Furthermore, we show $1+\epsilon$ and $1-\epsilon$ (orange dashed lines) for $\epsilon=0.05$.}
	\label{fig:figureBounds}
\end{figure*}

\section{Wavelet Design with Continuous Fourier Transform and Analytical Description in the Spatial Domain}\label{sect:WaveletZernike}
In the previous section we described wavelets which were analytical in the Fourier domain, and were sampled and inverse discrete Fourier transformed to find the wavelets in the spatial domain.

To get more control on the wavelet properties in both the spatial and Fourier domain it would be convenient to have an analytical description of the wavelets in both domains. This could be achieved by expressing the wavelets in a basis for which we have analytical expressions for the Fourier transform. We will now discuss 2 such options for the basis and describe filters expressed in them.
\subsection{A review on Expansion in the Harmonic Oscillator Basis}
The first basis in which we could expand our wavelets are the eigenfunctions of the harmonic oscillator $H=\|\vx\|^2-\Delta$, which was also studied in \cite{ThesisDuits,ThesisAlmsick}. We will quickly review this work, show the problems which were encountered when using this basis, before moving onto an alternative basis in the next section which aims to solve these problems.

When using the eigenfunctions of the harmonic oscillator as a basis, the idea is that operator $H$ and the Fourier transform commute ($\cF \circ H= H \circ \cF$) and eigenfunctions of $H$ are also eigenfunctions of $\cF$. We then expand our wavelets in these eigenfunctions restricting ourselves to eigenfunctions which are symmetric around the $z$-axis: the spherical harmonics with $m=0$. The wavelet is then given by
\begin{equation} \label{eq:WaveletHarmonicOscillator}
\begin{split}
		\psi(\vx)&=\sum_{n=0}^{\infty} \sum_{l=0}^{L} \alpha_{l}^n g_n^l(r) \, Y_l^0(\theta,\phi), \\
		\hat{\psi}(\vomega)&=\sum_{n=0}^{\infty} \sum_{l=0}^{L} \alpha_{l}^n (-1)^{n+l} i^l g_n^l(\rho) \, Y_l^0(\vartheta,\varphi),
\end{split}
\end{equation}
with $Y_l^m$ the spherical harmonics, $(r,\theta,\phi)$ and $(\rho,\vartheta,\varphi)$ spherical coordinates for $\vx$ and $\vomega$ respectively, i.e.,
\begin{equation} \label{eq:ballcoordinates}
\begin{split}
	\vx &= (r  \sin \theta \cos \phi, r \sin \theta \sin \phi, r \cos \theta ), \\
\vomega &= (\rho \sin \vartheta \cos \varphi,\rho \sin \vartheta \sin \varphi,\rho \cos \vartheta),
\end{split}
\end{equation}
and $g_n^l$ given by
\begin{equation} \label{eq:WaveletHarmonicOscillator2}
	\begin{split}
		g_n^l(\rho)   &=\frac{1}{\rho} E_{n}^{l+\frac{1}{2}}(\rho), \\
		E_{n}^{\nu}(\rho) &= \left(\frac{2 (n !)}{\Gamma(n+\nu+1)}\right)^{\frac{1}{2}} \rho^{\nu+\frac{1}{2}} e^{-\frac{\rho^2}{2}} L_{n}^{(\nu)}(\rho^2),
	\end{split}
\end{equation}
where $L_{n}^{(\nu)}(\rho)$ is the generalized Laguerre polynomial. We then choose the case with least radial oscillations $\alpha_l^n=\alpha^l \delta_n^0$. If we then choose
\begin{equation}
		\alpha_l=\sqrt{\frac{\Gamma(l+\frac{3}{2})}{\Gamma(l+1)}},
\end{equation}
we have that $M_\psi(\vomega)$ approximates 1 for all $\vomega\in \R^3$ in the Fourier domain as $L \rightarrow \infty$ and we get the following wavelet\footnote{The series in \eqref{eq:harmonicOscillatorWavelet} converges point-wise but not in $\LL_2$-sense. So for taking the limit $L \rightarrow \infty$ one must rely on a distributional wavelet transform, since $\psi_H^{L \rightarrow \infty} \notin \LL_2(\R^3)$ but $\psi_H^{L \rightarrow \infty} \in H_{-4}(\R^3)$, i.e., it is contained in the dual of the 4$^{th}$ order Sobolev space. Such technicalities do not occur in the Zernike basis as we will see later in Section \ref{sect:WaveletZernike}.} \cite{DuitsPRIA2007}:
\begin{equation}
\begin{split}
		\psi_H(\vx) = \sum_{l=0}^{L} \frac{1}{\sqrt{l!}} r^l e^{-\frac{r^2}{2}} Y_l^0(\theta,\phi).
\end{split}
\label{eq:harmonicOscillatorWavelet}
\end{equation}
For this wavelet we have an analytical description in both spatial and Fourier domain. This wavelet, however, has some problems: 1) the wavelet is not localized and has long spatial oscillations. 2) the wavelet is not centered. Both problems can be observed in Fig. \ref{fig:HarmonicOscillator}.

\begin{figure}[tb]
	\centering
	\mbox{
	\begin{minipage}{0.49\hsize}
		\centering $\psi_H$ \\
		\includegraphics[width=1\textwidth]{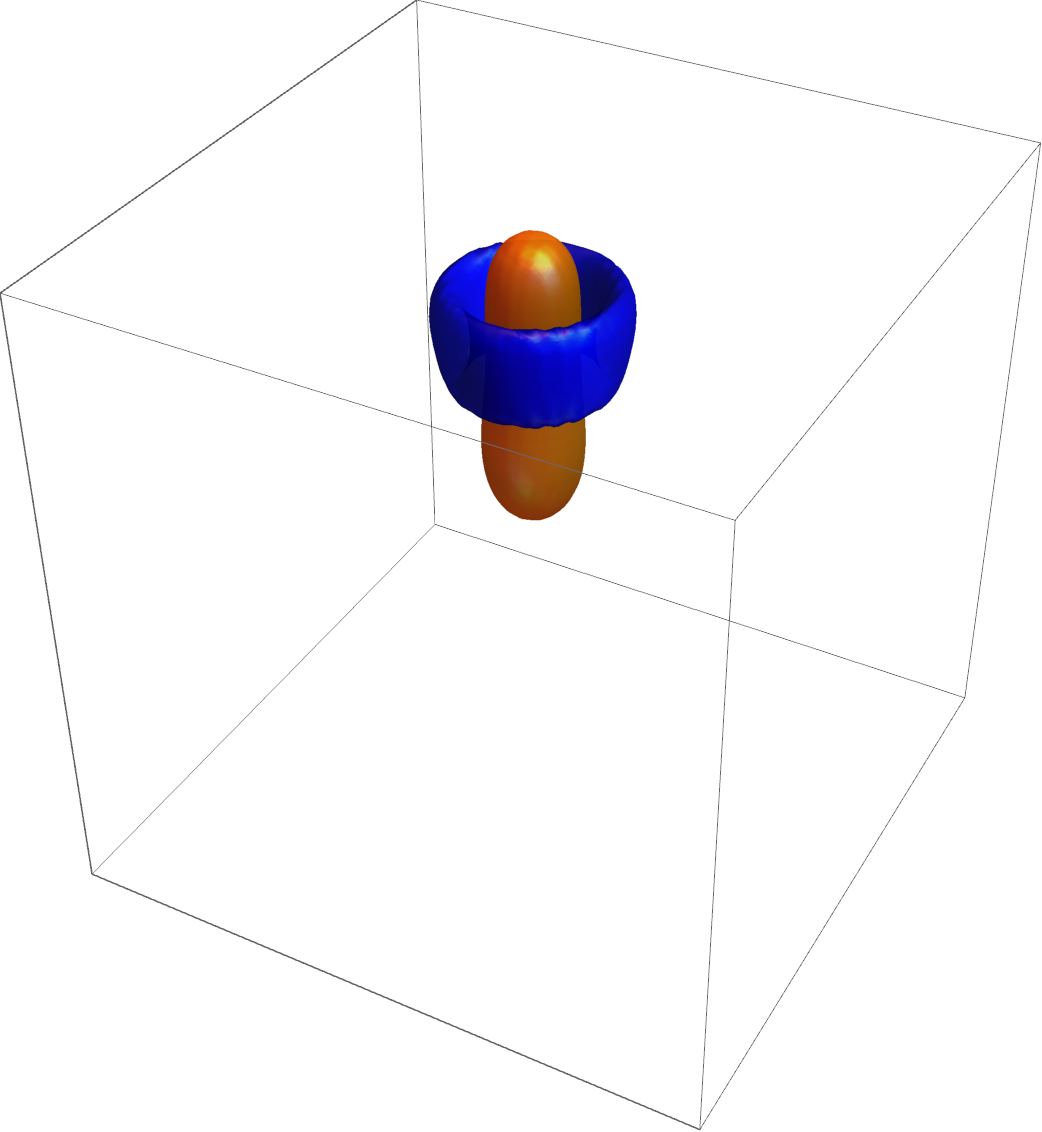}
	\end{minipage}
	\begin{minipage}{0.49\hsize}
		\centering $\psi_H(0,\cdot,\cdot)$ \\
		\includegraphics[width=1\textwidth]{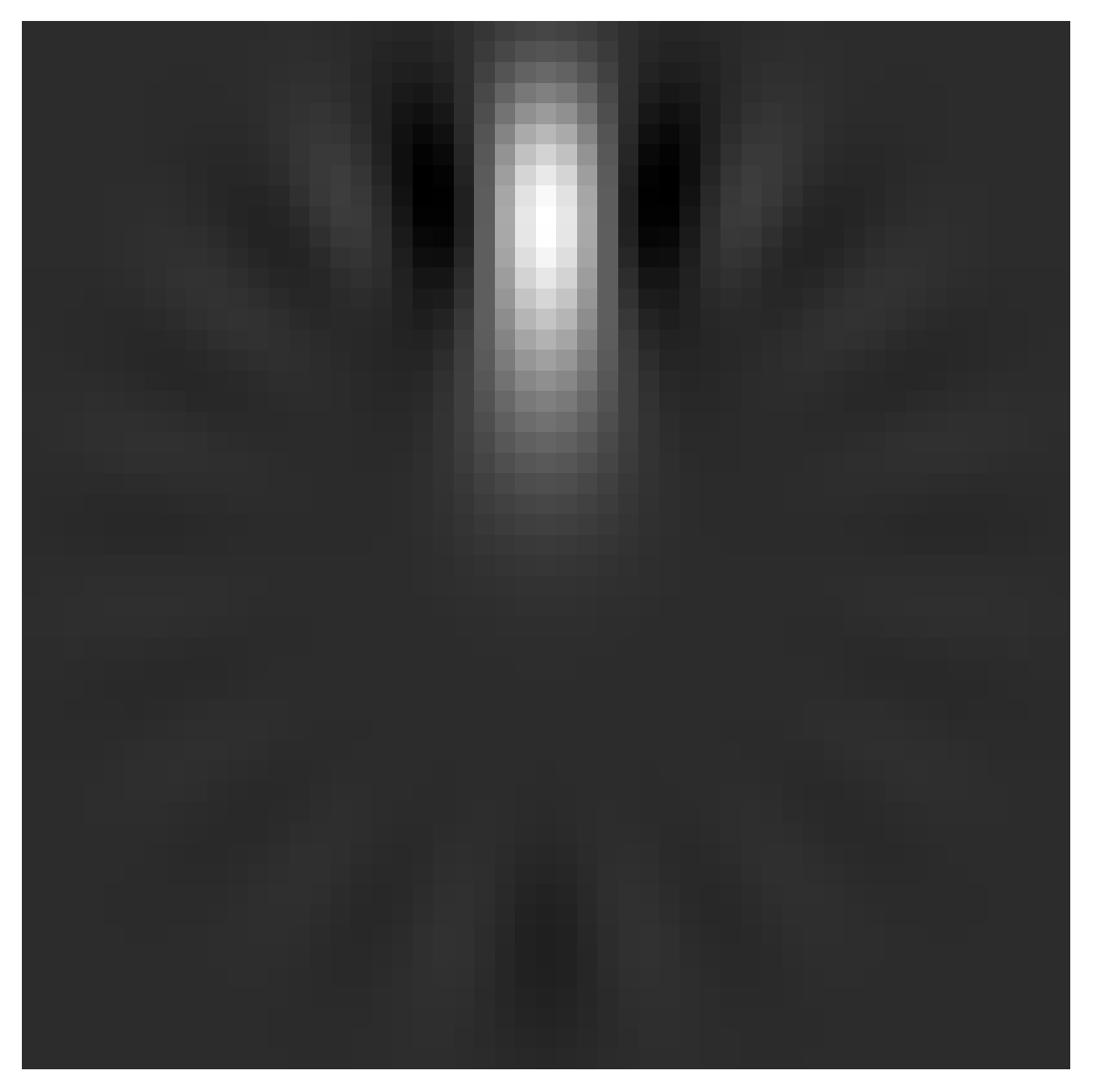}
	\end{minipage}
	}
	\caption{Wavelet expanded in the harmonic oscillator basis according to Eq. \eqref{eq:harmonicOscillatorWavelet} for $L=15$. \emph{Left} 3D visualization showing one negative (blue) and one positive (orange) isocontour. \emph{Right} Cross section of the wavelet at $x=0$.}
	\label{fig:HarmonicOscillator}
\end{figure}

\subsection{Expansion in the Zernike Basis}
The wavelets from the previous subsection have some unwanted properties such as poor spatial localization (long oscillations) and the fact that the wavelets maximum does not lie at the wavelets center. A possible explanation is that the basis used is orthogonal on the full $\LL_2(\R^3)$ space and not limited to the ball in the Fourier domain, and truncation of this basis at the Nyquist frequency could lead to oscillations. An alternative basis for the unit-ball is the Zernike basis which we can scale to be a basis in the Fourier domain for ball-limited images $f \in \LL_{2}^{\varrho}(\R^3)$, recall Eq. \eqref{eq:ballLimitedData}.

The 2D Zernike basis is often used in applications as optics, lithography and acoustics \cite{AartsJAcoustSocAm2009,Brunner1997,Born1999}, since efficient recursions can be used for calculating the basis functions and analytic formulas exist for many transformations among which the Fourier transform. The basis is therefore highly suitable for problems where the shape needs to be controlled in both domains such as in aberration retrieval where the basis is used in an inverse problem with an unknown aberration in the Fourier domain and optimization functional in the spatial domain.

Orthogonal polynomials of several variables on the unit-ball were considered by A. Louis in \cite{LouisSIAMJMathAnal1984} in the context of inversion of the Radon transform for tomographic applications. For a modern treatment of orthogonal polynomials on the unit-ball see \cite[Ch. 4]{Dunkl2014}. Here we will use the generalized Zernike functions \cite{Janssenarxiv2015}, which vanish to a prescribed degree at the boundary of the unit-ball, with explicit expansion results for for particular functions supported by the unit-ball. Since this basis is orthogonal on the unit-ball and has explicit results for the Fourier transform it is highly suitable for the application in mind. Derivations for the results used in the following section can be found in \cite{Janssenarxiv2015}.

\subsubsection{The 3D Generalized Zernike Basis}
The generalized Zernike functions are given by
\begin{equation}
Z_{n,l}^{m,\alpha}(\vomega)=R_n^{l,\alpha}(\rho) Y_l^m(\vartheta,\varphi),
\label{eq:GeneralizedZernikeFunctions}
\end{equation}
with spherical coordinates
\begin{equation}
\vomega=\rho \,\vn(\vartheta,\varphi),	
\end{equation}
integer $n,l\geq0$ such that $n=l+2p$, integer $p\geq0$ and $m=-l,-l+1,...,l$ and $\alpha>0$. The angular part is the spherical harmonic function and the radial part is given by
\begin{equation}\label{eq:RadialPartGeneralizedZernikeFunctions}
R_n^{l,\alpha}(\rho)=\rho^l(1-\rho^2)^\alpha P_{p=\frac{n-l}{2}}^{(\alpha,l+\frac{1}{2})}(2 \rho^2 -1),
\end{equation}
where $P_{p}^{(\alpha,l+\frac{1}{2})}$ denotes the Jacobi polynomial. The generalized Zernike functions are orthogonal on the unit-ball
\begin{multline}
\label{eq:GeneralizedZernikeFunctionsOrthogonality}
	\iiint\limits_{\| \vomega \| \leq 1} Z_{n_1,l_1}^{m_1,\alpha}(\vomega) \overline{Z_{n_2,l_2}^{m_2,\alpha}(\vomega)}  \frac{\d \vomega}{(1-\rho^2)^\alpha} \\ = N_{n,l}^\alpha \delta_{n_1,n_2} \delta_{m_1,m_2} \delta_{l_1,l_2},
\end{multline}
with $\delta$ the Kronecker delta and with normalization factor
\begin{equation}
N_{n,l}^\alpha=\frac{(p+1)_\alpha}{(p+l+\frac{3}{2})_\alpha} \frac{1}{2(n+\alpha+\frac{3}{2})},
\end{equation}
in which $(x)_\alpha=\frac{\Gamma(x+\alpha)}{\Gamma(x)}$  is the (generalized) Pochhammer symbol.
\paragraph {Fourier transform\\}
The inverse Fourier transform of the generalized Zernike function
\begin{equation}
(\mathcal{F}^{-1} Z_{n,l}^{m,\alpha})(\vx) =  \iiint\limits_{\| \vomega\| \leq 1} e^{2 \pi i (\vomega \cdot \vx)} R_n^{l,\alpha}(\rho) \, Y_l^m(\vartheta,\varphi) \d \vomega
\end{equation}
is given by
\begin{equation}
(\mathcal{F}^{-1} Z_{n,l}^{m,\alpha})(\vx) = 4 \pi i^l S_{n,l}^{\alpha}(2 \pi r) \, Y_l^m(\theta,\phi) ,
\end{equation}
with $\vx= r \,\vn(\theta,\phi)$ and
\begin{align}\label{eq:radialPartZernikeSpatialDomain}
S_{n,l}^{\alpha}(q) &= \int_0^1 R_n^{l,\alpha}(\rho) \, j_l (q \rho) \, \rho^2 \d \rho \\
  	&=\begin{cases}  2^\alpha (-1)^p (p+1)_\alpha \sqrt{\frac{\pi}{2 q}} \frac{J_{n+\alpha+\frac{3}{2}} (q)}{q^{\alpha+1}} & \textrm{if } q>0, \\
\frac{\sqrt{\pi} \, \Gamma(1+\alpha)}{4 \, \Gamma(\frac{5}{2}+\alpha)} \delta_{n,0} & \textrm{if } q=0. \nonumber
\end{cases}
\end{align}
Here $J_a$ and $j_a$ are the Bessel functions and spherical Bessel functions \cite{Olver2010}. For integer $\alpha$, the expression in Eq. \eqref{eq:radialPartZernikeSpatialDomain} for $q>0$ reduces to
\begin{equation}
2^\alpha (-1)^p (p+1)_\alpha \frac{j_{n+\alpha+1}(q)}{q^{\alpha+1}}.
\end{equation}

\paragraph {Expansion of separable functions\\}\label{ssect:expansionSeparableFunctions}
An additional constraint for the wavelets is that they should be separable in the Fourier domain, i.e., $(\mathcal{F}\psi)(\vomega)= F(\vomega) = A (\vartheta,\varphi) B(\rho)$. When expanding such a function in the generalized Zernike basis,
\begin{equation}
F(\vomega)=\sum_{n,l,m} c_{n,l}^{m,\alpha}(F) \, Z_{n,l}^{m,\alpha}(\vomega),
\end{equation}
we can split the coefficients in radial coefficients and angular coefficients
\begin{equation}
\begin{split}
c_{n,l}^{m,\alpha}(F)&=\frac{1}{N_{n,l}^\alpha}  \iiint\limits_{\|\vomega\| \leq 1} F(\vomega) \, \conj{Z_{n,l}^{m,\alpha}(\vomega)} \frac{\d \vomega}{(1-\rho^2)^\alpha}\\
&=a_l^m(A) \, \tilde{b}_n^{l,\alpha}(B), \; \tilde{b}_n^{l,\alpha}(B) = \frac{1}{N_{n,l}^\alpha} b_n^{l,\alpha}(B)
\end{split}
\label{eq:coefficientsSeparableFunctions}
\end{equation}
where
\begin{align}
a_l^m(A) &=  \int \limits_0^\pi  \int \limits_0^{2\pi} A(\vn(\vartheta,\varphi))\, \conj{Y_l^m(\vartheta,\varphi)} \sin \vartheta \, \d \vartheta \d \varphi, \\
b_n^{l,\alpha}(B) &= \int_0^1 B(\rho) \, R_n^{l,\alpha}(\rho) \, \frac{\rho^2 \d \rho}{(1-\rho^2)^\alpha}.
\end{align}
The coefficients $c_{n,l}^{m,\alpha}$ in \eqref{eq:coefficientsSeparableFunctions} reflect the separation of $F$ as a product of an angular and radial factor as well as a corresponding separation of the generalized Zernike basis functions in \eqref{eq:GeneralizedZernikeFunctions}. In the latter, the index $l$ appears both in the angular and radial factor. Thus we have
\begin{equation}\label{eq:SphericalHarmonicsExpansionZernike}
		A(\vn(\vartheta,\varphi))=\sum_{l,m} a_l^m  Y_l^m(\vartheta,\varphi),
\end{equation}
while for all $l=0,1,\dots$
\begin{equation}\label{eq:RadialZernikeExpansion}
    B(\rho)=\sum_{n=l,l+2,\dots} \tilde{b}_n^{l,\alpha} R_n^{l,\alpha}(\rho).
\end{equation}

For each $l$, the radial functions $R_n^{l,\alpha}$ with $n$ varying are a basis for functions defined on the interval $[ 0, 1]$. For separable functions, we expand the same radial function $B(\rho)$ for each $l$, and it can be shown that there is a recursion formula for the radial coefficients \cite{Janssenarxiv2015}.

\subsubsection{Wavelets}
We now choose an appropriate radial and angular functions for our wavelets expressed in the generalized Zernike basis.

\paragraph{Angular function for the Zernike wavelets\\}
For the angular functions we again choose orientation distribution $A(\vn(\vartheta,\varphi)) = G_{s_o}^{S^2}(\vn(\vartheta,\varphi))$ for which the spherical harmonic coefficients are given by \eqref{eq:coefficientsDifussionKernel}. After this we apply the same transformations (Funk transform and anti-symmetrization) to obtain the angular part of the wavelet.

\paragraph{Flat radial profile for all-scale transform\\}
Recall the procedure of splitting of the lowest frequencies as described in Subsection \ref{sssect:LowFrequencyComponents} resulting in filters $\psi_0$ and $\psi_1$. In this section we design a radial function for $\psi_1$ which is relevant for further processing. Furthermore we already have an analytical description for $\phi_0$, which we set to $\phi_0=G_{s_\rho}$ (see Eq. \eqref{eq:LowFrequencyPhi0}).

The radial function of $\psi_0$ should therefore approximate $B(\rho)=1-G_{s_{\rho}}(\rho)$ on the interval $[0,\varrho]$ and should smoothly go to zero when approaching the edges of the interval. For the moment, we set $\varrho=1$ and we include the scaling later. To start, we define the function
\begin{equation}\label{eq:StandardRadialProfileB}
    B_{\alpha,\beta}(\rho)=(1-\rho ^2)^{\alpha } \rho ^{\beta },
\end{equation}
see Fig. \ref{fig:StandardFlat}a for the case $\alpha=6,\beta=2$. For this function we have the following coefficients
\begin{equation}\label{eq:coefficientsRadialProfileB}
    b_{n=l+2p}^{l,\alpha,\beta} = \frac{\binom{\frac{\beta-l}{2}}{p}}{(2 \alpha+\beta+l+2 p+3) \binom{\frac{1}{2} (\beta+l+1)+\alpha+p}{\alpha+p}}.
\end{equation}
To obtain a flatter function we multiply the function $B_{\alpha,\beta}$ with a second order Taylor expansion of the reciprocal function $\rho \mapsto (B_{\alpha,\beta}(\rho))^{-1}$ around the function's maximum obtained at
\begin{equation}
     \rho_{\textrm{max}} = \left(\frac{\frac{1}{2}\beta}{\alpha + \frac{1}{2}\beta }\right)^\frac{1}{2},
 \end{equation}
see Fig. \ref{fig:StandardFlat}b. The resulting function is again a sum of functions of type \eqref{eq:StandardRadialProfileB} with different values for $\beta$, so we can find the coefficients $b_{n=l+2p}^{l,\alpha}$ for the flattened function as well. For the specific case $\beta=2$ we get the following flattened function
\begin{align}\label{eq:Bflat}
   	B^{\textrm{flat}}_{\alpha,2}(\rho) &= B_{\alpha,2}(\rho) \, B^\textrm{rec}_{\alpha,2}(\rho)\\
    &=  \frac{1}{B_{\textrm{max}}} \rho^2 (1-\rho^2)^\alpha  \left(1+ \frac{(\alpha+1)^3}{2 \alpha} (\rho^2 - \rho_{\textrm{max}}^2)^2\right), \nonumber
\end{align}
with $B_{\textrm{max}} = B_{\alpha,2}(\rho_{\textrm{max}})$. For this flattened function the coefficients are given by
\begin{equation}\label{eq:coefficientsFlatB}
    b_{n=l+2p}^{l,\textrm{flat}} = \sum \limits_{i=0}^2 c_i b_{n=l+2p}^{l,\alpha,2+2 i},
\end{equation}
with $c_i$ the coefficients of $\rho^0$, $\rho^2$ and $\rho^4$ in the second order Taylor series of the reciprocal. These coefficients follow from \eqref{eq:Bflat} and are given by
\begin{multline}
         B^\textrm{rec}_{\alpha,2}(\rho) = \sum \limits_{i=0}^2 c_i \rho^{2i} = c_0 +  c_1 \rho^2 + c_2 \rho^4, \\
\textrm{with} \quad \begin{pmatrix}
	c_0 \\ c_1 \\ c_2
\end{pmatrix}
\!\!=\!\!
\begin{pmatrix}
	 1+ \frac{(\alpha+1)^3}{2 \alpha} \rho_{\textrm{max}}^4 \\
	 -2 \frac{(\alpha+1)^3}{2 \alpha} \rho_{\textrm{max}}^2 \\
	 \frac{(\alpha+1)^3}{2 \alpha}
\end{pmatrix}.
\end{multline}

The filters from this section are summarized in the following result:
\begin{result}\label{res:Result2} (\textbf{\mbox{Analytic 3D-wavelets in Zernike basis}})\\
Let $\alpha>0$ and let $A:S^2 \rightarrow \R^+$ be a function supported mainly in a sharp convex cone around the $z$-axis and symmetrically around the $z$-axis. Then $A$ provides our wavelet $\hat{\psi}$ in the Fourier domain via Eq. \eqref{eq:psiHatFromASummary}. The real part of $\psi$ is a tube detector and the imaginary part of $\psi$ is an edge detector, see Fig. \ref{fig:CakeWavelets}. We choose radial function $g(\rho)=B^{\textrm{flat}}_{6,2}(\frac{\rho}{\rho_\cN})$ in Eq. \eqref{eq:WaveletSplitting} for $\psi_1$ and angular function $A(\vn(\vartheta,\varphi)) = G_{s_o}^{S^2}(\vn(\vartheta,\varphi))$ and expand in the generalized Zernike basis:
\begin{equation}
\hat{\psi}_1(\boldsymbol{\omega})	=
 \sum \limits_{{\scriptsize \begin{array}{c}
n\!-\!l\!=\!2p,\\
l,n \geq 0
\end{array}}}c_{n,l}^0 R_n^{l,\alpha}\left(\frac{\rho}{\rho_\cN}\right) \, Y_l^0 (\vartheta, \varphi).
\end{equation}
The coefficients $c_{n,l}^0$ follow by expanding $A$ in spherical harmonics and $B^{\textrm{flat}}_{6,2}$ in the radial Zernike polynomials, recall Eq. \eqref{eq:SphericalHarmonicsExpansionZernike} and Eq. \eqref{eq:RadialZernikeExpansion}. This yields $a_l^0$ (Eq. \eqref{eq:coefficientsDifussionKernel}) and $b_{n,l}^{\alpha}$ (Eq. \eqref{eq:coefficientsFlatB}) and coefficients $c_{n,l}^0$:
\begin{equation}
	c^{0}_{n,l}= ( P_l(0) + \frac{1-(-1)^l}{2})a_{l}^0 \tilde{b}_{n,l}^{\alpha}.
\end{equation}
The spatial wavelet is given by
\begin{equation}
\psi_1(\mathbf{x}) =
 \sum \limits_{{\scriptsize \begin{array}{c}
n\!-\!l\!=\!2p,\\
l,n \geq 0
\end{array}}}  c_{n,l}^0  4 \pi i^l S_{n,l}^{\alpha}(2 \pi r \rho_\cN) \, Y_{l}^0(\theta, \phi),
\end{equation}
with $S_{n,l}^{\alpha}$ given by Eq. \eqref{eq:radialPartZernikeSpatialDomain} and $Y_{l}^m$ the spherical harmonics of Eq. \eqref{eq:definitionSphericalHarmonics}. Then we obtain rotated filters via
\begin{align}\label{eq:ZernikeSpatialFilterSummary}
\psi_{1,\vn}(\mathbf{x}) \!&= \!\!\!\!\!\!\!
 \sum \limits_{{\scriptsize \begin{array}{c}
n\!-\!l\!=\!2p,\\
l,n \geq 0
\end{array}}} \! \sum \limits_{m'=-l}^l \! (c_\vn)_{n,l}^{m'} \, 4 \pi \, i^l \, S_{n,l}^{\alpha}(2 \pi r \rho_\cN) \, Y_{l}^{m'}\!(\theta,\phi), \nonumber \\
& \qquad \textrm{with } (c_{\vn (\beta,\gamma)})_{n,l}^{m'} =c_{n,l}^0 D_{0, m'}^l(\gamma ,\beta ,0 ).
\end{align}
Since now we do have analytical expressions for the spatial filter, in contrary to the filters from Section \ref{sect:WaveletDFT}, we sample the filters in the spatial domain using Eq. \eqref{eq:ZernikeSpatialFilterSummary}.
The filter is a proper wavelet with fast reconstruction property (recall Def.~\ref{def:properwavelet}).
\end{result}

\begin{figure}[t]
		\centering{
		\includegraphics[width=0.49\hsize]{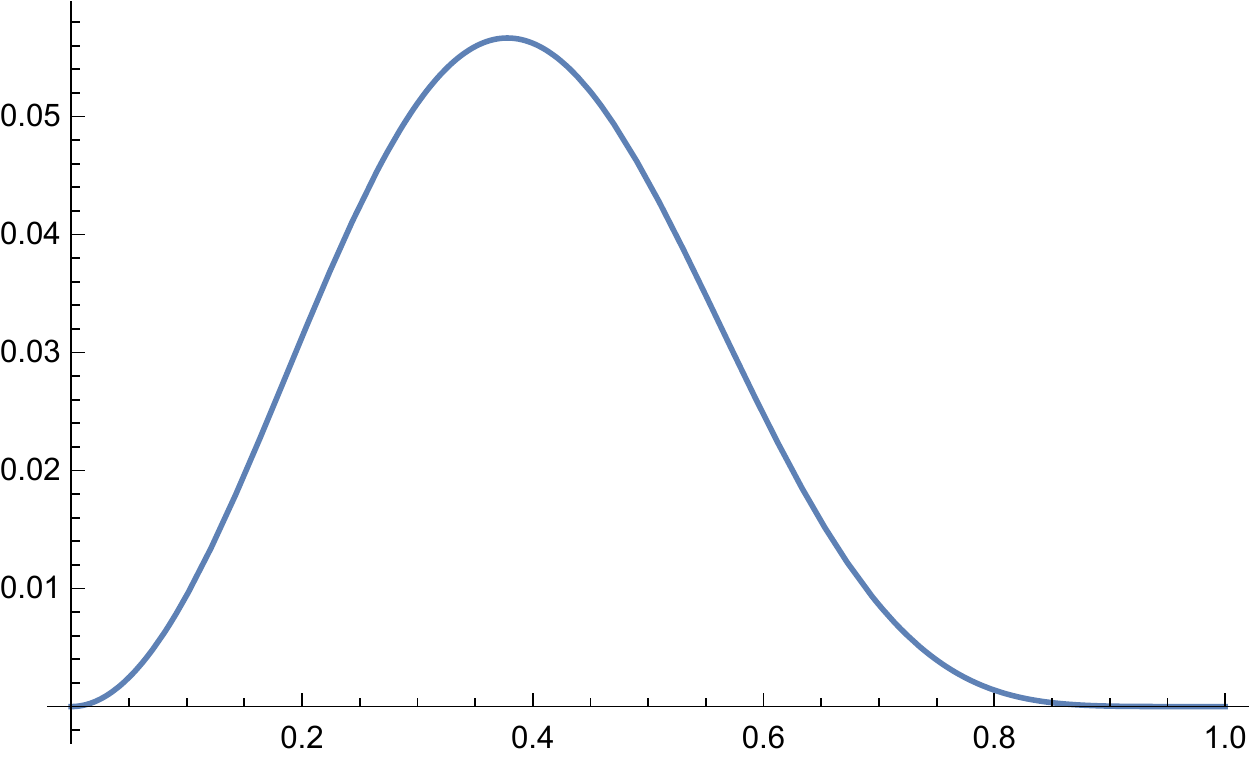}
		\includegraphics[width=0.49\hsize]{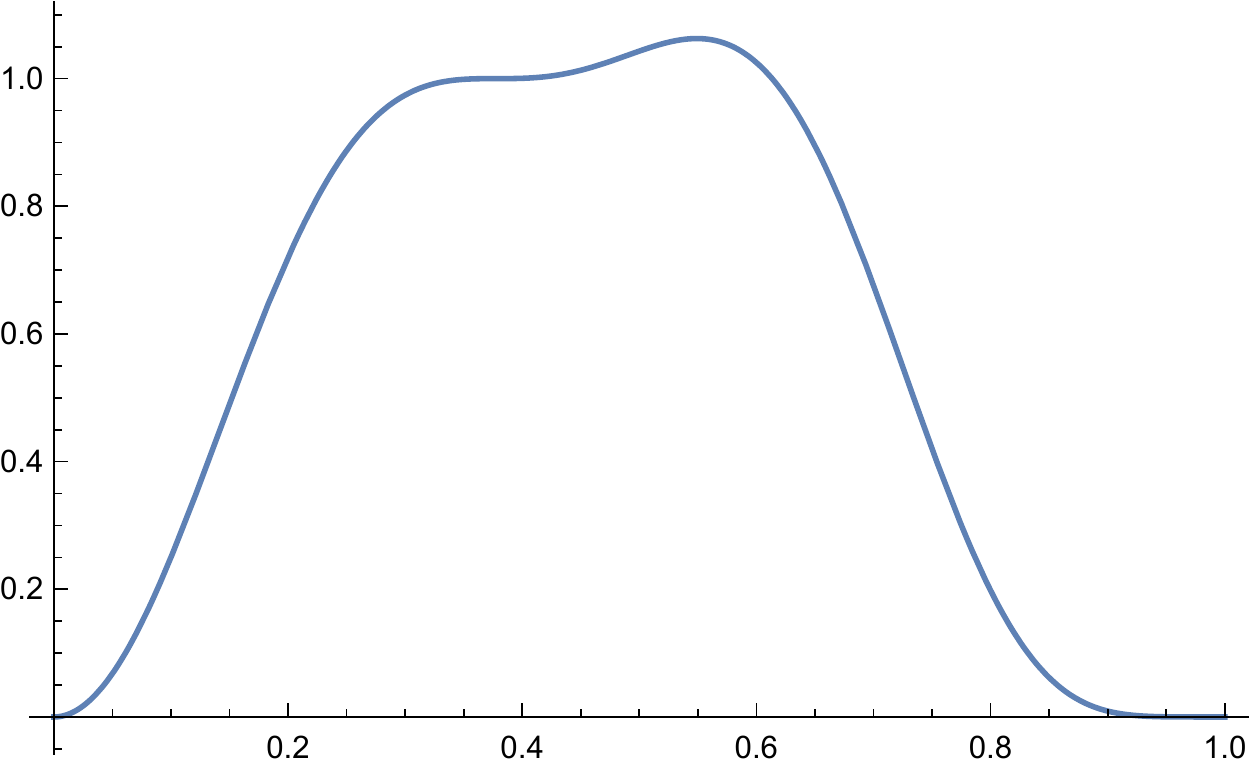}
		}
		\caption{\emph{Left} Function $B_{6,2}(\rho)=(1-\rho ^2)^{6 } \rho ^{2}$. \emph{Right} Flattened function which is obtained from $B_{\alpha,\beta}(\rho)$ by multiplying with the second order Taylor approximation of its reciprocal around the function maximum: $B_{6,2}^{\textrm{flat}}(\rho)=\frac{1}{B_\textrm{max}}(1+\frac{7^3}{12}(\rho^2-\frac{1}{7})^2)B_{6,2}(\rho)$.}
		\label{fig:StandardFlat}
\end{figure}

\section{Experiments}\label{sect:Experiments}
Before considering applications of the filters we first compare filters obtained by DFT (Section \ref{sect:WaveletDFT}) to filters expressed in the generalized Zernike basis (Section \ref{sect:WaveletZernike}) and inspect the quality of the reconstruction.
\subsection{Comparison of wavelets obtained via DFT and analytical expressions using the Zernike basis}
First we compare the filters obtained by sampling in the Fourier domain followed by a DFT (Section \ref{sect:WaveletDFT}) to the filters obtained by expansion in the Zernike basis (Section \ref{sect:WaveletZernike}). Settings were chosen such that the radial functions of both wavelets matched best and the same settings for the angular function were used. In Fig. \ref{fig:DFTvsZernikeFilter} we show that the filters are very similar in shape. We see no major artifacts caused by sampling followed by an inverse DFT.

\begin{figure}[htbp]
	\centering
	\includegraphics[width=0.95\columnwidth]{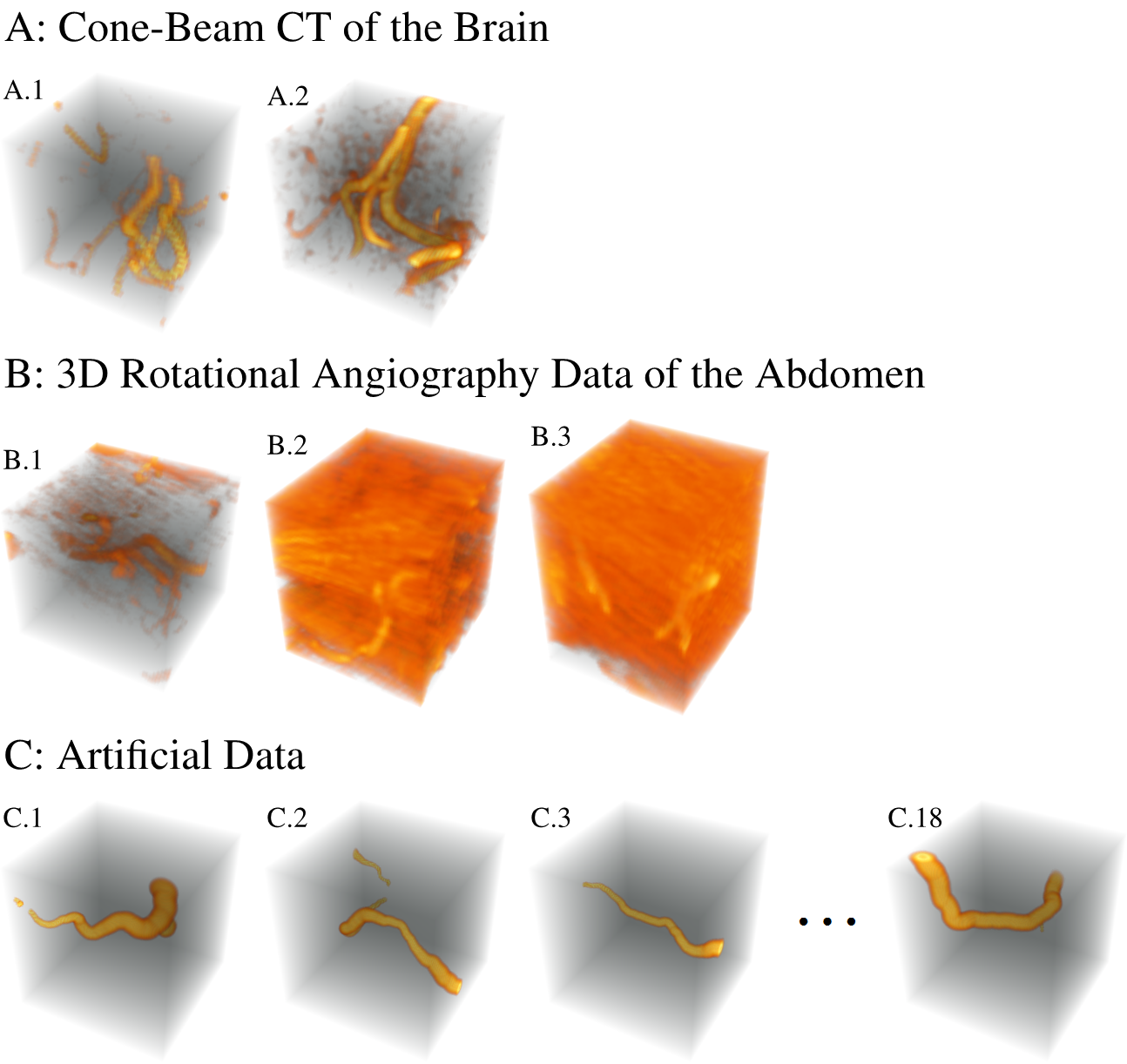}
	\caption{Overview of the datasets used in our experiments.}
	\label{fig:dataOverview}
\end{figure}

\begin{figure*}[p]
	\centering
	\includegraphics[width=0.7\textwidth]{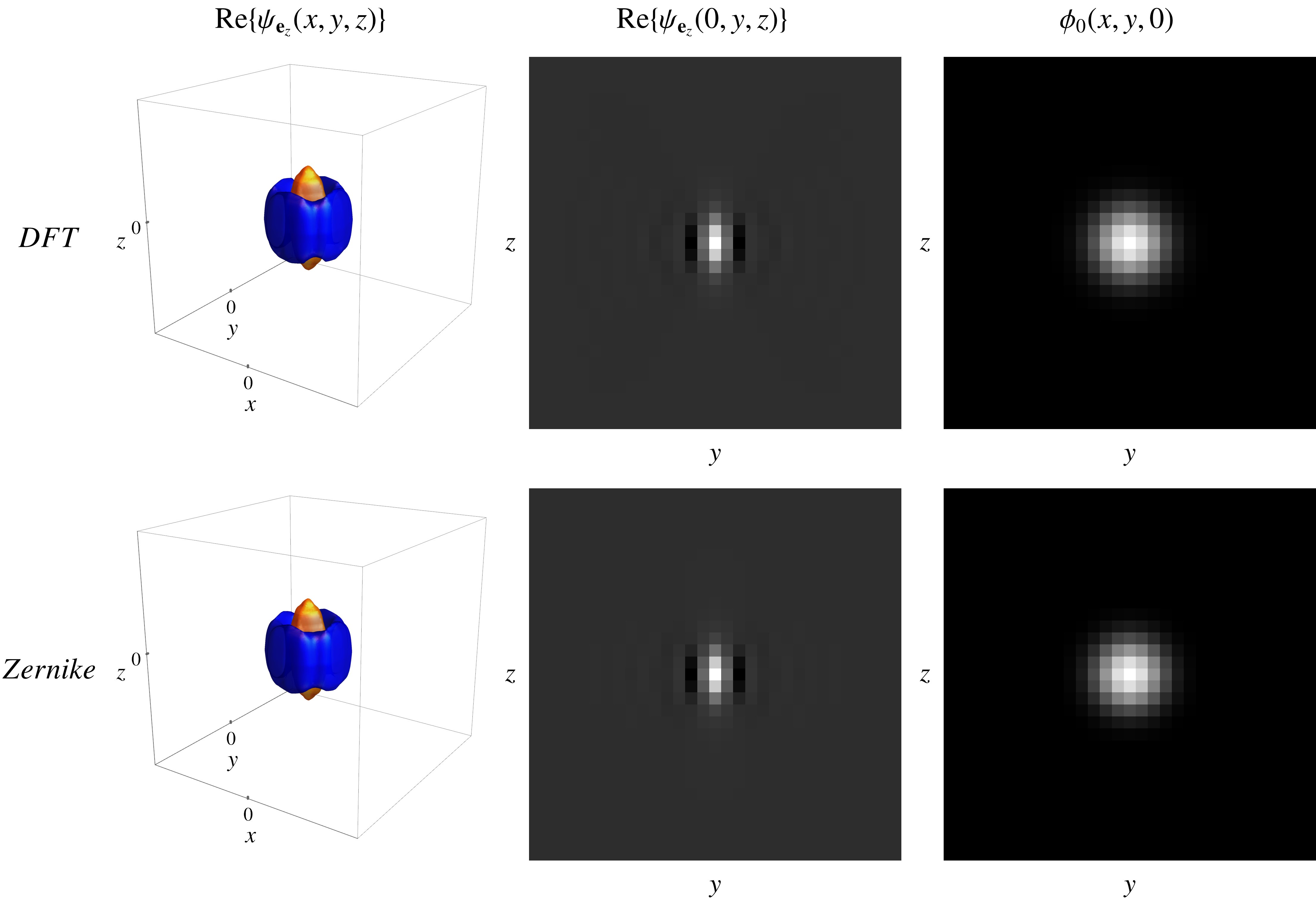}
	\caption{Comparison of the filters obtained by sampling in the Fourier domain and performing an inverse-DFT (Result \ref{res:Result1} in Section \ref{sect:WaveletDFT}) and the filters expressed in the generalized Zernike-basis (Result \ref{res:Result2} in Section \ref{sect:WaveletZernike}). \emph{Left} Iso-contour plot of the filter aligned with the x-axis showing one positive iso-contour (orange) and one negative iso-contour (blue). \emph{Middle} Cross section of the filter for $z=0$. \emph{Right} The low-pass filter. \emph{Top} Filters according to Result \ref{res:Result1} with parameters $s_{\rho}=\frac{1}{2} (1.9)^2$ and  $\gamma=0.85$.  \emph{Bottom} The filters according to Result \ref{res:Result2} with $\alpha=3$ and $\beta=2$. Both have $s_{o}=\frac{1}{2} (0.4)^2$ and are evaluated on a grid of $31 \times 31 \times 31$ voxels.}
	\label{fig:DFTvsZernikeFilter}
\end{figure*}

\subsection{Quality of the reconstruction}
A visual inspection of the reconstruction after the transformation and reconstruction procedure can be found in Fig. \ref{fig:DFTvsZernikeReconstruction}. As expected, a small amount of regularization is observed. We see no qualitative differences between the two reconstructions.

\begin{figure*}[p]
	\centering
	\includegraphics[width=0.75\textwidth]{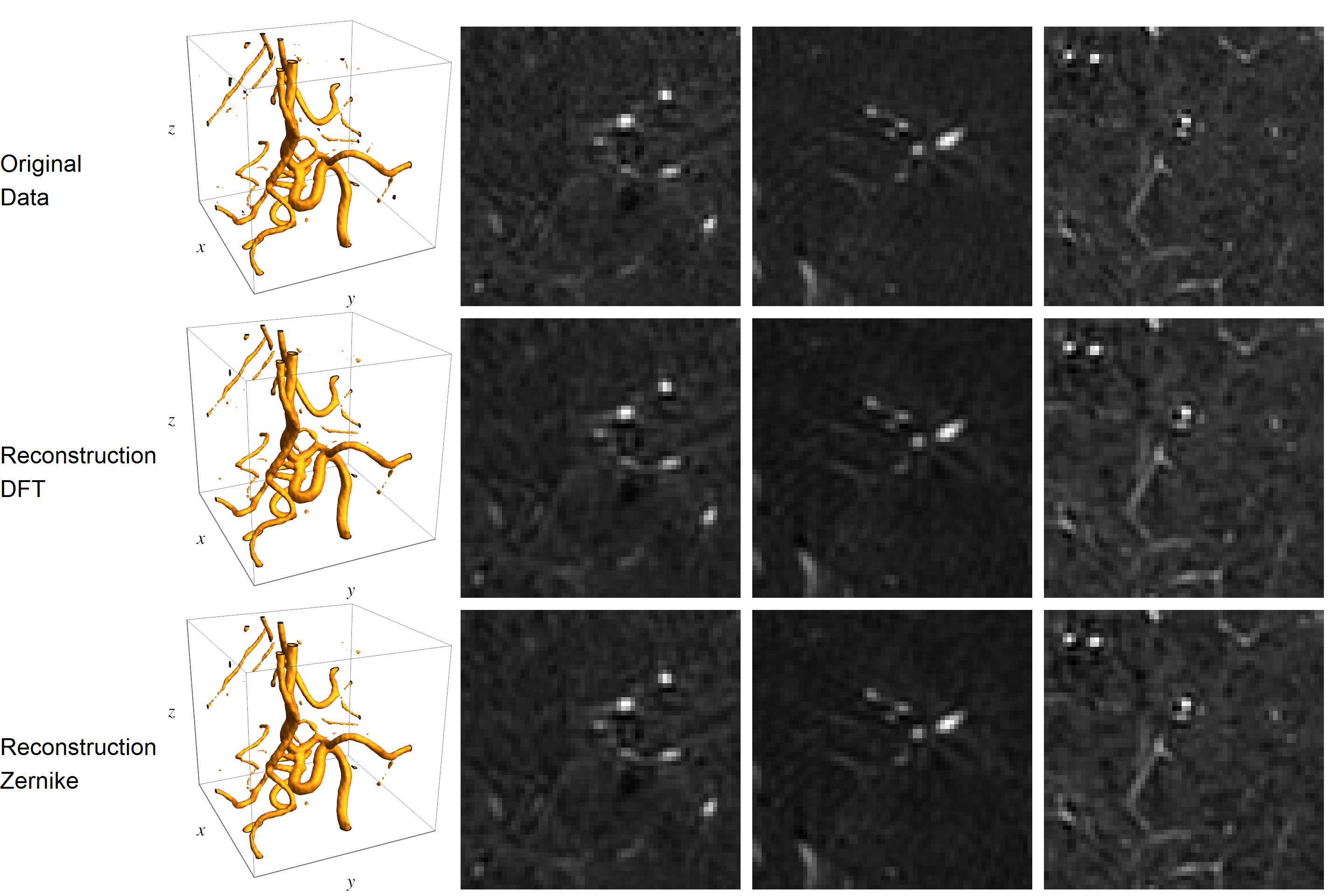}
	\caption{Comparison of construction and reconstruction of data A.1 using the different types of filters with the same settings as in Fig. \ref{fig:DFTvsZernikeFilter}. In each row, from left to right, an iso-contour of the data and 3 slices through the center of the data along the three principal axis. \emph{Top:} The original data. \emph{Middle:} the data after construction and reconstruction using the filters from Result \ref{res:Result1}. \emph{Bottom:} the data after construction and reconstruction using the filters from Result \ref{res:Result2}.}
	\label{fig:DFTvsZernikeReconstruction}
\end{figure*}

\section{Applications}\label{sect:Applications}
Next we present two applications of the orientation score transformation. For all experiments we will use the filters from Results \ref{res:Result1} and the default values of Table \ref{tab:parameters}, unless stated otherwise.


\begin{table}[h]
  \centering
  \begin{tabular}{l l r}
  	Parameter & Default value & Defining Eq. \\
	\hline
	$N_o$       & $42$     & \eqref{eq:construction1Discrete} \\
	$\gamma$      & $0.85$    & \eqref{eq:GammaNyquist} \\
	$\sigma_{erf}$       & $\frac{1}{3} (\rho_\cN-\varrho)$     & \eqref{eq:radialFunctionOfPsi} \\
	$s_\rho $ & $\frac{1}{2}(16)^2$      & \eqref{eq:WaveletSplitting} \\
	$s_o $ & $\frac{1}{2}(0.45)^2$      & \eqref{eq:orientationDistribution} \\
	Discrete wavelet size & $11 \times 11 \times 11$      & -\;\;\,	 \\
\hline
  \end{tabular}
  \caption{Default values for the parameters of the orientation score transform used in the application section.
 }
  \label{tab:parameters}
\end{table}

\subsection{Diffusion via invertible orientation scores}\label{ssect:CEDOS}

\subsubsection{Background and related methods}
Many methods exist for enhancing elongated structures based on non-linear diffusion equations. Coherence enhancing diffusion (CED) filtering \cite{WeickertIJCV1999} uses the structure tensor to steer the diffusion process to mainly apply diffusion along the elongated structures, therefore preserving the edges. One downside of this method is that at situations where multiple oriented structures occur at the same position, one of the structures gets destroyed. This renders this method not suitable for crossing structures, and in 3D data bifurcating vessels. Interesting extensions dealing with crossings by analyzing the environment using higher order derivatives have been proposed \cite{Scharr2006}.

Methods that deal with crossings by applying coherence enhancing diffusion via 2D orientation scores have been developed for 2D data \cite{FrankenIJCV2009,SharmaACHA2015}. Here, we propose an extension of coherence enhancing diffusion via 3D orientation scores to enhance elongated structures, while preserving crossings and bifurcating vessels. Preliminary results on artificial data have been shown in \cite{DuitsJMathImagingVis2016}. Here we show first results on real data, quantify the results, and furthermore add additional adaptivity to the diffusion equation.

\subsubsection{CEDOS}
We now use the invertible orientation score transformation to perform data-enhancement according to Fig. \ref{fig:OverviewOperators}. Because $\R ^ 3\times S ^ 2$ is not a Lie group, it is common practice to embed the space of positions and orientations in the Lie group of positions and rotations SE(3) by setting
\begin{equation}
\tilde{U}(\vx,\mR)=U(\vx,\mR \cdot \ve_z),\quad U(\vx,\vn)=\tilde{U}(\vx,\mR_{\vn}),
\label{eq:EquivalenceRelation}
\end{equation}
with $\mR_\vn$ any rotation for which $\mR_\vn \cdot \ve_z = \vn$. The operators $\Phi $ which we consider are scale spaces on SE(3) (diffusions), and are given by $\Phi=\Phi_t$ with
\begin{equation}
\Phi_t(U)(\vy,\vn)=\tW(\vy,\mR_\vn,t).
\label{eq:PhiDiffusion}
\end{equation}
Here $\tilde {W}$ is the solution of a non-linear diffusion equation:
\begin{equation}
\left\{\begin{split}
\frac{\partial \tW}{\partial t}(\gNoPar,t) &= \sum_{i,j=1}^6 \cA_i \atg D_{i j}(\tU) \cA_j \atg \tW(\gNoPar,t),\\
\tW(\gNoPar, 0)&=\cW_{\psi_1}[f] (\vx,\mR \cdot \ve_z), \qquad g = (\vx,\mR)
\end{split}
\right.
\label{eq:CEDOSPre}
\end{equation}
where in coherence enhancing diffusion on orientation scores (CEDOS) $D_{i j}$ is adapted locally to initial condition $\tW(\gNoPar, 0)$ based on exponential curve fits (see \cite{DuitsJMathImagingVis2016}), and with $\cA_i|_{g}=(L_g)_* \cA_i|_e$ the left-invariant vector fields on SE(3). The diffusion is better understood in locally adaptive frame $\{\cB_i\}_{i=1}^6$. Here $\cB_3$ follows from an exponential curve fits and points along our structure. $\cB_1$ and $\cB_2$ span the plane spatially perpendicular to our structure, and $\cB_4, \cB_5$ and $\cB_6$ correspond to angular diffusion (we have two angular dimensions for $\R^3 \times S^2$ but embedding in $SE(3)$ leads to a third angular dimension). Our diffusion then takes the diagonal form
\begin{equation}
\begin{split}
\frac{\partial \tW}{\partial t}(\gNoPar,t) &= \sum_{i=1}^6 D_{i i} (\tU) \cB_i \atg^2 \tW(\gNoPar,t)\\
&= D_{11}(\tU) \left( \cB_{1} \atg^2 + \cB_{2} \atg^2 \right) \tW(\gNoPar,t)  \\
&\quad + D_{33} (\tU) \cB_3 \atg^2 \tW(\gNoPar,t) \\
&\quad + D_{44} \left( \cB_{4} \atg^2 + \cB_{5} \atg^2 + \cB_{6} \atg^2 \right) \tW(\gNoPar,t)
\end{split}
\label{eq:CEDOS}
\end{equation}
where we limit ourselves to diffusion of type $D_{11} = D_{22}$, and $D_{44} = D_{55} = D_{66}$ to preserve the data symmetry of Eq. \eqref{eq:EquivalenceRelation} . For further details on how the adapted frame is obtained see \cite{DuitsJMathImagingVis2016}. We aim to enhance oriented structures and reduce noise as much as possible. Therefore, non-oriented regions are smoothed isotropically by setting
\begin{equation}\label{eq:DiffusionConstantD11}
	(D_{11}(\tU))(\gNoPar) = 1- \exp \left(- \left( \frac{c_1}{s(\tU)(\gNoPar)} \right)^2 \right),
\end{equation}
with $c_1$ a constant automatically set to the $50\%$ quantile of $s(\tU)$ and $s(\tU)(\gNoPar)$ a measure for orientation confidence given by minus the laplacian in the space orthogonal to the structure orientation $\cB_3$:
\begin{equation}
	s(\tU)(\gNoPar) = - \sum_{i \in \{1,2,4,5,6 \}} \cB_i \atg^2 \tU(\gNoPar).
\end{equation}
Since we want to stop diffusion when reaching the end of a structure, we set
\begin{equation}\label{eq:DiffusionConstantD33}
	(D_{33}(\tU))(\gNoPar) = 1- \exp \left(- \left( \frac{c_2}{\cB_3\atg \tU(\gNoPar)} \right)^2 \right),
\end{equation}
with $c_2$ a constant automatically set to the $50\%$ quantile of $|\cB_3\atg(\tU)|$.

We then obtain Euclidean invariant image processing via
\begin{equation}
\Upsilon f = \cW_\psi ^ {-1, \textrm{ext}}\circ \Phi \circ \cW_\psi f= \cW_\psi ^ {-1}\circ \mathbb {P}_\psi  \Phi \circ \cW_\psi f,
\label{eq:EuclideanInvariantImageProcessing}
\end{equation}
which includes inherent projection $\mathbb{P}_\psi$ of orientation scores, even if $\Phi=\Phi_t$ maps outside of the space of orientation scores. We write $\cW_\psi ^ {-1, \textrm{ext}}$ because we extend the inverse to $\LL_2(\R^3 \times S^2)$.

\subsubsection{Quantification via peak signal to noise ratio and contrast-to-noise ratio}
For signal $f$ with noise $N$ the noisy data is given by:
\begin{equation}
	f_N(\vx) = f (\vx) + N(\vx).
\end{equation}
Given such data, the noise is quantified via the contrast-to-noise ratio (CNR):
\begin{equation}
	CNR(f_N, f) = \frac{\max \limits_\vx f (\vx) - \min \limits_\vx f (\vx) }{\sigma(f- f_N)},
\end{equation}
where $\sigma(f- f_N)$ denotes the standard deviation of the difference signal $f-f_N$, and where the numerator denotes the contrast in our data.

For real data we do not have a ground truth $f$ but only noise signal $f_N$ and we will use the following estimation for the standard deviation of the noise and the contrast of the signal.
First we estimate the contrast by determining the average value over manually segmented parts of the vessel given by region $\Omega_S$ and background regions given by $\Omega_B$:
\begin{equation}
	\mu_S = \langle f _N |_{\Omega_S} \rangle, \qquad \mu_B = \langle f_N |_{\Omega_B} \rangle.
\end{equation}
For estimating the noise of the signal we select regions for which the signal $f$ can be expected to be constant (see Fig. \ref{fig:VesselAndBackgroundRegions}). Given such a region $\Omega_B$ we estimate the noise standard deviation by
\begin{equation}
	\sigma_N = \sigma(f_N|_{\Omega_B} ).
\end{equation}
The contrast to noise ratio (CNR) is then given by
\begin{equation}
	CNR(f_N) = \frac{\mu_S - \mu_B}{\sigma_N},
\end{equation}
where the numerator denotes the contrast in our data.

\subsubsection{Results on Cone Beam CT Data of the Abdomen}
We tested our method on real Cone Beam CT data of the abdomen (Fig. \ref{fig:dataOverview} B.). The data was acquired using a Philips Allura Xper FC20 system, using a Cone Beam CT backprojection algorithm (XperCT) to generate the final volumetric image.

To quantify our method we segmented the vessels and selected background regions, see Fig. \ref{fig:VesselAndBackgroundRegions}. We then applied CEDOS with different end times and computed the CNR for these different end times ranging from 1 to 6, see Fig. \ref{fig:CEDOSvsGaussian}. Diffusion constants $D_{11}$ and $D_{33}$ were determined using Eqs. \eqref{eq:DiffusionConstantD11} \eqref{eq:DiffusionConstantD33} and we set $D_{11}=0.001$. For the orientation score transformation we used $s_o=\frac{1}{2}(0.45)^2$ and $s_\rho = \frac{1}{2}(16)^2$. For CED we used the following settings: $\alpha=0.2$ and $c$ the $50\%$-quantile of $\kappa$ (see \cite{WeickertIJCV1999}).

As one can expect, in all cases we recognize a peak since initially noise is reduced whereas later also contrast is reduced. Compared to Gaussian diffusion and CED, we reach a higher CNR (Fig. \ref{fig:CEDOSvsGaussianCNR}). Furthermore, in the CEDOS and CED case, the CNR does not decrease much when applying more diffusion making it more robust with respect to choice in diffusion time. The fact that CED does not achieve high CNR-ratios and performs relatively bad in this test is that the diffusion matrix is not designed to reduce background noise (which is used here to quantify noise) but mainly to enhance orientated patterns, for this reason we also set $\alpha$ relatively high to still achieve noise reduction in the background regions.

For 3D visualization of the diffusion results for optimal diffusion time (determined from the CNR) see Fig. \ref{fig:CEDOSvsGaussianPhilipsViewer}. Here we see that compared to Gaussian diffusion our anisotropic diffusion reduced more noise while still maintaining the important structures. A similar thing is achieved by anisotropic diffusion in CED but bifurcating vessels are destroyed by this method, as in this method the diffusion is mainly performed along the orientation of one of the vessels at the bifurcation (see the black circles in Fig. \ref{fig:CEDOSvsGaussianPhilipsViewer}).

\begin{figure*}[h!t]
	\centering
	\includegraphics[width=0.55\textwidth]{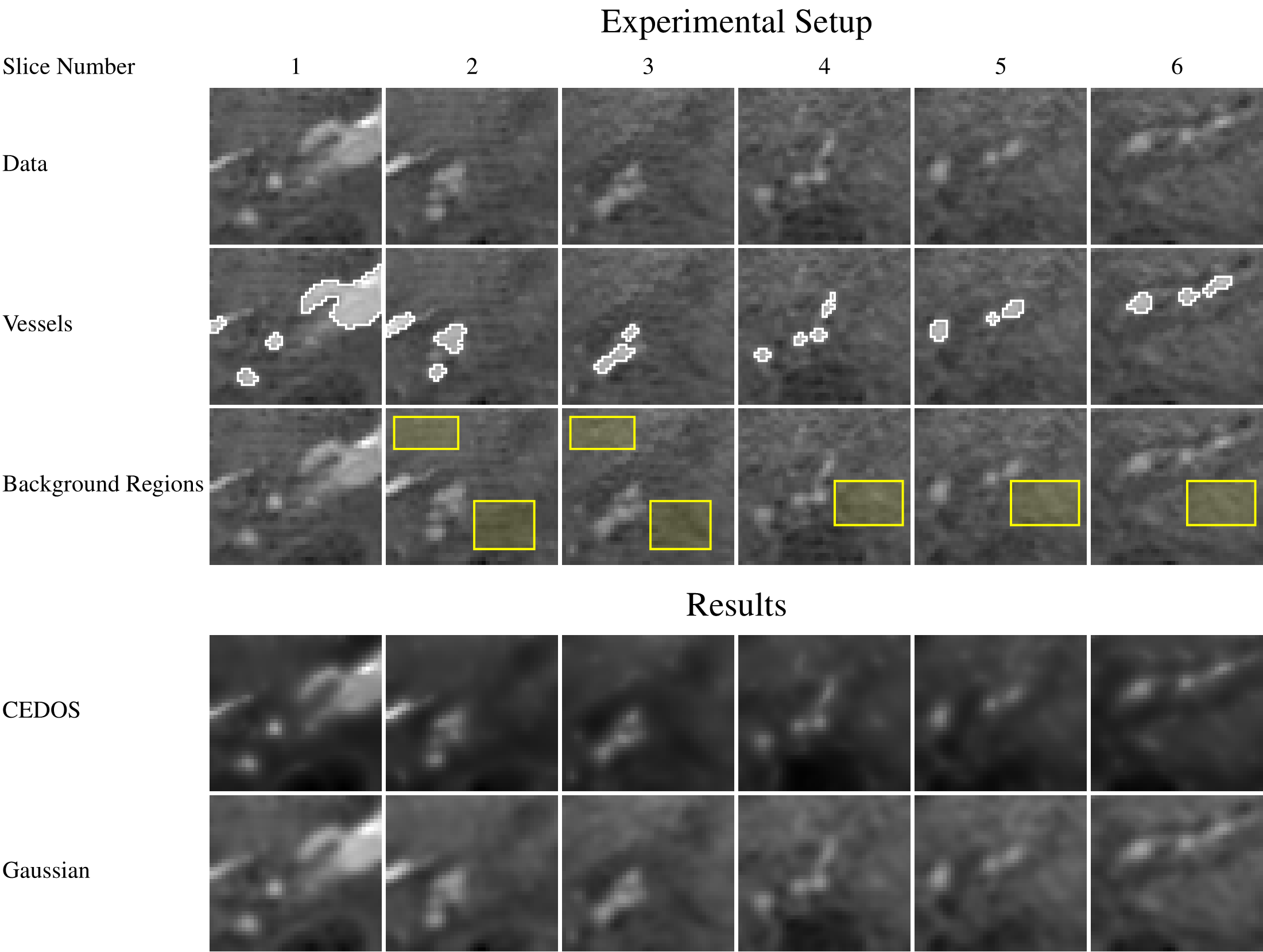}
	\includegraphics[width=0.35\textwidth]{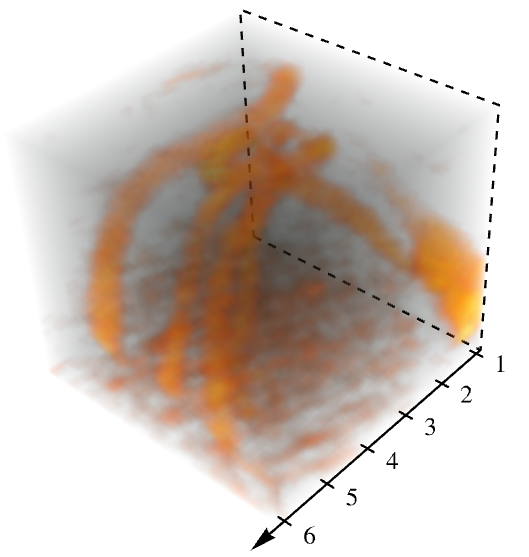}
	\caption{Selected regions for determining the contrast to noise ratios for dataset B.1. \emph{Left}: Grid containing slices of the data (top row), the same slices with segmented vessel parts (second row) and the slices with three selected background regions (third row), the slices after applying CEDOS. \emph{Right}: 3D visualization of the data.}
	\label{fig:VesselAndBackgroundRegions}
\end{figure*}



\begin{figure}[h]
	\centering
	\includegraphics[width=0.9\columnwidth]{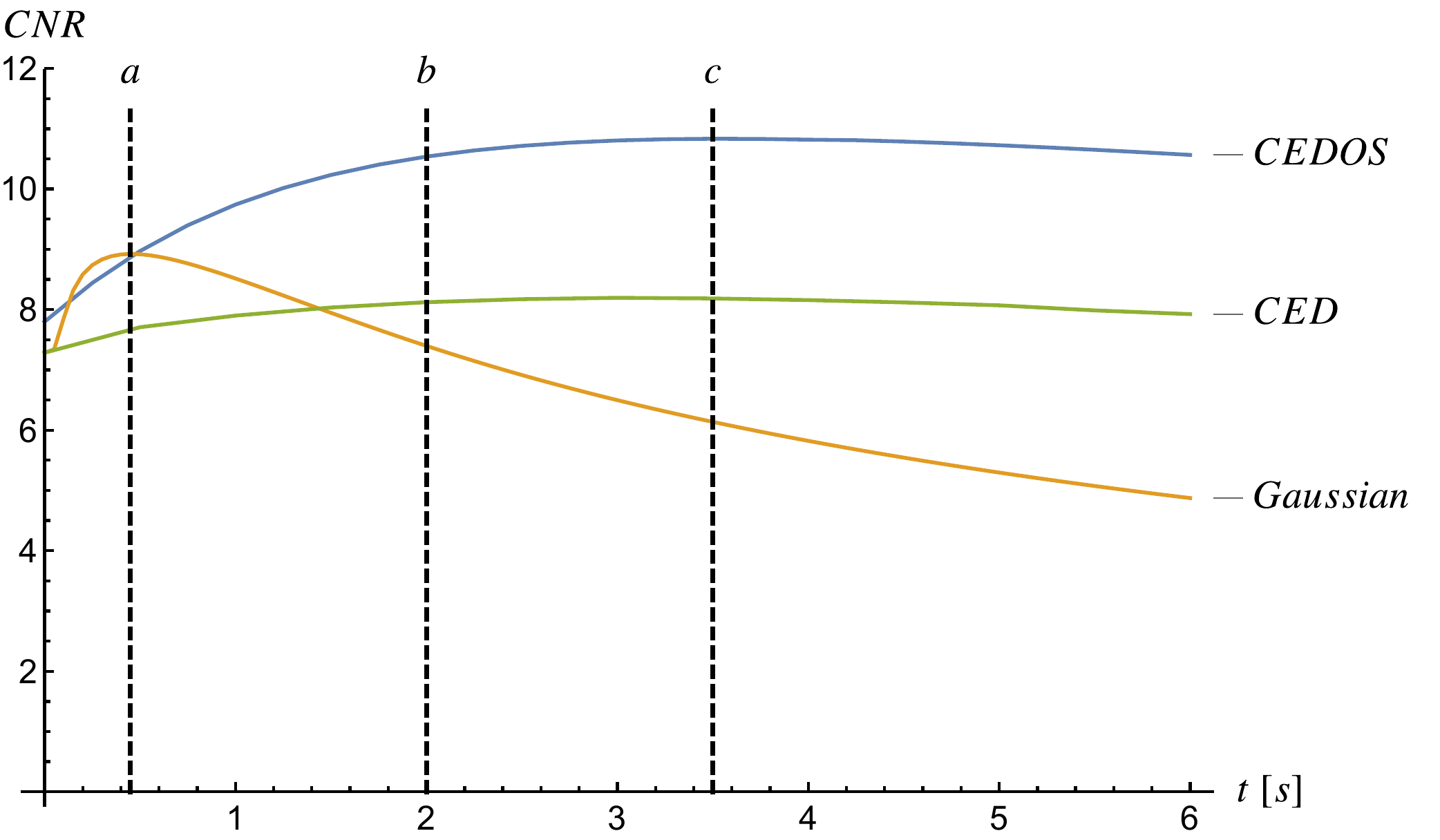}
	\caption{Contrast to noise ratio (CNR) against diffusion time for CEDOS compared to Gaussian regularization and CED of data B.1 depicted in Fig. \ref{fig:VesselAndBackgroundRegions}. The times denoted by a, b and c correspond to the diffusion times shown in Fig. \ref{fig:CEDOSvsGaussian}.}
	\label{fig:CEDOSvsGaussianCNR}
\end{figure}

\begin{figure*}[ht]
	\centering
	\includegraphics[width=0.67\textwidth]{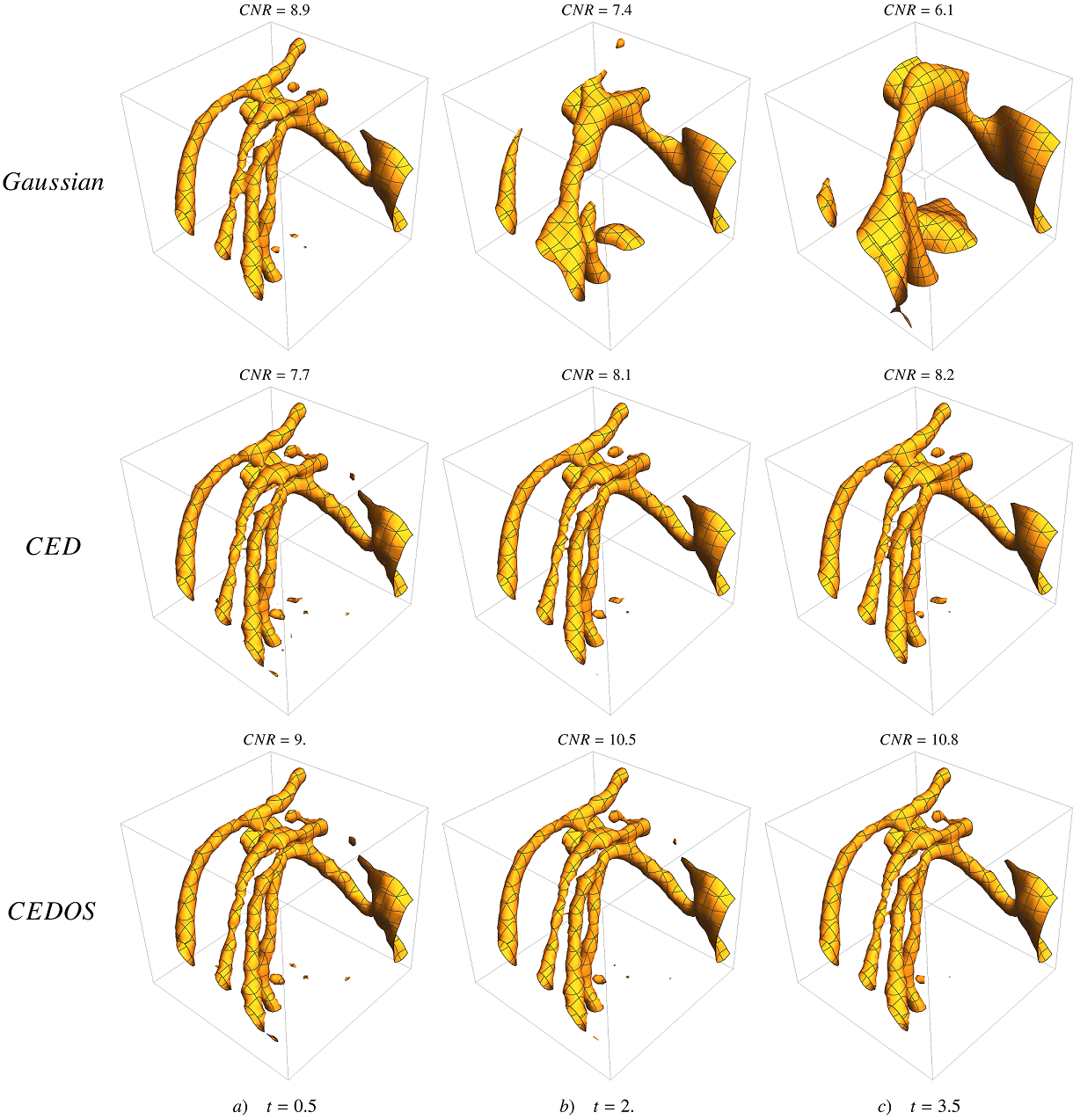}
	\caption{Results of coherence enhancing diffusion via orientation scores on dataset B.1, see Eq. \eqref{eq:CEDOS}. \emph{Top}: CEDOS for different amounts of diffusion time. \emph{Middle}: Result for CED. \emph{Bottom}: Result for isotropic Gaussian regularization. For all datasets, we show one iso-contour at $\mu_\textrm{BG}+ 0.7 (\mu_\textrm{FG}-\mu_\textrm{BG})$, where $\mu_\textrm{BG}$ and $\mu_\textrm{FG}$ are the mean of the background and foreground in the (processed) data determined using the selected regions used for determining the CNR. For a better impression of the full volume see Fig. \ref{fig:CEDOSvsGaussianPhilipsViewer}. We plot results for three different times according to Fig. \ref{fig:CEDOSvsGaussianCNR}, where case a) corresponds to optimal diffusion time for Gaussian regularization and case c) corresponds to optimal CEDOS which is also approximately equal to optimal CED time. We see that CEDOS preserves the complex vascular geometry.}
	\label{fig:CEDOSvsGaussian}
\end{figure*}


\begin{figure*}[p]
	\centering
	\includegraphics[width=0.66\textwidth]{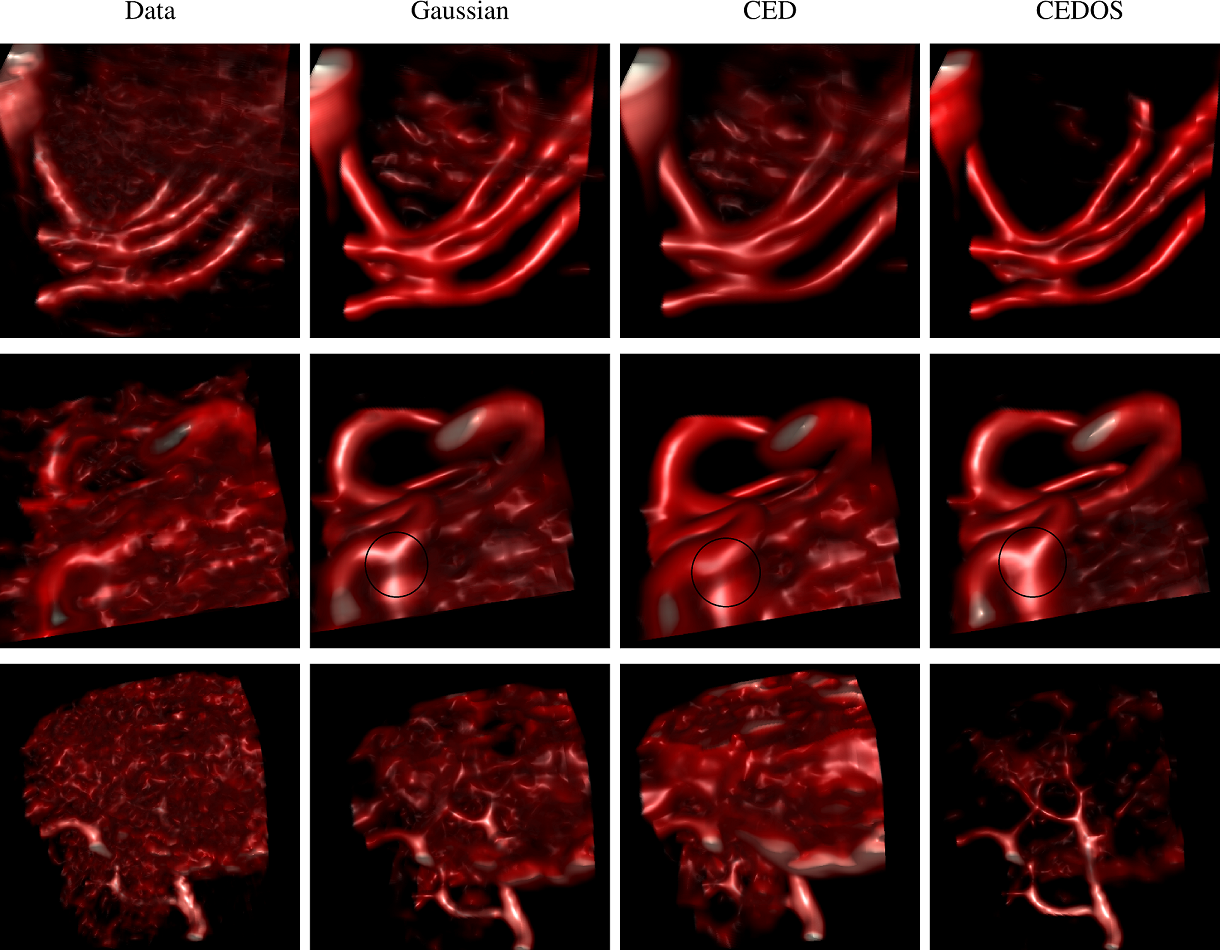}
	\caption{Volume rendering of the diffusion results for datasets B.1, B.2 and B.3 (from top to bottom) visualized in the Philips viewer \cite{Ruijters2006} using default settings in all cases. For all cases we used optimized diffusion time according to Fig. \ref{fig:CEDOSvsGaussianCNR}.}
	\label{fig:CEDOSvsGaussianPhilipsViewer}
\end{figure*}

\subsubsection{Influence of CEDOS on Vessel Edge Location}
In order to test the influence of our regularization method on vessel features we implemented a simple edge detection algorithm. We manually selected positions in the data and detect the edges in the vessel cross-sections by extracting radial profiles from the centerline outwards and looking for the minimum in first order gaussian derivative. Compared to Gaussian regularization, the vessel edge position is not influenced by our regularization method (Fig. \ref{fig:RadiiMeasurements}), which is highly important for applications which rely on accurate vessel lumen measurements, e.g. stent positioning and navigation of endovascular devices. The key explanation for this benefit is that at the vessel we get a very low $D_{11}$ (Eq. \ref{eq:DiffusionConstantD11}) and therefore we smooth only along the vessel.

\begin{figure*}[p]
	\centering
	\includegraphics[width=0.8\textwidth]{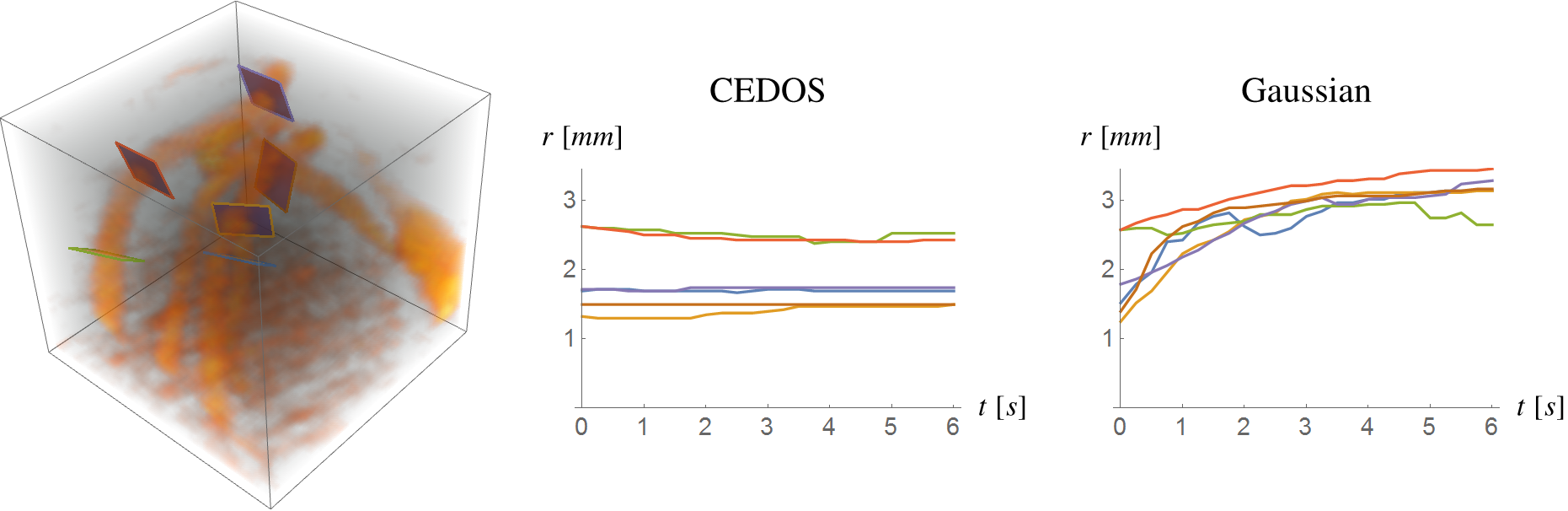}
	\includegraphics[width=0.35\textwidth]{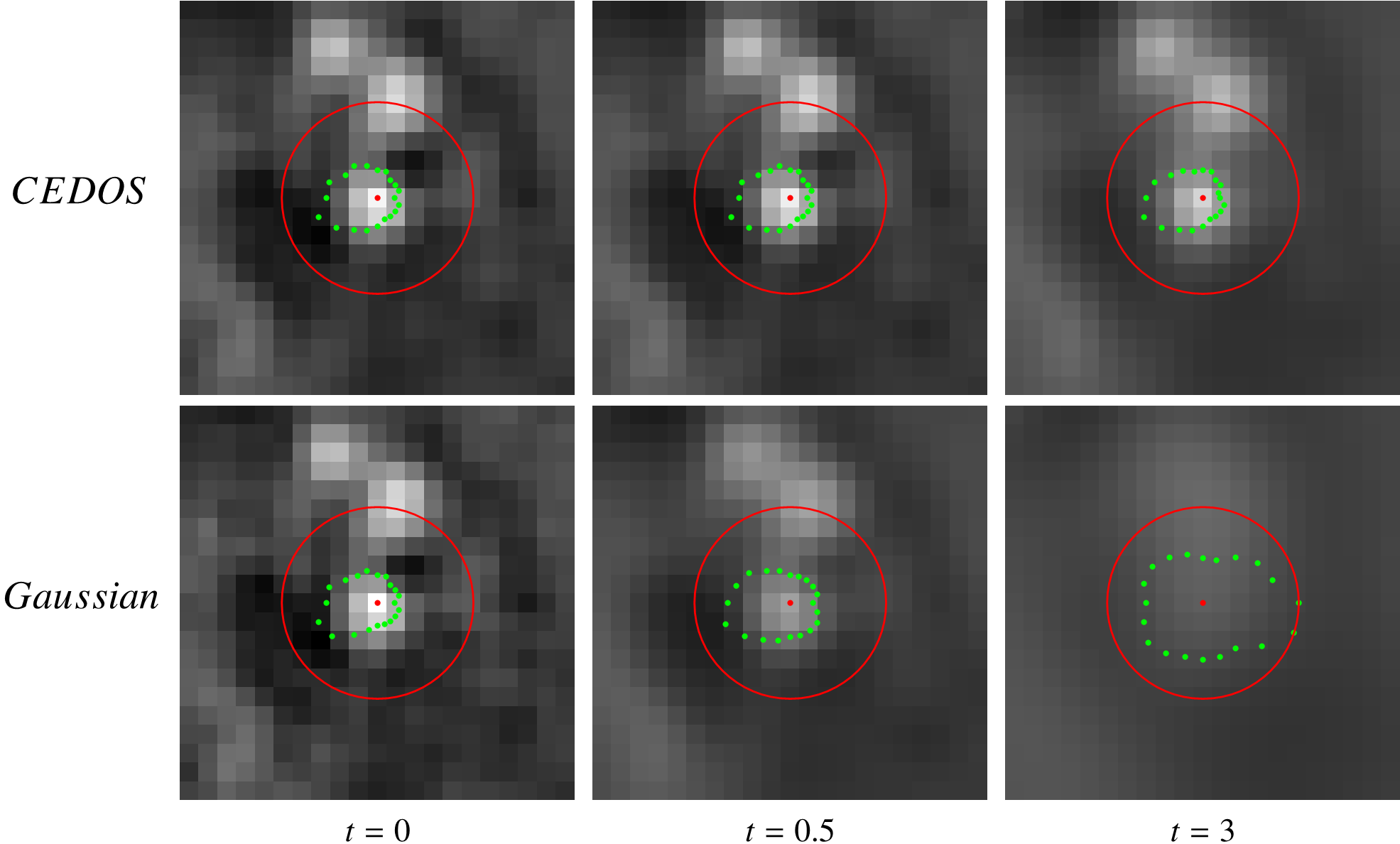}
	\caption{Measurement of vessel radius in vessel cross-sections after different amounts of diffusion in dataset B.1. \emph{Top left:} 3D Visualization of the data with the selected slices. \emph{Top middle:} Radii measurements for increasing diffusion time for CEDOS. \emph{Top right:} Radii measurements for increasing diffusion time for Gaussian regularization. The detected vessel width is not influenced by our regularization method while this does occur for Gaussian regularization. \emph{Bottom:} Cross-sections of one vessel for increasing diffusion time with detected vessel edge positions (green points) and search area for edge detection (red circle).}
	\label{fig:RadiiMeasurements}
\end{figure*}

\subsection{Tubularity Measure}\label{ssect:Tubularity}

\subsubsection{Background and Related Methods}
In this section we propose a tubularity measure based on the edge information in our orientation scores. This tubularity measure is then used for vessel-width measurements and could be used for vessel segmentation.

Tubularity measures are designed to have a high response on the centerline of tubular structures. One such tubularity measure is the image gradient flux filter \cite{BouixMedImageAnal2005} which is used for vessel segmentation. An extension of the gradient flux filter is the optimally oriented flux filter \cite{Law2008} which introduces the notion of oriented flux making the filter orientation sensitive.

A 2D tubularity measure based on orientation scores was proposed in \cite{ThesisBekkers}. The advantage of this tubularity measure is that it included non-linearity and that the implementation via orientation scores still has a response at crossing vessels. Here we propose an extension of this tubularity measure to 3D making use of 3D orientation scores.

\subsubsection{Tubularity via Orientation Scores}

For the tubularity measure we detect edges in the plane perpendicular to orientation $\vn$. Within this plane, the product of two opposite edges at radius $r$ and in-plane orientation $\theta$ at position $\vx\in\R^3$ is given by
\begin{multline} \label{tm}
	E_{\textrm{prod}}(\vx, \vn, r, \theta) = \Imagpos [U(\vx + r \vn^\bot(\theta), \vn^\bot(\theta))] \cdot \\   \Imagpos [U(\vx - r \vn^\bot(\theta), - \vn^\bot(\theta))],
\end{multline}
where $\Imagpos(z) = \max\{0,\Imag(z)\}$ and $\vn^\bot(\theta) = \cos \theta \,\ve_1 + \sin \theta \,\ve_2$, with $\{\ve_1,\ve_2\}$ an orthogonal basis for the orthogonal complement of $\langle \vn \rangle=\operatorname{span}\{\vn\}$.
The product of the two $\Imagpos$ edge responses in Eq.\!~(\ref{tm}) yields a better performance than taking the sum, as is done in \cite[Fig. 12.2]{ThesisBekkers} and \cite{Chen2014}.
The idea behind taking the product instead of the sum is that we need a high edge response in both directions.
See Fig. \ref{fig:TubularityExplanationSchematic} for a schematic visualization. Since for a real tube all edges should be present, and we do no want any response for e.g. plate structures, we take a minimum over the perpendicular orientations parametrized by $\theta$. To make the method more robust to deviations from exactly tubular structures, allowing slightly elliptical cross sections, we apply radial regularization and to make the method more robust to missing edge pieces we apply angular regularization before taking the minimum:
 \begin{multline}\label{eq:TubularityDefinition}
	V(\vx, \vn, r) = \min \limits_{\theta\in[0,2\pi)} \int \limits_{0}^\infty \int \limits_{0}^\pi K^\textrm{or}(\theta - \theta') \, K^\textrm{rad}(r, r') \cdot \\
	E_{\textrm{prod}}(\vx, \vn, r', \theta') \d \theta' \d r',
\end{multline}
where we deliberately do not add the Jacobian $(r' \d \theta' \d r')$ as higher radii should not gain a higher weight. For the kernel on orientation we use the diffusion kernel $K^\textrm{or} = G_{S^1}^{\sigma_{o}^{V}}$ and for $K^\textrm{rad}=K^\textrm{rad}_{\sigma_r}$ the radial regularization kernel given by
\begin{equation}\label{eq:TubularityRadialRegularizationKernel}
	K^\textrm{rad}_{\sigma_r}(r, r') = \frac{1}{\sqrt{2 \pi }  \sigma_r r'} e^{-\frac{\log ^2\left(\frac{r}{r'}\right)}{2 \sigma_r ^2} -\frac{\sigma_r ^2}{2}},
\end{equation}
which is normally used for temporal smoothing \cite{KoenderinkBiolCybern1988,LindebergJMIV2017}. We use this kernel since our radius has a similar one-sided domain $(0, \infty)$, requiring similar scaling relations. From the tubularity measure we extract the following features:
\begin{align}\label{eq:TubularityFeatures}
	s^t(\vx)	&= \max \limits_{\vn \in S^2, r \in \R^+} V(\vx, \vn, r), \\
	\vn^*(\vx)	&= \argmax \limits_{\vn \in S^2} \; \max \limits_{r \in \R^+} V(\vx, \vn, r), \\
	r^*(\vx)	&= \argmax \limits_{r \in \R^+} \; \max \limits_{\vn \in S^2} V(\vx, \vn, r).
\end{align}
Here $s^t(\vx)$ is the tubularity confidence which is a measure for how certain we are at least one tubular structure is present at position $\vx$. The features $\vn^*(\vx)$ and $r^*(\vx)$ are the orientation and radius of optimal tubularity response at position $\vx$.

\subsubsection{Results on Artificial Data}
For the validation of our tubularity measure we constructed 18 artificial datasets with a random tubular structure with randomly varying radius (Fig. \ref{fig:dataOverview} C.). For the tubularity measure we used the following settings: $\sigma_{o}^{V} = \pi/8,\sigma_r = 0.3$ and we discretized the $\theta$-integral in Eq. \eqref{eq:TubularityDefinition} using 8 orientations and radius $r$ from $1$ to $10$ pixels in steps of $0.5$ pixels.

As validation we compared the optimal radius to the ground-truth radius and inspect the tubularity confidence. The tubularity confidence is selective on the vessel centerline and the found optimal radius is a good estimation of the ground-truth radius, see Fig. \ref{fig:TubularityResultRadiusArtificialData} and Fig. \ref{fig:TubularityResultRadiusArtificialDataAll}.

\subsubsection{Results on 3D Rotational Angiography data of the Brain in Patients with Arteriovenous Malformation (AVM)}
We applied the tubularity measure to 3D Rotational Angiography of the Brain in Patients with Arteriovenous Malformation (Fig. \ref{fig:dataOverview} A.). The data was acquired using a Philips Allura Xper FC20 system, using a 3D Rotational Angiography backprojection algorithm (3DRA) to generate the final volumetric image. For the tubularity measure we used the following settings: $\sigma_{o}^{V} = \pi/8,\sigma_r = 0.3$ and we discretized the $\theta$-integral in Eq. \eqref{eq:TubularityDefinition} using 12 orientations and radius $r$ from $0.3$ to $5$ pixels in steps of $0.3$ pixels.
For the orientation score transformation we used $s_o=\frac{1}{2}(0.4)^2, s_\rho = \frac{1}{2}(5)^2$ and evaluated on a grid of $21 \times 21 \times 21$ pixels and default settings for the other parameters.

In Fig. \ref{fig:TubularityResultRealData} we show our tubularity measure for this medical data. The tubularity measure gives sharp responses for vessel centerlines. We also show optimal orientation $\vn^*(\vx)$ and a simple segmentation given by the 0-isocontour of the distance map $d(\vx,\cup B_{\vc_i,r^*(\vc_i)})=\min_{\vc_i} \left\{ \| \vx-\vc_i \|-r^*(\vc_i) \right\}$, where $\vc_i$ are the positions given by the $1\%$ quantile of highest responses.

\begin{figure*}[hp!]
	\centering
	\includegraphics[width=0.3 \textwidth]{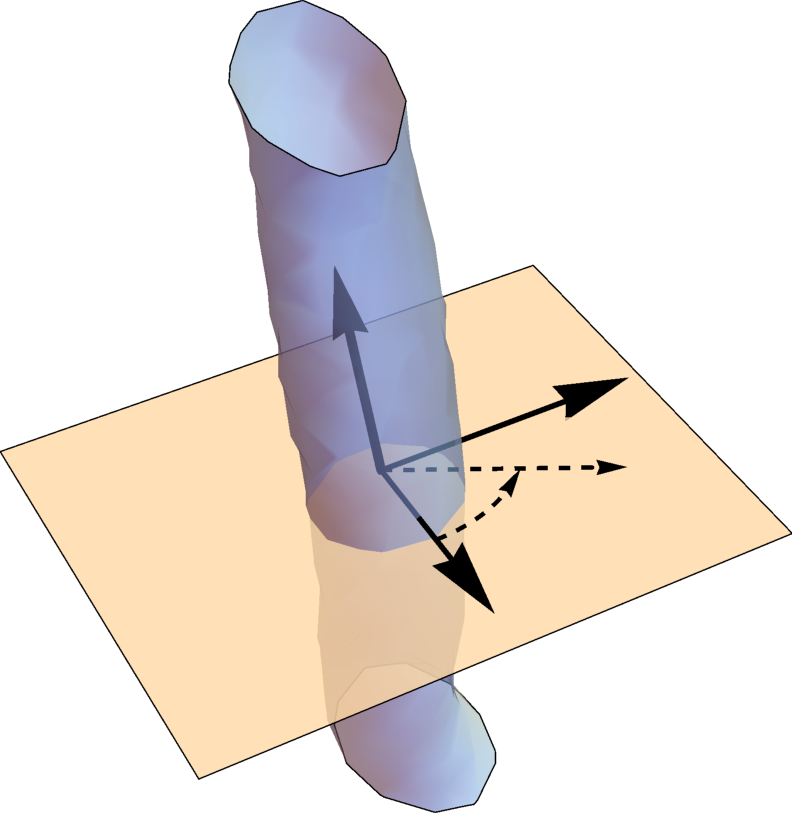}
	\includegraphics[width=0.4 \textwidth]{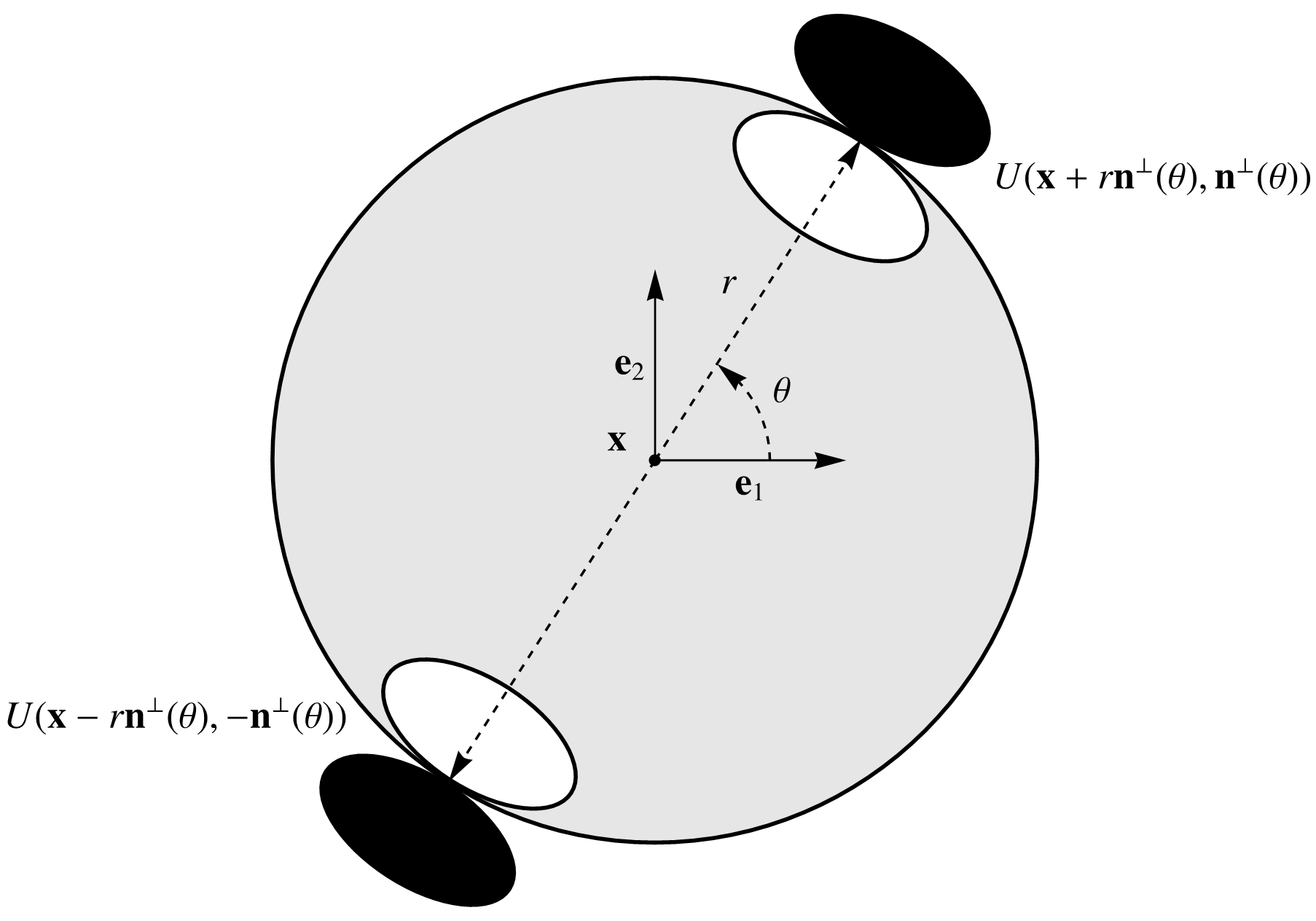}
	\caption{Schematic visualization of the edges used in the tubularity measure $V(\vx, \vn, r)$. \emph{Left:} A 3D iso-contour visualization of a vessel with orientation $\vn$. The coordinates $(\theta,r)$ are polar coordinates for the plane perpendicular to orientation $\vn$ spanned by $\ve_1$ and $\ve_2$. \emph{Right:} In this plane opposite edges are multiplied in $E_{\textrm{prod}}(\vx, \vn, r, \theta)$. The edge is expected to have outwards orientation $\vn^\perp (\theta)$.}
	\label{fig:TubularityExplanationSchematic}
\end{figure*}

\begin{figure*}[hp!]
	\centering
	\includegraphics[width=0.7 \textwidth]{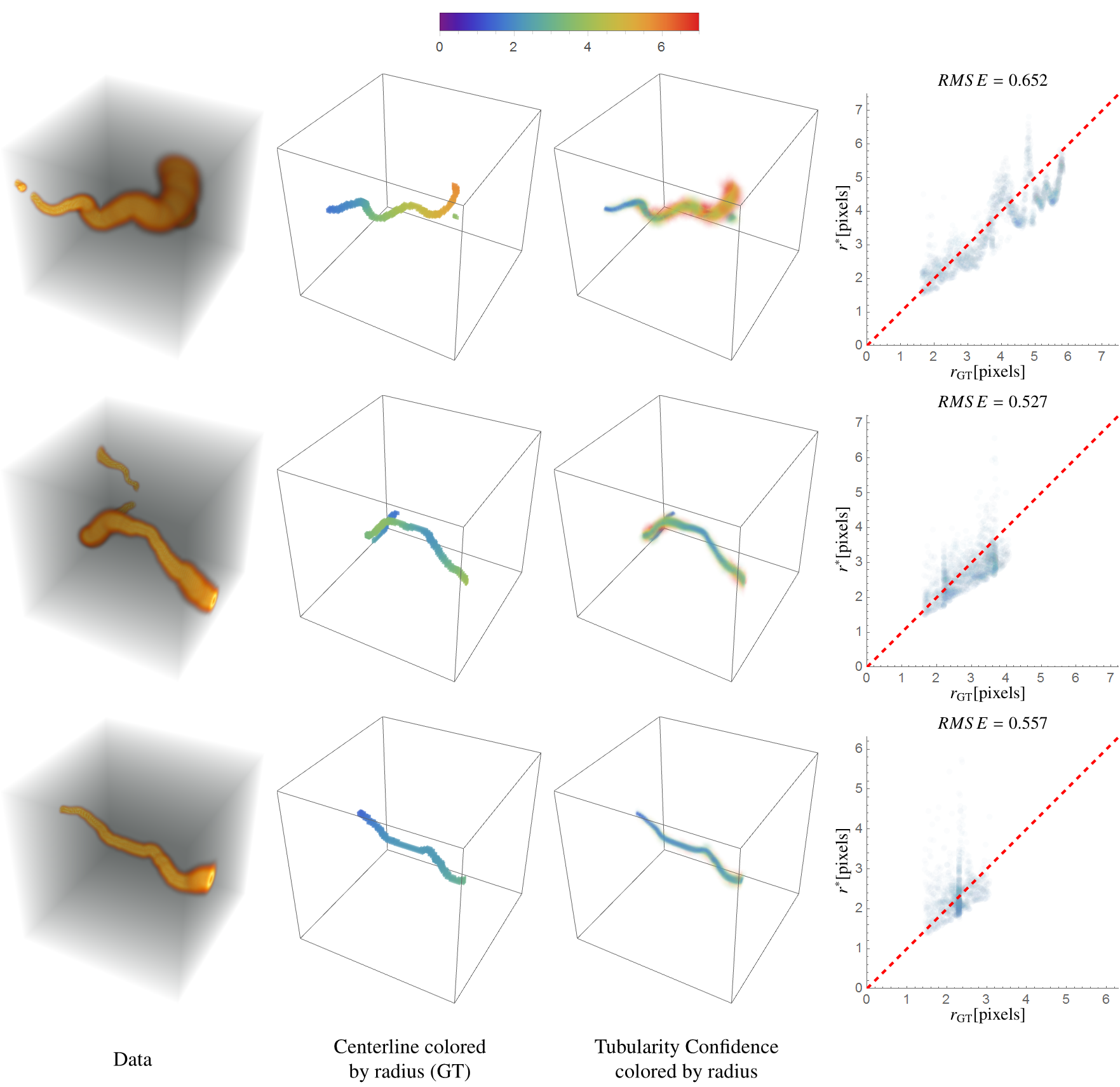}
	\caption{Tubularity on artificial datasets and comparison of measured radius against ground truth radius for datasets C.1, C.2 and C.3 (top to bottom). \emph{Left:} The data. \emph{Middle left:} The centerline with ground truth radius in color. \emph{Middle right:} The tubularity confidence $s^t(\vx)$  (max of tubularity over radius and orientation) with radius of max response $r^*(\vx)$ in color. \emph{Right:} Measured radius $r^*(\vx)$ against ground truth radius $r_\textrm{GT}$ on the ground truth centerline. The opacity of the plotted points is linearly scaled with the tubularity confidence.}
	\label{fig:TubularityResultRadiusArtificialData}
\end{figure*}

\begin{figure}[hb]
	\centering
	\includegraphics[width=0.7 \columnwidth]{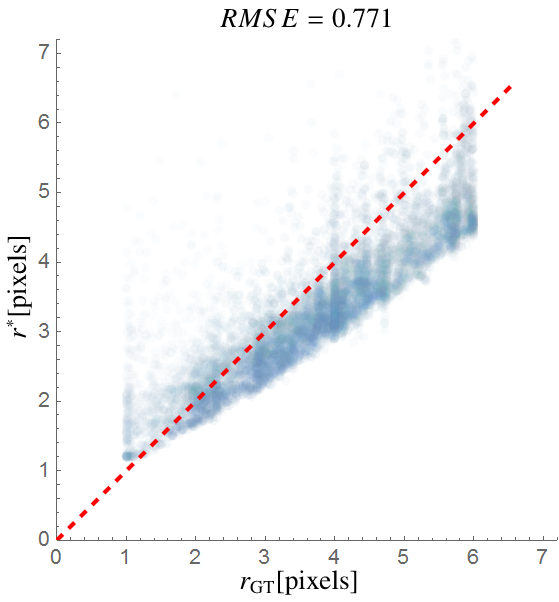}
	\caption{Measured radius $r^*(\vx)$ against ground truth radius on the ground truth centerline for all 18 datasets. The opacity of the plotted points is linearly scaled with the tubularity confidence.}
	\label{fig:TubularityResultRadiusArtificialDataAll}
\end{figure}

\begin{figure*}[htb!]
	\centering
	\includegraphics[width=0.8 \textwidth]{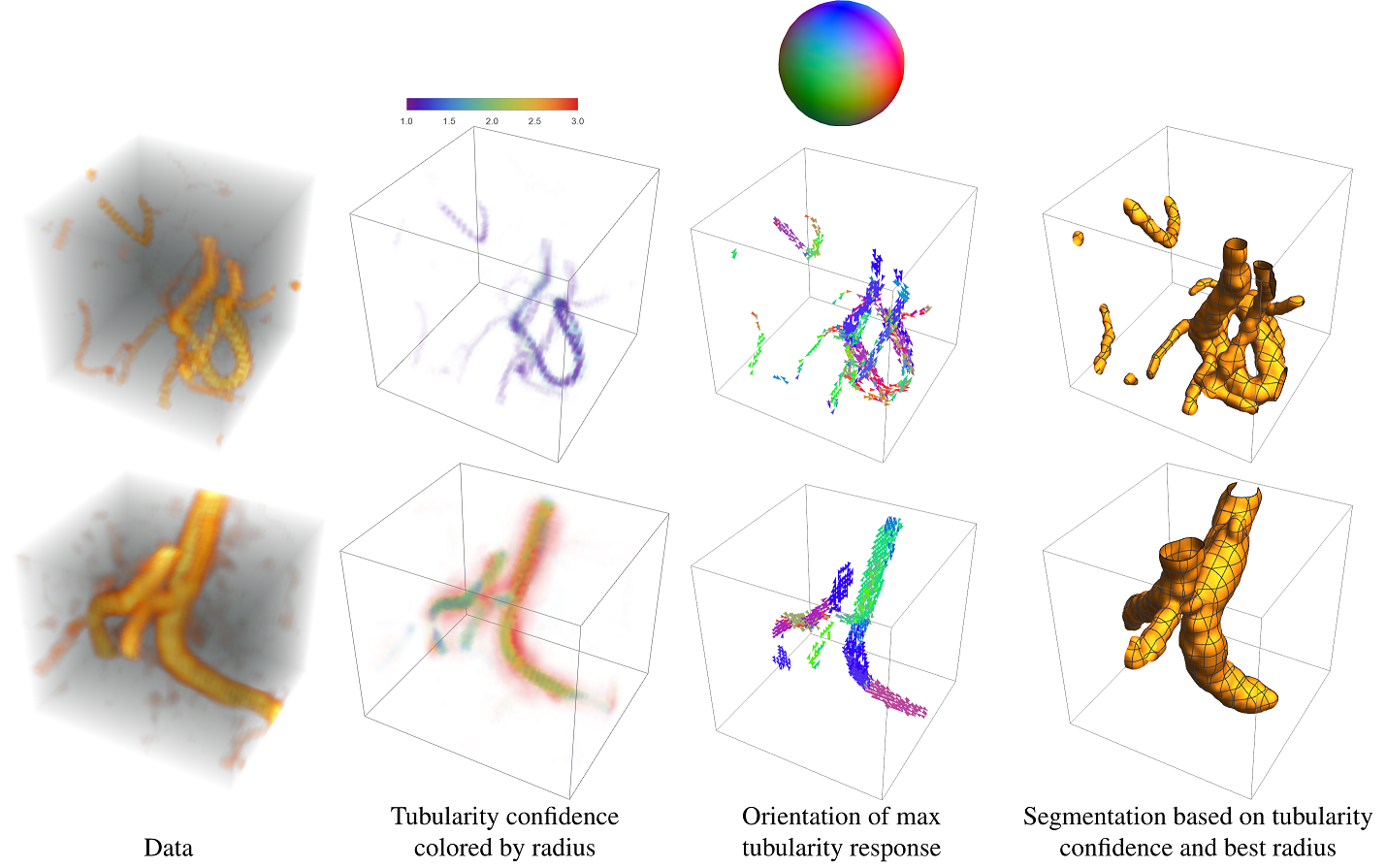}
	\caption{Tubularity on real data. \emph{Top:} Dataset A.1. \emph{Bottom:} Dataset A.2. \emph{Left:} The data. \emph{Middle left:} The projected tubularity (max over radius and orientation) colored by radius. \emph{Middle right:} Orientation of max response for $1\%$ quantile of highest responses in the tubularity confidence. \emph{Right:} Segmentation based on radius of max response for $1\%$ quantile of highest responses in the tubularity confidence. The plotted surface is the 0-isocontour of the distance map $d(\vx,\cup B_{\vc_i,r^*(\vc_i)})=\min_{\vc_i} \left\{ \| \vx-\vc_i \|-r^*(\vc_i) \right\}$, where $\vc_i$ are the positions given by the $1\%$ quantile of highest responses.}
	\label{fig:TubularityResultRealData}
\end{figure*}

\section{Conclusion}
We presented theory and filters for the 3D orientation score transformation which is valuable in handling complex oriented structures in 3D image data. Then we showed applications of this transformation.

First we proposed filters for a 3D orientation score transform. We presented two types of filters, the first uses a discrete Fourier transform, the second is designed in the 3D generalized Zernike basis which allowed us to find analytical expressions for the spatial filters. Both filters allowed for an invertible transformation. The filters and the quality of the reconstruction were assessed in Section \ref{sect:Experiments}, where we showed that the discrete filters approximate their analytical counterparts well. We also verified the invertibility of our transformation by showing data reconstructions of real medical data.

The orientation score transform was then used in two different applications. In the first we presented an extension of coherence enhancing diffusion via 3D orientation scores which we applied to real 3D medical data and showed our method effectively reduced noise while maintaining the complex vessel geometry. In the second application we propose a new non-linear tubularity measure via 3D orientation scores. The tubularity measure has sharp responses for vessel centerlines and we showed its use in radius extraction and complex vessel segmentation.

In this work basic applications of the tubularity measure are shown. Future work would include using the tubularity measure in more advanced vessel segmentation procedures. Furthermore many other applications exist for the 3D orientation scores and many techniques developed for 2D orientation scores can now be extended to 3D. First steps are presented in this paper, where the extension of 2D CEDOS and a 2D tubularity measure via 2D orientation scores are given.

\section*{Acknowledgements}

We would like to thank professors Jacques Moret and Laurent Spelle from the NEURI center, Department of NRI, Bic\^{e}tre University hospital Paris, France for providing the AVM dataset used in this article.

The research leading to the results of this article has received funding from the European Research Council (ERC) under the European Community’s 7th Framework Programme (FP7/20072014)/ERC grant agreement No. 335555 (Lie Analysis).

\appendix
\section{Steerable Orientation Score Transform} \label{app:steerable}
In this article we often rely on spherical harmonic decomposition of the angular part of proper wavelets in spatial and Fourier domain. As the choice of radial basis varies in Sec.~\ref{sect:WaveletZernike}, and since it does not affect steerable filter \cite{FreemanPAMI1991,ThesisFranken,ThesisReisert,ThesisAlmsick} properties, we simply write $\{g_{n}^{l}(r)\}_{n=0}^{\infty}$ for a radial orthonormal basis for $\mathbb{L}_{2}(\mathbb{R}^+, r^2 {\rm d}r)$ associated to each $2l+1$-dimensional $SO(3)$-irreducible subspace 
indexed by $l$. Then by the celebrated Bochner-Hecke theorem \cite{Faraut1987} this induces a corresponding orthonormal radial basis $\{\tilde{g}_n^l(\rho)\}_{n=0}^{\infty}$ in the Fourier domain which can be obtained by a Hankel-type of transform \cite[Ch.~7]{ThesisDuits}. We expand our wavelets in spherical harmonics $Y_{l}^{m}$ and ball-coordinates (cf. (\ref{eq:ballcoordinates})) accordingly:
\begin{equation}
\begin{array}{l}
\psi(\vx)= \sum \limits_{n=0}^{\infty} \alpha_{l}^{n} \, g^{l}_n(r)\; Y^{0}_{l}(\theta,\phi), \\
\hat{\psi}(\vomega)= \sum \limits_{n=0}^{\infty} \alpha_{l}^{n} \, \tilde{g}^{l}_n(\rho)\; Y^{0}_{l}(\vartheta,\varphi).
\end{array}
\end{equation}
\begin{remark}
In Section~\ref{sect:WaveletZernike}, we consider the modified Zernike basis in which case $g^{l}_n$ and $\tilde{g}^{l}_n$ are given by respectively by (\ref{eq:GeneralizedZernikeFunctionsOrthogonality}) and (\ref{eq:GeneralizedZernikeFunctions}), whereas for the harmonic oscillator basis one has $g^{l}_{n}= i^l (-1)^{n+l} \tilde{g}^{l}_n$ given respectively by (\ref{eq:WaveletHarmonicOscillator}) and (\ref{eq:WaveletHarmonicOscillator2}).
\end{remark}
We obtain steerability via finite series truncation at $n=N$ and $l=L$.
Then we rotate the steerable kernels via the Wigner-D functions $D^{l}_{0,m}(\gamma,\beta,0) \in \R$ (cf. Subsection~\ref{sssect:rotate}) and one obtains the following steerable
implementations of orientation scores:
{\small
\begin{equation}
\begin{array}{l}
U(\vx,\vn)= \\ \sum \limits_{n=0}^{N} \sum \limits_{l=0}^{L} \sum \limits_{m=\!-l}^l \overline{ \alpha_{l}^{n} \, D^{l}_{0,m}(\gamma,\beta,0) } \cdot
( (g_{n}^l \otimes Y_{l}^{m}) \star f)(\vx)= \\
\sum \limits_{n=0}^{N} \sum \limits_{l=0}^{L} \sum \limits_{m=-\!l}^l \overline{ \alpha_{l}^{n} \, D^{l}_{0,m}(\gamma,\beta,0) } \cdot
\mathcal{F}^{-1}\left[\overline{ \tilde{g}_{n}^l \otimes Y_{l}^{m} } \cdot  \hat{f}\right](\vx)
\end{array}
\end{equation}
}%
where $\vn=(\cos \gamma \sin \beta, \sin \gamma \sin \beta, \cos \beta)^T$, $\star$ denotes correlation, the overline denotes complex conjugation and $\otimes$ the function product, i.e., $(\tilde{g}_{n}^l \otimes Y_{l}^{m})(\vx) = \tilde{g}_{n}^l(r)\; Y_{l}^{m}(\theta,\phi)$.

\begin{table*}
\renewcommand{\arraystretch}{1.3}
\footnotesize
\section{Table of Notations \label{app:TableOfNotations}}
\begin{tabular}{|p{6.8cm}|p{7cm}|r|}
\hline
\textbf{Symbol}
& \textbf{Explanation}
& \textbf{Reference}
\\ \hline \hline
\multicolumn{3}{c}{} \vspace{-3mm} \\
\multicolumn{3}{c}{B.1 Spaces and Input Data}
\\ \hline
$\mathbb{R}^{3} \times S^{2}$
& Space of positions and orientations
& Page 1
\\ \hline
$\LL_2^{\varrho} (\R^3)=\{f \in \LL_2 (\R^3)| \textrm{supp}(\cF f) \subset B_{\varrho}\}$, with $\varrho>0$
& Space of frequency ball-limited images
& \eqref{eq:ballLimitedData}
\\ \hline
$B_{\rho}=\{\vx \in \R^3 \big| \| \vx \| < \rho \}$, with $\varrho>0$
& The ball of radius $\rho$
& \eqref{eq:ballLimitedData}
\\ \hline
\multicolumn{3}{c}{} \vspace{-3mm} \\
\multicolumn{3}{c}{B.2 Orientation Score Transformation}
\\ \hline
$\cW_\psi$
& Orientation score transformation
& \eqref{eq:Construction1}
\\ \hline
$\cW_\psi^{-1}$
& Data reconstruction via the inverse orientation score transformation
& \eqref{eq:Reconstruction1}
\\ \hline
$\psi$
& Wavelet used for the orientation score transformation
& \eqref{eq:Construction1}
\\ \hline
$\hat{\psi}$
& Fourier Transform of the wavelet used for the orientation score transformation
& \eqref{eq:Mpsi}
\\ \hline
$M_{\psi}$
& Factor used to quantify the stability of the transformation
& \eqref{eq:Mpsi}
\\ \hline
$N_{\psi}$
& Factor used to quantify the stability of the transformation when using the simplified reconstruction by integration
& \eqref{eq:Npsi}
\\ \hline
$\cW_\psi^d, (\cW_\psi^d)^{-1}, M_{\psi}^d, N_{\psi}^d$
& Discretized versions of $\cW_\psi, (\cW_\psi)^{-1}, M_{\psi}, N_{{}\psi}$
& \eqref{eq:construction1Discrete},\eqref{eq:Reconstruction1Discrete},\eqref{eq:MpsiDiscrete},\eqref{eq:NpsiDiscrete}
\\ \hline
$\d \sigma$ and $\Delta_i$
& Spherical area measure and discretized spherical area measure
& \eqref{eq:Reconstruction1},\eqref{eq:Reconstruction1Discrete}
\\ \hline
\multicolumn{3}{c}{} \vspace{-3mm} \\
\multicolumn{3}{c}{B.3 Coordinates}
\\ \hline
$\vx=(x,y,z)$
& Cartesian coordinates real space
& -
\\ \hline
$\vomega=(\omega_x,\omega_y,\omega_z)$
& Cartesian coordinates Fourier domain
& -
\\ \hline
$(r,\theta,\phi), \quad  \vx= (r \sin \theta \cos \phi,r \sin \theta \sin \phi,r \cos \theta)$
& Spherical coordinates real space
& \eqref{eq:CoordinatesFourierSpace}
\\ \hline
$(\rho,\vartheta,\varphi),\quad \vomega = (\rho \sin \vartheta \cos \varphi,\rho \sin \vartheta \sin \varphi,\rho \cos \vartheta)$
& Spherical coordinates Fourier domain
& \eqref{eq:CoordinatesFourierSpace}
\\ \hline
\multicolumn{3}{c}{} \vspace{-3mm} \\
\multicolumn{3}{c}{B.4 Wavelets}
\\ \hline
$g$
& Radial function of the cake filters
& \eqref{eq:radialFunctionOfPsi}
\\ \hline
$A$
& Orientation distribution used in wavelet construction
& \eqref{eq:orientationDistribution}
\\ \hline
$Y_l^m$
& Spherical Harmonics
& \eqref{eq:definitionSphericalHarmonics}
\\ \hline
$Z_{n,l}^{m,\alpha}$
& 3D Generalized Zernike Functions
& \eqref{eq:GeneralizedZernikeFunctions}
\\ \hline
$R_n^{l,\alpha}$
& Radial part of the 3D Generalized Zernike Functions
& \eqref{eq:RadialPartGeneralizedZernikeFunctions}
\\ \hline
$S_{n,l}^{\alpha}$
& Radial part of the inverse Fourier transformed 3D Generalized Zernike Functions
& \eqref{eq:radialPartZernikeSpatialDomain}
\\ \hline
\multicolumn{3}{c}{} \vspace{-3mm} \\
\multicolumn{3}{c}{B.5 Applications}
\\ \hline
$V(\vx, \vn, r)$
& Tubularity measure
& \eqref{eq:TubularityDefinition}
\\ \hline
$s^t(\vx)$
& Tubularity confidence
& \eqref{eq:TubularityFeatures}
\\ \hline
$\vn^*(\vx)$
& Orientation of maximum tubularity response
& \eqref{eq:TubularityFeatures}
\\ \hline
$r^*(\vx)$
& Radius of maximum tubularity response
& \eqref{eq:TubularityFeatures}
\\ \hline

\end{tabular}
\end{table*}

\bibliography{Library}{}

\bibliographystyle{spmpsciSimple}
\end{document}